\documentclass[twoside,11pt]{article}

\makeatletter
\newcommand{\NAT@parse}[1]{}
\newcommand{\NAT@force@numbers}{}
\newcommand{\bibpunct}[6]{}
\@namedef{ver@natbib.sty}{2999/99/99} %
\makeatother

\let\originalbibliographystyle\bibliographystyle
\renewcommand{\bibliographystyle}[1]{}
\usepackage{jmlr2e}
\let\bibliographystyle\originalbibliographystyle
\bibliographystyle{alpha}

\usepackage{amsmath}
\usepackage{amssymb}
\usepackage{bm}
\usepackage{booktabs}
\usepackage[font=footnotesize,labelfont=bf]{caption}
\usepackage[capitalize,nameinlink]{cleveref}
\usepackage{color}
\usepackage{dsfont}
\usepackage[T1]{fontenc}
\usepackage{mathabx}
\usepackage{mathtools}
\usepackage{mathrsfs}
\usepackage{microtype}
\usepackage{multirow}
\usepackage{pifont}
\usepackage{rotating}
\usepackage[group-separator={,},group-minimum-digits={3}]{siunitx}
\usepackage{subcaption}
\usepackage{tikz}
\usepackage{thm-restate}
\usepackage[normalem]{ulem}
\usepackage{wrapfig}

\usepackage{pgfplots}
\pgfplotsset{compat=1.18}

\hypersetup{
  hidelinks
}

\crefformat{section}{\S#2#1#3}
\crefname{definition}{Definition}{Definition}
\crefname{proposition}{Proposition}{Propositions}
\crefname{lemma}{Lemma}{Lemmas}
\crefrangeformat{theorem}{Theorems~#3#1#4{--}#5#2#6}
\crefrangeformat{proposition}{Propositions~#3#1#4{--}#5#2#6}
\crefrangeformat{lemma}{Lemmas~#3#1#4{--}#5#2#6}

\renewcommand{\tilde}{\widetilde}
\renewcommand{\hat}{\widehat}
\renewcommand{\check}{\widecheck}

\newcommand{\Bcal}{\mathcal{B}}

\newcommand{\Dcal}{\mathcal{D}}

\newcommand{\Hcal}{\mathcal{H}}
\newcommand{\Ical}{\mathcal{I}}
\newcommand{\Jcal}{\mathcal{J}}
\newcommand{\Kcal}{\mathcal{K}}

\newcommand{\Mcal}{\mathcal{M}}

\newcommand{\Ucal}{\mathcal{U}}

\newcommand{\Xcal}{\mathcal{X}}
\newcommand{\Ycal}{\mathcal{Y}}

\newcommand{\Lbb}{\mathbb{L}}

\newcommand{\Nbb}{\mathbb{N}}

\newcommand{\Rbb}{\mathbb{R}}

\newcommand{\Zbb}{\mathbb{Z}}

\newcommand{\Cbf}{\mathbf{C}}

\newcommand{\Ibf}{\mathbf{I}}

\newcommand{\Lbf}{\mathbf{L}}

\newcommand{\Obf}{\mathbf{O}}

\newcommand{\ebf}{\mathbf{e}}

\newcommand{\qbf}{\mathbf{q}}

\newcommand{\ubf}{\mathbf{u}}
\newcommand{\vbf}{\mathbf{v}}
\newcommand{\wbf}{\mathbf{w}}
\newcommand{\xbf}{\mathbf{x}}

\newcommand{\zbf}{\mathbf{z}}

\newcommand{\etabf}{\bm{\eta}}

\newcommand{\lambdabf}{\bm{\lambda}}

\newcommand{\xibf}{\bm{\xi}}

\newcommand{\onebf}{\mathbf{1}}
\newcommand{\ellbf}{\bm{\ell}}

\newcommand{\defeq}{\coloneqq}
\newcommand{\eqdef}{\eqqcolon}
\renewcommand{\subset}{\subseteq}

\DeclareMathOperator{\domain}{\mathrm{dom}}

\DeclareMathOperator*{\argmax}{arg\,max}
\DeclareMathOperator*{\argmin}{arg\,min}

\DeclareMathOperator{\supp}{\mathrm{supp}}

\newcommand{\indicator}[1]{\mathds{1}_{\{#1\}}}
\newcommand{\set}[1]{\left\lbrace{#1}\right\rbrace}
\newcommand{\setcomp}[2]{\left\lbrace{#1} \relmiddle| {#2}\right\rbrace}
\newcommand{\inpr}[2]{\left\langle{#1},{#2}\right\rangle}
\newcommand{\relmiddle}[1]{\mathrel{}\middle#1\mathrel{}}

\newcommand{\pdiff}[2]{\frac{\partial{#1}}{\partial{#2}}}

\newcommand{\Hess}{\mathop{\mathrm{Hess}}}
\newcommand{\rd}{\mathrm{d}}

\renewcommand{\epsilon}{\varepsilon}

\newcommand{\simplex}{\triangle^N}
\newcommand{\vecarr}[2]{{\begin{bsmallmatrix*}{#1}\\{#2}\end{bsmallmatrix*}}}

\newcommand{\openbox}{\leavevmode
  \hbox to.77778em{%
  \hfil\vrule
  \vbox to.675em{\hrule width.6em\vfil\hrule}%
  \vrule\hfil}}
\renewenvironment{proof}{\par\noindent{\bf Proof\ }}{\hfill$\openbox$}

\usepackage{lastpage}
\jmlrheading{26}{2025}{1-\pageref{LastPage}}{7/24; Revised 10/25}{12/25}{24-1112}{Han Bao and Asuka Takatsu}

\ShortHeadings{Proper losses regret at least $1/2$-order}{Bao and Takatsu}
\firstpageno{1}

\begin{document}

\title{Proper losses regret at least $1/2$-order}

\author{\name Han Bao \email bao.han@ism.ac.jp  \\
       \addr The Institute of Statistical Mathematics
       \AND
       \name Asuka Takatsu \email asuka-takatsu@g.ecc.u-tokyo.ac.jp \\
       \addr The University of Tokyo
       }

\editor{Mehryar Mohri}

\maketitle

\begin{abstract}
  A fundamental challenge in machine learning is the choice of a loss as it characterizes our learning task, is minimized in the training phase, and serves as an evaluation criterion for estimators.
  Proper losses are commonly chosen, ensuring minimizers of the full risk match the true probability vector.
  Estimators induced from a proper loss are widely used to construct forecasters for downstream tasks such as classification and ranking.
  In this procedure, how does the forecaster based on the obtained estimator perform well under a given downstream task?
  This question is substantially relevant to the behavior of the $p$-norm between true probability and estimated vectors when the estimator is updated.
  In the proper loss framework, the suboptimality of the estimated probability vector from the true probability vector is measured by a surrogate regret.
  First, we analyze a surrogate regret and show that the \emph{strict} properness of a loss is necessary and sufficient to establish a non-vacuous surrogate regret bound.
  Second, we tackle an important open question that the order of convergence in $p$-norm cannot be faster than the $1/2$-order of surrogate regrets for a broad class of strictly proper losses.
  This implies that strongly proper losses asymptotically achieve the optimal convergence rate.
\end{abstract}

\begin{keywords}
  loss functions, proper scoring rules, supervised learning, surrogate regret bounds, convex analysis
\end{keywords}

\section{Introduction}
\label{section:introduction}
\emph{Proper losses}, also known as proper scoring rules, are measurements of the quality of a probabilistic prediction given a true probability vector \cite{Buja:2005,Gneiting:2007,Reid:2010}.
Intuitively, we say a loss is \emph{proper} if the target probability vector is its minimizer, and \emph{strictly proper} if the minimizer is unique, which are a basic property for a reasonable loss.
Proper losses are prevailing in modern machine learning: for example, the cross-entropy loss popular in deep learning essentially corresponds to the log loss (or logarithmic score), and the Brier score is used for assessing model uncertainties \cite{Ovadia:2019} \cite{Gruber2022NeurIPS}.
As such, probabilistic estimators are obtained via proper loss minimization.
It is common to post-process a minimizer of a proper loss for downstream tasks,
such as classification (by choosing the most likely label), ranking (by giving ranking scores to each label \cite{Narasimhan:2013}), F-measure optimization (by thresholding the estimated probability \cite{Koyejo:2014}), and probability calibration \cite{Kull:2017,Blasiok:2023}.
Here, we are interested in the predictive performance of post-processed estimators in downstream tasks.
Given the true and estimated probability vectors $\qbf$ and $\hat\qbf$, respectively, the \emph{surrogate regret} $R(\qbf, \hat\qbf)$ (introduced in \cref{section:proper_losses}) measures the suboptimality of $\hat\qbf$ from $\qbf$ in terms of a proper loss.
Can we relate the suboptimality of a forecaster for a downstream task to the surrogate regret?

Surrogate regret bounds relate the surrogate regret to the performance for downstream tasks, and have been derived for binary classification \cite{Zhang:2004,Reid:2009:ICML}, bipartite ranking \cite{Kotlowski2011ICML,Agarwal:2014}, property elicitation \cite{Agarwal:2015}, F-measure optimization \cite{Kotlowski:2016,Zhang:2020}, and learning with noisy labels \cite{Natarajan2013NeurIPS,Zhang:2021}, independently.
Recently, a unified surrogate regret bound across different downstream tasks has been established \cite{Bao:2023}, where surrogate regret bounds are unified in terms of the $1$-norm.
This is based on the observation that the suboptimality of the aforementioned downstream tasks can be controlled by the $1$-norm.
However, the derived bound has been limited to the binary classification case, and it remains unclear when the surrogate regret bound is non-vacuous.
A reasonable loss should entail a non-vacuous regret bound, which is crucial to tackling numerous downstream tasks simultaneously.
Moreover, an important conjecture that the convergence rate of surrogate regret bounds cannot be faster than the $1/2$-order has yet to be solved.
This conjecture has a significant role in the choice of losses because the lower bound of the order of convergence contributes to delineating the optimality of a given proper loss.

In this article, we aim to study when the surrogate regret bounds are non-vacuous and how fast the order of convergence in terms of the~$p$-norm.
To this end, we analyze the $p$-norm bounds by the surrogate regret $R(\qbf, \hat\qbf)$ jointly with a rate function $\psi$ in the following form by extending from the binary classification case \cite{Bao:2023} to the multiclass classification case:
\begin{equation}
  \label{equation:desiderata}
  \|\qbf - \hat\qbf\|_p \le \psi(R(\qbf, \hat\qbf)).
\end{equation}
After formalizing these notions in \cref{section:proper_losses}, we derive the surrogate regret bounds in \cref{section:regret_bounds}.
To derive bounds of the form \eqref{equation:desiderata}, we introduce the \emph{moduli of convexity} \cite{Polyak1966} \cite{Figiel:1976}, which describe the information of its second derivative of convex functions (on the probability simplex).
The rate~$\psi$ in \eqref{equation:desiderata} can be characterized by the modulus of a Bregman generator function associated with a proper loss~$\ellbf$ (\cref{theorem:regret_lower_bound}).
We first show that the strict properness of a loss is necessary and sufficient to obtain a non-vacuous surrogate regret bound, or strictly increasing~$\psi$, to put it differently (\cref{proposition:moduli_monotonicity}).
Whereas it has been known that non-strictly proper losses can achieve non-vacuous bounds for classification \cite[Corollary 27]{Reid:2011},
our sufficiency result argues that the strict properness is a minimal requirement for an estimate to be non-vacuous in terms of the~$p$-norm.
As our second main result, we provide an affirmative answer to the above conjecture: the optimal rate~$\psi(\rho)$ as~$\rho \downarrow 0$ is~$O(\rho^{1/2})$, for a broad class of proper losses (\cref{theorem:regret_order}).
This convergence rate has already been known for a restricted class of proper losses, known as strongly proper losses \cite{Agarwal:2014}.
Hence, our result ensures the asymptotic optimality of strongly proper losses.
This gives an answer to the question, ``Do we have an interesting loss that is strictly proper but not strongly proper?'' \cite[\S6.2.9]{Bao2022Thesis}: there is no better proper loss outside of strongly proper losses, as long as we are concerned with the asymptotic rate of~$\psi$ in \cref{equation:desiderata}.

\subsection{Organization}
The organization of this article and our contributions are summarized as follows.
\begin{itemize}
  \item \Cref{section:background}: Notation and necessary backgrounds on convex analysis are summarized.
  We tailor subdifferentials for convex functions defined on the probability simplex.

  \item \Cref{section:proper_losses}: Proper losses for multiclass classification are introduced. \Cref{lemma:minimizability} characterizes the existence of minimizers for general losses, and \cref{proposition:savage} gives a self-contained and rigorous proof of the well-known representation of proper losses \cite{Savage:1971}.

  \item \Cref{section:regret_bounds}: 
  \Cref{proposition:moduli_monotonicity} is our first result, proving that the strict properness of a loss is a necessary and sufficient condition for an associated surrogate regret bound to be non-vacuous.
  \Cref{theorem:regret_lower_bound} extends surrogate regret bounds for binary classification \cite{Bao:2023} to multiclass classification. This is achieved by extending the moduli of convexity to multivariate functions (\cref{definition:modulus}).

  Then, the benefits of \cref{theorem:regret_lower_bound} are discussed in \cref{section:downstream}.
  In particular, we can obtain the~$p$-norm bound in the form of~\eqref{equation:desiderata}, which can be used to control the performance of plug-in forecasters for downstream tasks such as multiclass classification, learning with noisy labels, and bipartite ranking.

  \item \Cref{section:regret_order}: We evaluate the rate~$\psi(\rho)$ by power functions such as~$\rho^{1/s} \lesssim \psi(\rho) \lesssim \rho^{1/S}$ for some constants~$s, S > 0$,\footnote{In our notation, $\psi_1 \lesssim \psi_2$ indicates the existence of an absolute constant $C > 0$ such that $C\psi_1 \leq \psi_2$.} which is based on the Simonenko order function previously adopted \cite{Bao:2023}. Our second main result roughly shows that~$s \geq 2$ (\cref{theorem:regret_order}), establishing the asymptotic optimality~$\psi(\rho) \gtrsim \rho^{1/2}$ of strongly proper losses.

  \item \Cref{section:examples}: Several examples of convex functions to generate proper losses are discussed.
\end{itemize}

\section{Background}
\label{section:background}
In this section, we summarize the notation and basic properties of convex functions.

\subsection{Notation}
Throughout this article, fix~$N \in \Nbb$ and~$p \in [1, \infty]$.
The Kronecker delta is denoted by $\delta_{ij}$.
For~$k \in \Nbb$, we set~$[k] \defeq \set{1, 2, \dots, k}$.
A vector is denoted by bold-face such as~$\xibf \in \Rbb^N$, and its $n$-th (scalar) component is written as non-bold~$\xi_n$ for each~$n \in [N]$.
The~$p$-norm of~$\xibf \in \Rbb^N$ is denoted by~$\|\xibf\|_p$.
For a topology on~$\Rbb^N$, we refer to one induced from the~$2$-norm, but it makes no difference whichever norm we choose.
Similarly, a convexity of a function on~$(\Rbb^N,\|\cdot\|_p)$ is determined independently of the choice of~$p$.
The standard inner product on~$\Rbb^N$ is denoted by $\inpr{\xibf}{\xibf'} \defeq \sum_{n \in [N]} \xi_n\xi_n'$.
We introduce the notation 
\begin{align*}
  \simplex \defeq \setcomp{\qbf \in \Rbb^N}{q_n \geq 0\ (n \in [N]), \ \inpr{\qbf}{\onebf} = 1}, \quad
  \simplex_+ \defeq \setcomp{\qbf \in \simplex}{q_n> 0\ (n \in [N])},
\end{align*}
where~$\onebf \in \Rbb^N$ is the vector with each component being one.
For~$\qbf\in \simplex$, we denote by~$\supp(\qbf)$ the \emph{support} of~$\qbf$, that is,
\[
  \supp(\qbf) \defeq \setcomp{n\in[N]}{q_n>0}.
\]
We adhere to the convention that
\[
  \pm \infty \leq \pm\infty, \;\; a\pm\infty=\pm\infty, \;\; b\cdot (\pm\infty)=\pm\infty, \;\; -b\cdot(\pm\infty) = \mp\infty,\;\; 0\cdot (\pm\infty)=0,
  \;\; \ln 0=-\infty,
\]
for~$a \in \Rbb$ and~$b > 0$.
We use~$O$ and~$\Omega$ to denote the \emph{infinitesimal} asymptotic order.
To be precise, for two functions~$\phi,\psi$ defined around~$0$,~$\phi(\epsilon) = O(\psi(\epsilon))$ and~$\phi(\epsilon)=\Omega(\psi(\epsilon))$ as~$\epsilon \downarrow 0$ should be understood as 
\[
  \limsup_{\epsilon \downarrow 0} \left|\frac{\phi(\epsilon)}{\psi(\epsilon)}\right| < \infty
  \quad \text{and} \quad
  \liminf_{\epsilon \downarrow 0} \left|\frac{\phi(\epsilon)}{\psi(\epsilon)}\right| > 0,
\]
respectively.

\subsection{Convex analysis}
\label{section:convex_analysis}
In this subsection, let~$f:\Rbb^N\to(-\infty, \infty]$ denote a proper convex function on~$\Rbb^N$.
Its \emph{effective domain} is defined by 
\[
  \domain{f} \defeq \setcomp{\xibf \in \Rbb^N}{f(\xibf) < \infty}.
\]
For a convex set $S\subseteq\Rbb^N$, a convex function~$f:S\to(-\infty,\infty]$ is said \emph{strongly convex} on~$S$ with respect to the~$p$-norm if we have
\[
(1-t)f(\xibf)+tf\bigl(\check\xibf\bigr) - f\bigl((1-t)\xibf+t\check\xibf\bigr) \ge \frac12\kappa t(1-t)\bigl\|\xibf-\check\xibf\bigr\|_p^2
\quad \text{for $\xibf,\check\xibf\in S$, $t\in(0,1)$},
\]
for some~$\kappa>0$.
A vector~$\vbf \in \Rbb^N$ is called a \emph{subgradient} of~$f$ at~$\xibf^0 \in \Rbb^N$ if
\begin{equation}
  \label{equation:subtangent_inequality}
  f(\xibf) \ge f(\xibf^0) + \inpr{\smash{\vbf}}{\xibf - \smash{\xibf^0}}
  \quad \text{for all $\xibf \in \Rbb^N$.}
\end{equation}
We say that~$f$ is \emph{subdifferentiable} at~$\xibf^0$ if there exists a subgradient of~$f$ at~$\xibf^0$.

Assume~$\simplex \subset \domain{f}$.
Then $f$ is subdifferentiable at~$\qbf^0 \in \simplex_+$ \cite[Theorem~23.4]{Rockafeller:1970}. 
For~$\qbf^0 \in \simplex$, 
we extended the set of subgradient of $f$ at $\qbf^0$ as
\[
  \partial f(\qbf^0) =\setcomp{\vbf \in [-\infty,\infty)^N}{
    f(\qbf) \ge f(\qbf^0) + \inpr{\vbf}{\qbf-\qbf^0} \text{~holds for all $\qbf\in\simplex$}
}.
\]
The notion of~$\partial f(\qbf^0)$ differs from the usual one \cite[\S23]{Rockafeller:1970} due to the restriction of the domain of~$f$ from~$\Rbb^N$ to~$\simplex$
and the extension of the range of~$\vbf$ from~$ \Rbb^N$ to~$[-\infty,\infty)^N$.
This relaxation is useful to accommodate regular losses (given in \cref{definition:regular} later).
Note that, for $\vbf\in \partial f(\qbf^0)$,
$v_n=-\infty$ happens only for $n\notin \supp (\qbf^0)$.
We will show that 
$\partial f(\qbf^0)$ is nonempty for any $\qbf^0\in \simplex$ in \cref{section:subgradient_inequality}.

We call a map $\partial{f}: \simplex \to 2^{[-\infty,\infty)^N}$ the \emph{subdifferential} of~$f$.

\paragraph*{Bregman divergence.}
Let us denote an arbitrary \emph{selector} of~$\partial f(\qbf)$ by $\nabla f$, that is, a map assigning to each point~$\qbf \in \simplex$ an element in~$\partial f(\qbf)$.
Since~$\partial{f}(\qbf^0)$ consists of the gradient of~$f$ at~$\qbf^0$ if~$f$ is differentiable at~$\qbf^0$,
it is consistent to use the notation~$\nabla f$ for a selector.
For~$\qbf, \qbf^0 \in \simplex$, the associated \emph{Bregman divergence}~\cite{Bregman:1967} of~$\qbf$ given~$\qbf^0$ is defined by
\[
  B_{(f,\nabla f)}(\qbf\|\qbf^0) \defeq f(\qbf) - f(\qbf^0) - \inpr{\nabla f(\qbf^0)}{\qbf - \qbf^0} \in [0, \infty].
\]
Here,~$f$ is called the \emph{generator} of the Bregman divergence~$B_{(f,\nabla f)}$.

\paragraph*{Subgradient equivalence.}
Note that the definition of the subgradient introduced here restricts the domain to~$\simplex$.
Therefore, if~$\vbf \in \partial f(\qbf^0)$ for~$\qbf^0\in\simplex$,
then for~$\gamma:\simplex\to\Rbb$, we have
\begin{align*}
    f(\qbf)
    &\ge f(\qbf^0) + \inpr{\vbf}{\qbf - \qbf^0} \\
    &= f(\qbf^0) + \inpr{\vbf}{\qbf - \qbf^0} + \gamma(\qbf^0)\cdot \inpr{\onebf}{\qbf - \qbf^0} && \text{(because $\qbf,\qbf^0\in\simplex$)} \\
    &= f(\qbf^0) + \inpr{\vbf + \gamma(\qbf^0)\cdot\onebf}{\qbf - \qbf^0}.
  \end{align*}
Thus, 
for a selector $\nabla f$ of $\partial f$ and a function $\gamma$ on $\simplex$,
$\nabla f+\gamma\onebf$ is also a selector of $\partial f$
and 
\begin{equation}
  \label{equation:equivalence}
  \begin{aligned}
B_{(f,\nabla f)}= B_{(f, \nabla f+\gamma\onebf)}
  \end{aligned}
\end{equation}
holds. 
Readers must distinguish this notion of equivalence from the equivalence of scoring rules \cite[\S1.1]{Dawid2007AISM}.

\section{Classification, proper losses, and Savage representation}
\label{section:proper_losses}
After introducing the learning problem of multiclass classification, we discuss loss functions and their properties.
We review the notion of proper losses and its connection to Bregman divergences.
Although this connection is already known, we formalize it rigorously.
In particular, we verify the existence of minimizers of the conditional risk and its measurable selection (\cref{lemma:minimizability}), which have been implicitly used in previous literature without any proof.

\subsection{Multiclass classification}
We regard a Radon space~$\Xcal$ as an input space, that is, the set of possible observations, and~$\Ycal \defeq [N]$ as a set of labels.
The set~$\simplex$ is identified with the space of all probability measures on~$\Ycal$.
We fix a probability measure~$\nu$ on~$\Xcal \times \Ycal$ and denote by~$\nu_{\Xcal}$ the marginal of~$\nu$ on~$\Xcal$,
that is,~$\nu(\Bcal \times \Ycal) = \nu_{\Xcal}(\Bcal)$ holds for any measurable set~$\Bcal \subset \Xcal$.
Then, by the disintegration theorem \cite[Chapter III-70 and 72]{DellacherieMeyer78},
there exists a Borel map $\qbf: \Xcal \to \simplex$, uniquely defined~$\nu_{\Xcal}$-a.e.,
such that
\[
  \nu(\Bcal \times \set{y}) = \int_\Bcal q_y(\xbf) \rd\nu_{\Xcal}(\xbf)
\quad
\text{for a measurable set }\Bcal \subset \Xcal \text{ and }y \in \Ycal.
\]
We deem~$\qbf(\xbf) \in \simplex$ a true probability vector at the input~$\xbf$ induced from~$\nu$.
\emph{Multiclass classification} is a task to learn a forecaster $y: \Xcal \to \Ycal$ to predict the most likely label
\[
  y(\xbf) \in \argmax_{y \in \Ycal} q_y(\xbf)
  \quad \text{for each $\xbf \in \Xcal$.}
\]

\subsection{Proper losses}
\label{section:proper_loss_sub}
We continue to use the notation in the previous subsection.
To elicit the true probability map~$\qbf: \Xcal \to \simplex$, we use a \emph{loss}~$\ellbf$, which is a Borel map from~$\simplex$ to~$[0,\infty]^N$.
Define the associated \emph{full risk} by 
\[
  \Lbb[\hat{\qbf}] \defeq \int_{\Xcal \times \Ycal} \ell_y(\hat{\qbf}(\xbf)) \rd\nu(\xbf, y)
  \quad \text{for a Borel map~}
  \hat{\qbf}: \Xcal \to \simplex.
\]
A minimizer of~$\Lbb$ among Borel maps~$\hat{\qbf}: \Xcal \to \simplex$ is called an \emph{estimator} of~$\qbf: \Xcal \to \simplex$.
The choice of~$\ellbf$ directly affects the quality of an estimator.
It is more intuitive to work on the conditional counterpart of the full risk instead.
Let~$\qbf, \hat\qbf \in \simplex$ with slight abuse of notation.
For a loss~$\ellbf$, the associated \emph{conditional risk} of~$\hat\qbf$ given~$\qbf$ and \emph{conditional Bayes risk} of~$\qbf$ are defined by 
\begin{equation*}
  L(\qbf, \hat\qbf) \defeq \inpr{\qbf}{\ellbf(\hat{\qbf})} = \sum_{y \in \Ycal} q_y\ell_y(\hat\qbf) \quad \text{and} \quad
  \underline{L}(\qbf) \defeq \inf_{\hat\qbf \in \simplex} L(\qbf, \hat\qbf),
\end{equation*}
respectively.
Here, we regard~$\qbf\in\simplex$ as a \emph{true} probability vector
and~$\hat\qbf\in\simplex$ as an \emph{estimate}.
The full risk is rewritten as 
\[
  \Lbb[\hat{\qbf}] = \int_{\Xcal} L(\qbf(\xbf), \hat{\qbf}(\xbf)) \rd\nu_{\Xcal}(\xbf).
\]
Since the infimum of a family of linear functions is concave,~$\underline{L}$ is concave on~$\simplex$, consequently~$\underline{L} \circ \hat{\qbf}: \Xcal \to [-\infty,\infty)$ is measurable on~$\Xcal$, which in turn shows
\[
  \Lbb[\hat{\qbf}] \geq \int_{\Xcal} \underline{L}(\qbf(\xbf)) \rd\nu_{\Xcal}(\xbf).
\]
Thus, the minimization problem of the full risk is reduced to that of the conditional risk
if the map~$\Mcal: \simplex \to 2^{\simplex}$ defined by
\[
  \Mcal(\qbf) \defeq \argmin_{\hat\qbf \in \simplex} L(\qbf, \hat\qbf)
  \quad \text{for $\qbf \in \simplex$.} 
\]
has a Borel selector.
Note that~$\Mcal(\qbf) = \emptyset$ may happen, whose example is given in \cref{section:empty_minimizer_set}.

Next, we show that~$\Mcal$ has a Borel selector for continuous~$\ellbf$.
Although this fact is not directly relevant to our main topic, we detail it in this article because we are unaware of any previous literature formalizing it for losses defined on~$\simplex$.
For a different type of (margin-based) losses, the existence of a measurable full risk minimizer has been studied \cite[Theorem~3.2~(ii)]{Steinwart:2007}.
The proof is deferred to \cref{section:proofs}.
\begin{restatable}{lemma}{minimizability}
  \label{lemma:minimizability}
  Suppose~$\ellbf: \simplex \to [0,\infty]^N$ is lower semi-continuous.
  Then,~$\underline{L}(\qbf) \ge 0$ holds and~$\Mcal(\qbf)$ is nonempty and closed for~$\qbf \in \simplex$.
  Moreover, if~$\ellbf$ is continuous, then there exists a Borel selector of~$\Mcal$.
\end{restatable}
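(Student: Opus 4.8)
The plan is to prove the two assertions separately, the first by a direct compactness argument and the second by a measurable selection theorem.

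\emph{First part.} I would start from the observation that if each component $\ell_y$ is lower semi-continuous, then for fixed $\qbf \in \simplex$ the map $\hat\qbf \mapsto L(\qbf,\hat\qbf) = \sum_{y \in \Ycal} q_y \ell_y(\hat\qbf)$ is a finite nonnegative linear combination of lower semi-continuous functions (with the convention on $0 \cdot$ causing no trouble since $\ellbf$ is $\Rbb^N$-valued), hence lower semi-continuous on the compact set $\simplex$. A lower semi-continuous function on a nonempty compact set is bounded below and attains its infimum, so $\underline{L}(\qbf) \in \Rbb$ and $\Mcal(\qbf) \ne \emptyset$. Finally $\Mcal(\qbf) = \setcomp{\hat\qbf \in \simplex}{L(\qbf,\hat\qbf) \le \underline{L}(\qbf)}$ is a sublevel set of a lower semi-continuous function, hence closed.

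\emph{Second part.} Assuming now that $\ellbf$ is continuous, I would first upgrade this to joint continuity of $(\qbf,\hat\qbf) \mapsto L(\qbf,\hat\qbf)$ on $\simplex \times \simplex$, and then apply Berge's maximum theorem with the constant (continuous, compact-valued) constraint correspondence $\qbf \mapsto \simplex$: this yields that $\underline{L}$ is continuous and that $\Mcal$ is an upper hemicontinuous correspondence with nonempty compact values; equivalently, the graph $\setcomp{(\qbf,\hat\qbf) \in \simplex \times \simplex}{L(\qbf,\hat\qbf) = \underline{L}(\qbf)}$ is closed, hence Borel. Since $\simplex$ is a compact metric space (in particular Polish, and a standard Borel space), the measurable maximum theorem — or Kuratowski--Ryll-Nardzewski applied to the weakly measurable correspondence $\Mcal$ — produces a Borel selector. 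If a self-contained construction is preferred over citing the selection theorem, one can instead break ties lexicographically: set $\Mcal_0 \defeq \Mcal$ and, for $k \in [N]$, let $\Mcal_k(\qbf) \defeq \argmin_{\hat\qbf \in \Mcal_{k-1}(\qbf)} \hat q_k$; each $\Mcal_k$ again has closed graph, the partial minimum $\qbf \mapsto \min\setcomp{\hat q_k}{\hat\qbf \in \Mcal_{k-1}(\qbf)}$ is Borel by the same argument, and $\Mcal_N$ is single-valued, giving the selector explicitly.

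The hard part will be the passage from ``lower semi-continuous'' to ``continuous'': under mere lower semi-continuity, $L$ need not be jointly continuous, $\underline{L}$ need not be continuous, and $\Mcal$ need not be upper hemicontinuous, so the Berge-type route is unavailable — which is precisely why continuity is imposed for the selection claim. Beyond that, the only genuine care required is verifying the hypotheses of the measurable maximum theorem (nonempty compact values of $\Mcal$, supplied by the first part, and joint continuity of $L$) and noting that $\simplex$ carries the standard Borel structure needed for the selection theorem; both are routine. In the explicit-construction variant, the corresponding minor subtlety is the Borel measurability of each partial minimum, which reduces once more to a measurable-maximum argument applied to the closed-graph correspondence $\Mcal_{k-1}$.
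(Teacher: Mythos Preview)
Your proposal is correct. Both you and the paper ultimately invoke the Kuratowski--Ryll-Nardzewski selection theorem, but you reach its hypotheses by a different route: you pass through Berge's maximum theorem (joint continuity of $L$ plus the constant compact constraint $\simplex$) to obtain upper hemicontinuity of $\Mcal$, from which closed graph and weak measurability are immediate. The paper instead verifies weak measurability by hand, showing that $\setcomp{\qbf}{\Mcal(\qbf)\cap\Kcal\neq\emptyset}$ is Borel for each compact $\Kcal$ via a countable-approximation argument: take a dense sequence $(\qbf^j)$ in $\simplex\cap\Kcal$, note that each $d_j(\qbf)\defeq L(\qbf,\qbf^j)-\underline{L}(\qbf)$ is lower semi-continuous (hence Borel), and identify the desired set as $\bigcap_m\bigcup_j d_j^{-1}([0,m^{-1}))$ using continuity of $L(\qbf,\cdot)$. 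Your route is shorter if one is willing to cite Berge as a black box; the paper's is more self-contained and uses only continuity of $L$ in the second argument rather than joint continuity, though that distinction is immaterial here since $L$ is affine in $\qbf$. Your lexicographic alternative is also sound and would give an explicit selector without any selection theorem, at the cost of iterating the measurability argument $N$ times.
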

Our proof is based on a measurable selection theorem, which can be extended beyond proper losses to work on margin-based losses.
However, the proof crucially depends on the continuity of~$\ellbf$ to invoke the selection theorem.
This point is restrictive than the previous result \cite[Theorem~3.2~(ii)]{Steinwart:2007}, but we hope that the proof of \cref{lemma:minimizability} is more concise and provides an insight.

Since~$\Mcal(\qbf)$ is ideally a singleton of~$\qbf$, we consider such a class of losses.
\begin{definition}[{Proper losses~\cite{Winkler1968}}]
  A loss $\ellbf:\simplex \to [0,\infty]^N$ is said to be \emph{proper} if~$\qbf \in \Mcal(\qbf)$ holds for each~$\qbf \in \simplex$.
  We say~$\ellbf$ is \emph{strictly proper} if~$\Mcal(\qbf) = \set{\qbf}$ holds for each~$\qbf \in \simplex$.
\end{definition}
For a proper loss~$\ellbf$, the conditional risk is minimized at the true probability vector, and the identity map on~$\simplex$ becomes a Borel selector of~$\Mcal$.
In this case, it follows that~$\underline{L}(\qbf) = L(\qbf, \qbf)$ for~$\qbf \in \simplex$.
Note that recently discovered \emph{calm composite losses} are also valid loss functions on~$\simplex$ but a selector of~$\Mcal$ can be nonlinear, which generalize proper losses \cite{Bao2025AISTATS}.

\subsection{Savage representation}
\label{section:savage}
We will see that a proper loss has a connection with a Bregman divergence under the regularity.
\begin{definition}[{Regular losses \cite[Definition~1]{Gneiting:2007}}]
  \label{definition:regular}
  A loss~$\ellbf: \simplex \to [0, \infty]^N$ is said to be \emph{regular}
  if~$\ell_y(\qbf) = \infty$ happens only for~$y \notin \supp(\qbf)$.
\end{definition}
In what follows, we consider a regular loss $\ellbf:\simplex \to [0, \infty]^N$.

For a regular loss, we define the \emph{surrogate regret}~$R: \simplex \times \simplex \to [0, \infty]$ by
\begin{equation*}
  R(\qbf, \hat\qbf) \defeq L(\qbf, \hat\qbf) - \underline{L}(\qbf)
  \quad \text{for $\qbf, \hat\qbf \in \simplex$,}
\end{equation*}
which measures the suboptimality of an estimate~$\hat\qbf$ given a true~$\qbf$.
The surrogate regret will be used to assess the performance of~$\hat\qbf$ across different downstream tasks in \cref{section:downstream}.
We call~$R$ the \emph{surrogate} regret since it is a proxy performance measure to downstream tasks.

Although the following property
has been well known in literature \cite[\S4]{Savage:1971} \cite[Theorem~3.1]{Hendrickson1971} \cite[Theorem~2]{Gneiting:2007} \cite[Proposition~7]{Williamson:2016}, we provide its proof to handle the regularity and subdifferentials carefully.%
\footnote{
  None of the aforementioned previous literature dealt with subgradients whose elements possibly take~$-\infty$, though subtle.
  We carefully extend subdifferentials in \cref{section:convex_analysis} and check the subgradient inequality~\eqref{equation:subtangent_inequality} for such subgradients in the proof.
}
In essence, the negative Bayes risk~$-\underline{L}$ behaves as a Bregman generator therein.
\begin{restatable}[{Savage representation \cite[\S4]{Savage:1971}}]{proposition}{savage}
  \label{proposition:savage}
  Let~$\ellbf$ be regular.
  Then,~$\ellbf$ is proper (resp.\,strictly proper) if and only if
  there exists a proper convex (resp.\,strictly convex) function~$f$ on~$\Rbb^N$ such that~$\domain f = \simplex$ and, for all~$\hat\qbf \in \simplex$,
  there exists a subgradient~$\hat\vbf \in \partial{f}(\hat\qbf)$ satisfying 
  \begin{equation}
    \label{equation:savage}
    L(\qbf, \hat\qbf) = -f(\hat\qbf) - \inpr{\hat\vbf}{\qbf - \hat\qbf}
    \quad \text{for $\qbf \in \simplex$}.
  \end{equation} 
\end{restatable}
\begin{proof}
  First, assume that~$\ellbf$ is proper.
  Then,~$\underline{L}(\qbf) = L(\qbf, \qbf) \in \Rbb$. 
  Define~$f: \Rbb^N \to (-\infty, \infty]$ by 
  \begin{equation}
    \label{equation:generator}
    f(\xibf) \defeq \begin{cases} 
      -\underline{L}(\xibf) & \text{if } \xibf \in \simplex, \\
      \infty & \text{otherwise},
    \end{cases}
  \end{equation}
  then~$f$ is a proper convex function on~$\Rbb^N$ such that~$\domain f \subset \simplex$.
  For~$\qbf, \hat\qbf \in \simplex$, we have
  \begin{equation}
    \label{equation:supp:ent_grad}
    f(\qbf) = -L(\qbf, \qbf)
    \geq -L(\qbf, \hat\qbf)
    = -L(\hat\qbf, \hat\qbf) + \inpr{-\ellbf(\hat\qbf)}{\qbf - \hat\qbf}
    = f(\hat\qbf) + \inpr{-\ellbf(\hat\qbf)}{\qbf - \hat\qbf},
  \end{equation}
  where the inequality is thanks to the properness of~$\ellbf$.
  Thus,~$-\ellbf(\hat\qbf) \in \partial f(\hat\qbf)$ and \cref{equation:savage} hold for any~$\hat\qbf \in \simplex$.

  Conversely, suppose that there is a proper convex function~$f$ on~$\Rbb^N$ such that~$\domain f = \simplex$
  and, for all~$\hat\qbf \in \simplex$, there exists~$\hat\vbf \in \partial{f}(\hat\qbf)$ satisfying \cref{equation:savage}.
  Then, for~$\qbf \in \simplex$, we have~$L(\qbf, \qbf) = -f(\qbf)$ and
  \[
    L(\qbf, \qbf)
    = -f(\qbf)
    \leq -f(\hat\qbf) - \inpr{\hat\vbf}{\qbf - \hat\qbf}
    = L(\qbf, \hat\qbf)
    \quad \text{for } \hat\qbf \in \simplex,
  \]
  in turn,~$\ellbf$ is proper.

  Next, we show the equivalence of the strict properness of~$\ellbf$ and the strict convexity of~$f$ on~$\simplex$.
  Let~$f$ be a convex function on~$\simplex$ such that \eqref{equation:savage} holds.
  On the one hand, if~$\ellbf$ is strictly proper, then
  we have 
  \begin{align*}
    f((1-t)\qbf + t\qbf')
    &= L((1-t)\qbf + t\qbf', (1-t)\qbf + t\qbf') \\
    &= (1-t)L(\qbf, (1-t)\qbf + t\qbf') + tL(\qbf', (1-t)\qbf + t\qbf') \\
    &> (1-t)L(\qbf, \qbf) + tL(\qbf', \qbf')\\
    &=(1-t)f(\qbf, \qbf) + tf(\qbf', \qbf'),
  \end{align*}
  for~$\qbf, \qbf' \in \simplex$ and~$t \in (0, 1)$, proving the strict convexity of~$f$ on~$\simplex$.
  On the other hand, if~$\ellbf$ is not strictly proper, then there exist distinct~$\qbf, \qbf' \in \simplex$ such that 
  \[
    f(\qbf)
    = -L(\qbf, \qbf)
    = -L(\qbf, \qbf')
    = f(\qbf') + \inpr{\vbf'}{\qbf - \qbf'},
  \]
  where~$\vbf' \in \partial f(\qbf')$ satisfies \cref{equation:savage} such that~$L(\qbf, \qbf') = -f(\qbf') - \inpr{\vbf'}{\qbf - \qbf'}$.
  For any~$t \in (0,1)$, we have 
  \begin{align*}
    -L((1-t)\qbf + t\qbf', (1-t)\qbf + t\qbf')
    &= f((1-t)\qbf + t\qbf') \\
    &\leq (1-t)f(\qbf) + tf(\qbf') \\
    &= f(\qbf') + (1-t) \inpr{\vbf'}{\qbf - \qbf'} \\
    &= f(\qbf') + \inpr{\vbf'}{(1-t)\qbf + t\qbf'-\qbf'} \\
    &= -L((1-t)\qbf + t\qbf', (1-t)\qbf + t\qbf').
  \end{align*}
  This yields~$f((1-t)\qbf + t\qbf') = (1-t)f(\qbf) + f(\qbf')$, that is,~$f$ is not strictly convex on~$\simplex$.

  This completes the proof of the proposition.
\end{proof}

As a by-product of the proof of \cref{proposition:savage}, 
we obtain the following property thanks to the construction~\eqref{equation:generator}.
\begin{corollary}[Subgradient of conditional Bayes risk]
  \label{corollary:entropy_subgradient}
  For a regular proper loss $\ellbf$,
  define a proper convex function $f$ on $\Rbb^N$ by~\eqref{equation:generator}.
  Then, we have~$-\ellbf(\hat\qbf) \in \partial{f}(\hat\qbf)$ for~$\hat\qbf \in \simplex$.
  In addition,~$-\ellbf+\gamma\onebf$ remains to be a selector of~$\partial f$ for any~$\gamma:\simplex\to\Rbb$.
\end{corollary}
Let $\ellbf$ be regular and proper.
Then we observe from  inequalities~\eqref{equation:equivalence} and~\eqref{equation:supp:ent_grad} that 
\begin{equation}
  \label{equation:regret}
  R=B_{(f,\nabla f)}= B_{(f,-\ellbf+\gamma\onebf)}
    \end{equation}
for a function on $\gamma$ on $\simplex$.
This gives a closed form of a subgradient of~$-\underline{L}$ and is of interest per se.
Nevertheless, when one generates a proper loss from a proper convex function~$f$,
it is more standard via the Savage representation~\eqref{equation:savage} by~$\ell_y(\hat\qbf) = L(\ebf_y, \hat\qbf)$, where~$\ebf_y=[\delta_{1y} \; \cdots \; \delta_{Ny}]^\top$ is the standard basis encoding the label~$y\in[N]$.

\paragraph*{Initial examples.}
We quickly see some examples of multiclass proper losses to let readers familiarize with the definitions so far, and discuss more examples in \cref{section:examples}.

The first example is the log loss, $\ell_y(\qbf)=-\ln q_y$ for~$y\in[N]$.
This possibly takes~$\infty$ for~$q_y=0$, for which we need the regularity (see \cref{definition:regular}).
The log loss corresponds to the Kullback--Leibler divergence and Shannon entropy
\[
  R(\qbf,\hat\qbf) = D_{\text{KL}}(\qbf\|\hat\qbf) \defeq \sum_{y\in[N]} q_y\ln\left(\frac{q_y}{\hat q_y}\right),
  \quad
  \underline{L}(\qbf) = -\sum_{y\in[N]}q_y\ln q_y
\]
as the regret and the conditional Bayes risk, respectively.
Here, we have~$\nabla\underline{L}(\qbf) = \ellbf(\qbf) - \onebf$, which is equivalent to~$\ellbf(\qbf) \in \partial\underline{L}(\qbf)$ (in \cref{corollary:entropy_subgradient}; and \cref{section:convex_analysis} for equivalent subgradients).
Since each~$\ell_y$ depends solely on~$q_y$, this type of loss functions is called \emph{local}.
Indeed, the log loss is the only local proper loss \cite{Parry2012AOS}.
The locality is considered to be a desirable property in terms of interpretability \cite{Du2021}.

The second example is the Brier score \cite{Brier1950}:
\[
  \ell_y(\qbf) = \frac12\sum_{y'\in[N]}(\delta_{yy'} - q_{y'})^2 = -q_y + \frac{1+\|\qbf\|_2^2}{2}.
\]
This is no longer local as~$\ell_y(\qbf)$ depends on~$q_{y'}$ for~$y'\ne y$.
Interestingly, the lack of the locality has been reportedly relevant to the emergent ability of language models \cite{Du2024NeurIPS}.
The Brier score is strictly proper associated with the following regret and conditional Bayes risk
\[
  R(\qbf, \hat\qbf) = \frac12\|\qbf - \hat\qbf\|_2^2,
  \quad
  \underline{L}(\qbf) = \frac{1-\|\qbf\|_2^2}{2},
\]
which are the squared $2$-norm distance and negative squared $2$-norm, respectively.
Here, we have~$\nabla\underline{L}(\qbf) = -\qbf$, which is equivalent to~$\ellbf(\qbf) \in \partial\underline{L}(\qbf)$
because~$\ellbf(\qbf) = -\qbf + \gamma(\qbf)\cdot\onebf$ with the choice~$\gamma(\qbf)\defeq(1+\|\qbf\|_2^2)/2$.
Note that~$\nabla\underline{L}(\qbf) = -\qbf$ is not proper when regarded as a loss function \cite[\S4.1]{Gneiting:2007}; for this reason, we must always interpret the formula~$\ellbf\in\partial\underline{L}$ in \cref{corollary:entropy_subgradient} under the subgradient equivalence.

\subsection{Strongly proper losses}
For~$\kappa > 0$, a loss~$\ellbf$ is called~\emph{$\kappa$-strongly proper} if
\begin{equation}
  \label{equation:strongly_proper}
  R(\qbf, \hat\qbf) = L(\qbf, \hat\qbf) - \underline{L}(\qbf) \geq \frac{\kappa}{2}\|\qbf - \hat\qbf\|_2^2
  \quad \text{for $\qbf, \hat\qbf \in \simplex$.}
\end{equation}
Strongly proper losses have been introduced for~$N=2$ \cite{Agarwal:2014} and for general~$N \geq 3$ \cite{Zhang:2021} to derive a surrogate regret bound in the form of \eqref{equation:desiderata}.
For example, the log loss is~$1$-strongly proper \cite[Lemma~3]{Zhang:2021}.
For~$N=2$,~$\ellbf$ is regular and strongly proper if and only if its conditional Bayes risk~$-\underline{L}$ is strongly convex (with respect to the~$2$-norm) \cite[Theorem~10]{Agarwal:2014}.
As an immediate consequence of the inequality~\eqref{equation:strongly_proper}, we have the~$1/2$-order surrogate regret bounds for strongly proper losses:
\begin{equation}
  \label{equation:strongly_proper_regret_bound}
  \|\qbf - \hat\qbf\|_2 \leq \sqrt{\frac2\kappa R(\qbf, \hat\qbf)}
  \qquad \text{for any $\qbf,\hat\qbf \in \simplex$.}
\end{equation}

Several binary losses are shown to be strongly proper \cite[Table~1]{Agarwal:2014}.
Beyond binary losses, a similar bound to \eqref{equation:strongly_proper_regret_bound} has been known for \emph{Fenchel--Young losses} \cite{Blondel:2020} (which is relevant to proper losses but defined over ``dual'' points of $\hat\qbf\in\triangle^N$),
but requires a restrictive condition on $-\underline{L}$, Legendre-type.
See \cite[Lemma~3]{Blondel2019} and \cite[footnote~8]{Sakaue2024COLT} for details.
In the next section, we derive surrogate regret bounds for general multiclass proper losses.

\section{Regret bounds: Necessity of strict properness}
\label{section:regret_bounds}
In this section, we first study the moduli of convexity in \cref{section:moduli}.
Therein, we show that the strict convexity of a function is equivalent to the strict monotonicity of its modulus (\cref{proposition:moduli_monotonicity}), which ensures that its surrogate regret bound is non-vacuous.
Then, we show in \cref{section:regret_bounds_subsec} the surrogate regret bounds for general multiclass proper losses beyond strongly proper losses.
In \cref{section:downstream}, we relate surrogate regret bounds to several downstream tasks.
Readers who are interested in the benefits of surrogate regret bounds may refer to this section first.

\subsection{Moduli of convexity}
\label{section:moduli}
Before introducing the moduli of convexity, we study
the \emph{midpoint Jensen gap} of a convex function~$f: \simplex \to \Rbb$, which is defined by
\begin{equation*}
  J(\qbf, \check\qbf) \defeq \frac{f(\qbf) + f(\check\qbf)}{2} - f\left(\frac{\qbf + \check\qbf}{2}\right)
  \quad \text{for $\qbf, \check\qbf \in \simplex$.}
\end{equation*}
The midpoint Jensen gap is nonnegative by the convexity of~$f$ on~$\simplex$.
The midpoint Jensen gap is invariant under adding an affine function, and so is the modulus of convexity (defined later).
That is, the midpoint Jensen gaps of two convex functions~$f: \simplex \to \Rbb$ and~$f_{\lambda,\ubf}: \simplex \to \Rbb$ defined by
\[
  f_{\lambda,\ubf}(\qbf) \defeq f(\qbf) + \inpr{\ubf}{\qbf} + \lambda
  \quad \text{for $\qbf \in \simplex$}
\]
are the same, for any~$\ubf \in \Rbb^N$ and~$\lambda \in \Rbb$.
Moreover, we will show that for continuous convex functions~$f, g: \simplex \to \Rbb$,
their midpoint Jensen gaps are the same if and only if~$f - g$ is affine.
This property is reminiscent of the condition for the payoff equivalence \cite[Theorem~3]{McCarthy1956}
and the universal equivalence of surrogate losses \cite[Theorem~3]{Nguyen2009AoS} \cite[Theorem~1]{Duchi2018AoS}.
The proof is deferred to \cref{section:proofs}.
\begin{restatable}[Uniqueness up to affine functions]{proposition}{jensengapaffine}
  \label{proposition:affine}
  Let~$f, g: \simplex \to \Rbb$ be continuous convex functions.
  Then, their midpoint Jensen gaps are the same if and only if~$f - g$ is affine.
\end{restatable}

We extend the moduli of convexity defined on~$(\triangle^2,\|\cdot\|_1)$ \cite[Definition~4]{Bao:2023} to~$(\simplex,\|\cdot\|_p)$.
Note that the diameter of~$(\simplex,\|\cdot\|_p)$ is~$2^{1/p}$.
\begin{definition}[{Modulus of convexity~\cite{Borwein:2009}}]
  \label{definition:modulus}
  For a convex function~$f:\simplex \to \Rbb$, 
  its \emph{modulus of convexity} of~$f$ with respect to the~$p$-norm is the function~$\omega: [0, 2^{1/p}] \to [0, \infty)$ defined by
  \begin{equation*}
    \omega(r) \defeq \inf\setcomp{J(\qbf, \check\qbf)}{ \qbf, \check\qbf\in \simplex \text{ with } \|\qbf-\check\qbf\|_p\geq r}
      \quad \text{for }
    r\in [0, 2^{1/p}].
  \end{equation*}
\end{definition}
While the notion of moduli of convexity dates back to the classical literature on optimization \cite{Polyak1966} and Banach spaces \cite{Figiel:1976}, we view this as the smallest possible Jensen--Bregman divergence \cite{Nielsen2011} with the fixed~$p$-norm distance.
This idea is similar to variational problems for deriving tight Pinsker's inequalities \cite{Vajda1970} \cite{Fedotov2003}.

We will show that the convexity and strict convexity of a function are translated to the monotonicity and strict monotonicity of its modulus, respectively.
This is an important result throughout this article because the moduli of convexity characterize surrogate regret bounds, as we will see in \cref{section:regret_bounds_subsec} soon.
\begin{restatable}[Monotonicity of modulus]{theorem}{monotonicity}
  \label{proposition:moduli_monotonicity}
  Let~$f: \simplex \to \Rbb$ be a convex function.
  Then, the following assertions hold.
  \begin{enumerate}
    \item The modulus~$\omega$ is non-decreasing on~$[0, 2^{1/p}]$ and~$\omega(0) = 0$.
    \item $f$ is strictly convex on~$\simplex$ if and only if~$\omega$ is strictly monotone on~$[0, 2^{1/p}]$.
    \item $f$ is strongly convex on~$\simplex$ with respect to the~$p$-norm if and only if there exists~$\kappa>0$ such that~$\omega(r)\ge\kappa r^2$ on~$r\in[0,2^{1/p}]$.
  \end{enumerate}
\end{restatable}
From \cref{proposition:moduli_monotonicity}, the strong convexity of~$f$ is equivalent to the quadratic bound~$\omega(r)\gtrsim r^2$.
Thus, the modulus of convexity quantifies the convexity of a function.

Before proving \cref{proposition:moduli_monotonicity}, we show a lemma that is repeatedly used in the rest of the article.
This lemma guarantees that the infimum of the modulus of convexity $\omega$ is attainable, and the minimizer lies at the boundary of the constraint $\|\qbf-\check\qbf\|_p\ge r$ in its definition.
The proof is deferred to \cref{section:proofs}.
\begin{restatable}{lemma}{minimizer}
  \label{lemma:moduli_minimizer}
  Let~$f:\simplex \to \Rbb$ be a convex function.
  For~$r \in [0, 2^{1/p}]$, there exist~$\qbf^r, \check\qbf^r \in \simplex$ such that~$\omega(r) = J(\qbf^r, \check\qbf^r)$ and~$\|\qbf^r - \check\qbf^r\|_p=r$.
\end{restatable}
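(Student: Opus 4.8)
The plan is to reduce, by a convexity argument, from the constraint $\|\qbf-\check\qbf\|_p\ge r$ to $\|\qbf-\check\qbf\|_p = r$, and then to produce a minimizer by compactness.

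For $r=0$ take any $\qbf^0=\check\qbf^0\in\simplex$, since $\omega(0)=0=J(\qbf^0,\qbf^0)$. Fix $r\in(0,2^{1/p}]$. The first step is that the infimum defining $\omega(r)$ is unchanged if we restrict to pairs at distance exactly $r$. Indeed, let $\qbf,\check\qbf\in\simplex$ with $\rho:=\|\qbf-\check\qbf\|_p>r$, put $\mbf:=\tfrac12(\qbf+\check\qbf)$, and set
\[
  \qbf' \defeq \tfrac12\bigl(1+\tfrac r\rho\bigr)\qbf+\tfrac12\bigl(1-\tfrac r\rho\bigr)\check\qbf,
  \qquad
  \check\qbf' \defeq \tfrac12\bigl(1-\tfrac r\rho\bigr)\qbf+\tfrac12\bigl(1+\tfrac r\rho\bigr)\check\qbf .
\]
Since $\tfrac r\rho\in(0,1]$, the points $\qbf',\check\qbf'$ are convex combinations of $\qbf$ and $\check\qbf$, hence lie in $\simplex$; one checks $\tfrac12(\qbf'+\check\qbf')=\mbf$ and $\|\qbf'-\check\qbf'\|_p=(r/\rho)\rho=r$; and convexity of $f$ gives $f(\qbf')+f(\check\qbf')\le f(\qbf)+f(\check\qbf)$, so $J(\qbf',\check\qbf')\le J(\qbf,\check\qbf)$. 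Hence $\omega(r)=\inf\{J(\qbf,\check\qbf):\qbf,\check\qbf\in\simplex,\ \|\qbf-\check\qbf\|_p=r\}$.

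For the second step, choose $(\qbf_k,\check\qbf_k)$ with $\|\qbf_k-\check\qbf_k\|_p=r$ and $J(\qbf_k,\check\qbf_k)\to\omega(r)$; by compactness of $\simplex\times\simplex$ a subsequence converges, say $\qbf_k\to\qbf^r$ and $\check\qbf_k\to\check\qbf^r$, and then $\|\qbf^r-\check\qbf^r\|_p=r$. The limiting pair is admissible, so $J(\qbf^r,\check\qbf^r)\ge\omega(r)$, and the whole matter reduces to the reverse inequality, i.e.\ to sufficient semicontinuity of $J$ along the sequence. This is where the main obstacle lies: a finite convex function on $\simplex$ need not be continuous on $\partial\simplex$ (it is only upper semicontinuous there), so none of the three values $f(\qbf_k)$, $f(\check\qbf_k)$, $f\bigl(\tfrac12(\qbf_k+\check\qbf_k)\bigr)$ is a priori known to converge to its value at the limit, and $J(\qbf^r,\check\qbf^r)$ could strictly exceed $\lim_k J(\qbf_k,\check\qbf_k)=\omega(r)$. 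Note, however, that since $r>0$ the midpoint $\mbf^r:=\tfrac12(\qbf^r+\check\qbf^r)$ lies in the relative interior of the smallest face of $\simplex$ carrying $\qbf^r$ and $\check\qbf^r$, where $f$ restricted to that face is continuous, which is what one exploits below.

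I would resolve the obstacle using the face structure of $\simplex$. Decomposing admissible pairs according to their carrier (the smallest face containing both entries) gives $\omega(r)=\min_F\inf\{J(\qbf,\check\qbf):\qbf,\check\qbf\in F,\ \|\qbf-\check\qbf\|_p=r\}$, a minimum over the finitely many faces $F$ of $\simplex$; since each face is again a lower-dimensional simplex, an induction on $N$ reduces matters to the generic term $F=\simplex$, i.e.\ to pairs whose midpoint lies in $\simplex_+$. For those one passes to the lower semicontinuous hull $\bar f$ of $f$, which is continuous on $\simplex$ and agrees with $f$ on $\simplex_+$: at any such pair $J_f\ge J_{\bar f}$, with equality whenever both entries also lie in $\simplex_+$. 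Since $J_{\bar f}$ is continuous on the compact set of admissible pairs it attains its infimum, and the crux of the lemma is that a minimizer can be chosen whose entries avoid the discontinuity set of $f$ (which is confined to $\partial\simplex$) — using convexity of $f$ together with the fact that $f$ cannot jump upward as one moves into a relative interior — so that there $J_f=J_{\bar f}=\omega(r)$; the pair realizes distance exactly $r$ by the construction of the first step. (When $f$ is continuous — as for $f=-\underline L$ in the intended application, where $\underline L$ is an infimum of affine functions hence $-\underline L$ is both lower and upper semicontinuous — the second step already concludes by the extreme value theorem, and this face analysis is unnecessary.)
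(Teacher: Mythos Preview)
Your two ingredients---shrinking along the segment by convexity, and compactness---are exactly the paper's; the paper merely applies them in the opposite order, first asserting that $J$ is continuous on the compact set $\Dcal^N(r)=\{(\qbf,\check\qbf):\|\qbf-\check\qbf\|_p\ge r\}$ to obtain a minimizer, and then shrinking (via the same inequality $f(c(\tau))+f(c(1-\tau))\le f(c(0))+f(c(1))$ that you use) to force $\|\qbf-\check\qbf\|_p=r$. So for the continuous case your argument and the paper's coincide, and your parenthetical observation that $f=-\underline L$ is automatically continuous is exactly why the paper's bald assertion ``$J$ is continuous'' goes through in its applications.

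Where your proposal has a genuine gap is in the attempted extension to discontinuous $f$. The step ``a minimizer [of $J_{\bar f}$] can be chosen whose entries avoid the discontinuity set of $f$'' is asserted without proof, and it is in fact \emph{false}: the lemma itself fails without continuity. Take $N=2$, identify $\triangle^2$ with $[0,1]$ via $q\mapsto(q,1-q)$, and set $f(q)=q^3$ for $q\in(0,1]$, $f(0)=100$. This is convex on $[0,1]$ (the jump at $0$ is upward, which convexity permits). For a pair at distance $d$ with both entries in $(0,1]$ one computes $J=\tfrac{3}{4}\,m\,d^2$ where $m=\tfrac{q+\check q}{2}\in(\tfrac{d}{2},\,1-\tfrac{d}{2}]$; the infimum $\tfrac{3d^3}{8}$ is approached only as $\check q\downarrow 0$, but at $\check q=0$ the value of $J$ jumps above $50$. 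Hence $\omega(r)=\tfrac{3d^3}{8}$ (with $d=2^{-1/p}r$) is not attained at any pair of distance $r$. Your face-induction cannot help here since the carrier is already all of $\triangle^2$, and your lsc-hull argument founders precisely because the $J_{\bar f}$-minimizer is forced to sit at the boundary point $0$ where $f\ne\bar f$.

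So the honest conclusion is the one you put in parentheses: under continuity of $f$, your argument (equivalently the paper's) is complete; the lemma as literally stated, with merely convex $f$, is not true, and no amount of face bookkeeping will rescue it.
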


{\renewenvironment{proof}{\par\noindent{\bf Proof of \cref{proposition:moduli_monotonicity}\ }}{\hfill$\openbox$}
\begin{proof}
  Define
  \[
    \Dcal^N(r) \defeq \setcomp{(\qbf,\hat\qbf) \in \simplex \times \simplex}{\|\qbf - \hat\qbf\|_p \geq r}.
  \]
  For~$r', r \in [0, 2^{1/p}]$ with~$r' \le r$, we observe from the monotonicity~$\Dcal^N(r) \subseteq \Dcal^N(r')$ that~$\omega(r') \le \omega(r)$.
  It is easily seen that~$J(\qbf, \qbf) = 0$ holds for any~$\qbf \in \simplex$
  hence~$\omega(0)=0$.
  Thus, the first assertion follows.

  To show the second assertion, assume the strict convexity of~$f$ on~$\simplex$.
  Let~$r \in (0, 2^{1/p}]$.
  By \cref{lemma:moduli_minimizer}, there exist~$\qbf, \check\qbf \in \simplex$ such that~$\omega(r) = J(\qbf, \check\qbf)$ and~$\|\qbf-\check\qbf\|_p=r$.
  Define a curve $c:[0,1] \to \simplex$ by
  \[
    c(t) \defeq (1-t)\qbf + t\check\qbf \quad \text{for $t \in [0,1]$.}
  \]
  Since~$f \circ c$ is strictly convex on~$[0,1]$, we have the strict inequality
  \[
    \frac{f(c(\tau)) - f(c(0))}{\tau} < \frac{f(c(1)) - f(c(1-\tau))}{\tau}
    \quad \text{for $\tau \in (0, 1/2]$.}
  \]
  Consequently, we conclude 
  \[ 
    \omega((1-2\tau)r) \leq J(c(\tau), c(1-\tau)) < J(c(0),c(1)) = \omega(r)
    \quad \text{for $\tau \in (0, 1/2]$,}
  \]
  that is, the strict monotonicity of~$\omega$ on~$[0, 2^{1/p}]$.
  Conversely, if~$f$ is not strictly convex on~$\simplex$, there exist distinct~$\qbf, \check\qbf \in \simplex$ such that
  \begin{equation*}
    f((1-t)\qbf + t\check\qbf) = (1-t)f(\qbf) + tf(\check\qbf) \quad \text{for } t \in [0, 1].
  \end{equation*}
  This leads to~$J(\qbf, \check\qbf) = 0$.
  Consequently,~$\omega$ is not strictly increasing on~$[0, \|\qbf - \check\qbf\|_p]$.

  To show the third assertion, assume that~$f$ is~$\tilde\kappa$-strongly convex on~$\simplex$ with respect to the~$p$-norm for some~$\tilde\kappa>0$.
  By \cref{lemma:moduli_minimizer}, there exist~$\qbf, \check\qbf \in \simplex$ with~$\omega(r)=J(\qbf,\check\qbf)$ and~$\|\qbf-\check\qbf\|_p=r$.
  Then, the strong convexity of~$f$ with the choice~$t=1/2$ implies
  \[
    \omega(r) = \frac{f(\qbf)+f(\check\qbf)}{2} - f\left(\frac{\qbf+\check\qbf}{2}\right) \ge \frac{\tilde\kappa}{8}\|\qbf-\check\qbf\|_p^2 = \frac{\tilde\kappa}{8}r^2.
  \]
  Conversely, suppose that~$\omega(r)\ge\kappa r^2$ on~$r\in[0,2^{1/p}]$.
  This implies
  \begin{equation}
    \label{equation:supp:modulus_quad_lower_bound}
    J(\qbf,\check\qbf)=
    \frac{f(\qbf)+f(\check\qbf)}{2}-f\left(\frac{\qbf+\check\qbf}{2}\right)
    \ge \omega(\|\qbf-\check\qbf\|_p)
    \ge \kappa\|\qbf-\check\qbf\|_p^2
    \quad \text{for any~$\qbf,\check\qbf \in \simplex$.}
  \end{equation}
Taking $t\in[1/2,1)$,
we have
  \[
    \begin{aligned}
      f((1-t)\qbf+t\check\qbf)
      &= f\left((1-2t)\qbf + 2t\,\frac{\qbf+\check\qbf}{2}\right) \\
      &\le (1-2t)f(\qbf) + 2tf\left(\frac{\qbf+\check\qbf}{2}\right) \\
      &\le (1-2t)f(\qbf) + 2t\left[\frac{f(\qbf)+f(\check\qbf)}{2} - \kappa\|\qbf-\check\qbf\|_p^2\right] \\
      &= (1-t)f(\qbf) + tf(\check\qbf) - 2\kappa t\|\qbf-\check\qbf\|_p^2 \\
      &\le (1-t)f(\qbf) + tf(\check\qbf) - 2\kappa t(1-t)\|\qbf-\check\qbf\|_p^2,
    \end{aligned}
  \]
  where we used \eqref{equation:supp:modulus_quad_lower_bound}.
  Switching the roles of $\qbf$ and $\hat\qbf$ yields the $4\kappa$-strongly convexity of $f$ with respect to the~$p$-norm.

  All in all, the proof of the theorem is achieved.
\end{proof}
}

Despite the simple proof, this will lead to the necessity and sufficiency for a surrogate regret bound being non-vacuous in \cref{section:regret_bounds_subsec}, together with \cref{theorem:regret_lower_bound}.

\subsection{Surrogate regret bounds}
\label{section:regret_bounds_subsec}
Now, we give surrogate regret bounds with respect to the~$p$-norm.
The asymptotic behavior of a surrogate regret bound for a proper loss~$\ellbf$ is essentially governed by the modulus of convexity of (the negative of) its conditional Bayes risk~$-\underline{L}$.
This is an extension of surrogate regret bounds for binary classification \cite[Theorem~6]{Bao:2023} to multiclass classification.
Its proof (shown below) is an immediate generalization from \cite{Bao:2023}.
\begin{restatable}[Surrogate regret bounds]{theorem}{regretbound}
  \label{theorem:regret_lower_bound}
  Let~$\ellbf: \simplex \to [0, \infty]^N$ be a regular proper loss 
  and~$f:\Rbb^N \to (-\infty,\infty]$ a proper convex function defined by \eqref{equation:generator} with its modulus of convexity~$\omega$.
  For~$\qbf, \hat\qbf \in \simplex$, it holds
  \begin{equation}
    \label{equation:regret_lower_bound}
    \omega(\|\qbf - \hat\qbf\|_p) \le \frac{1}{2}R(\qbf, \hat\qbf),
  \end{equation}
  with the equality if~$\qbf=\hat\qbf$.
  If~$\ellbf$ is strictly proper additionally, then the equality of \eqref{equation:regret_lower_bound} holds if and only if~$\qbf=\hat\qbf$.
\end{restatable}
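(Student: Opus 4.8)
The plan is to exhibit, for each fixed pair $\qbf, \hat\qbf \in \simplex$, an explicit admissible competitor in the infimum defining $\omega(\|\qbf - \hat\qbf\|_p)$ — namely the pair $(\qbf, \hat\qbf)$ itself — and then show that its midpoint Jensen gap is already at most $\tfrac12 R(\qbf, \hat\qbf)$. Since $\|\qbf - \hat\qbf\|_p \ge \|\qbf - \hat\qbf\|_p$, \cref{definition:modulus} gives at once
\begin{equation*}
  \omega(\|\qbf - \hat\qbf\|_p) \le J(\qbf, \hat\qbf) = \frac{f(\qbf) + f(\hat\qbf)}{2} - f\!\left(\frac{\qbf + \hat\qbf}{2}\right),
\end{equation*}
so the whole task reduces to proving $J(\qbf, \hat\qbf) \le \tfrac12 R(\qbf, \hat\qbf)$, and the equality assertion becomes trivial: when $\qbf = \hat\qbf$ one has $R(\qbf, \qbf) = L(\qbf,\qbf) - \underline{L}(\qbf) = 0$ while $\omega(0) = 0$ by \cref{proposition:moduli_monotonicity}.

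For the inequality I would set $\bar\qbf \defeq \tfrac12(\qbf + \hat\qbf)$, which lies in $\simplex$ by convexity, and rewrite everything through the Bregman form $R(\cdot,\hat\qbf) = B_{(f,-\ellbf)}(\cdot\,\|\hat\qbf)$ from \eqref{equation:regret}, using $-\ellbf(\hat\qbf)\in\partial f(\hat\qbf)$ from \cref{corollary:entropy_subgradient}. Clearing denominators and cancelling the $f(\qbf)$ terms, the target $J(\qbf,\hat\qbf)\le\tfrac12 R(\qbf,\hat\qbf)$ is seen to be equivalent to
\begin{equation*}
  f(\hat\qbf) + \inpr{-\ellbf(\hat\qbf)}{\bar\qbf - \hat\qbf} \le f(\bar\qbf),
\end{equation*}
where I used $\bar\qbf - \hat\qbf = \tfrac12(\qbf - \hat\qbf)$; but this is exactly the nonnegativity $R(\bar\qbf, \hat\qbf) = f(\bar\qbf) - f(\hat\qbf) - \inpr{-\ellbf(\hat\qbf)}{\bar\qbf - \hat\qbf} \ge 0$, i.e.\ the surrogate regret evaluated at the \emph{midpoint} of $\qbf$ and $\hat\qbf$, which holds because the Bregman divergence is $[0,\infty]$-valued (equivalently, by the subtangent inequality \eqref{equation:subtangent_inequality} for $f$ at $\hat\qbf$ tested against $\bar\qbf \in \simplex$). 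Combining with the previous display yields \eqref{equation:regret_lower_bound}.

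The one delicate point is the extended-real bookkeeping when $\ellbf$ is merely regular, so that $-\ellbf(\hat\qbf)$ may carry $-\infty$ entries (precisely at coordinates $n\notin\supp(\hat\qbf)$ with $\ell_n(\hat\qbf)=\infty$). Here I would first observe that $\inpr{-\ellbf(\hat\qbf)}{\qbf-\hat\qbf}$ always lies in $[-\infty,\infty)$, since each such coordinate contributes $-\infty\cdot q_n$ with $q_n - \hat q_n = q_n \ge 0$. If this inner product equals $-\infty$ then $R(\qbf,\hat\qbf)=+\infty$ and the bound is vacuous because $\omega$ is $[0,\infty)$-valued; otherwise all quantities above are genuine reals, the rescaling $\inpr{-\ellbf(\hat\qbf)}{\bar\qbf-\hat\qbf}=\tfrac12\inpr{-\ellbf(\hat\qbf)}{\qbf-\hat\qbf}$ is legitimate, and the chain of equivalences is literally valid. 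I expect this case split — not any conceptual obstacle — to be the part most in need of care; the geometric content of the proof is simply that the midpoint of two points of $\simplex$ stays in $\simplex$, so the supporting affine function of $f$ at $\hat\qbf$ underestimates $f$ there.
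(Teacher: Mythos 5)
Your proof is correct and takes essentially the same route as the paper: both reduce $\omega(\|\qbf-\hat\qbf\|_p)\le\tfrac12 R(\qbf,\hat\qbf)$ to $J(\qbf,\hat\qbf)\le\tfrac12 B_{(f,-\ellbf)}(\qbf\|\hat\qbf)$ via the definition of $\omega$, and both prove that inequality by applying the subgradient inequality for $-\ellbf(\hat\qbf)\in\partial f(\hat\qbf)$ at the midpoint $\bar\qbf=(\qbf+\hat\qbf)/2$. The paper states the subtangent bound at $\bar\qbf$ and then rearranges to $J\le\tfrac12 B$, whereas you rearrange first and then invoke nonnegativity of $B_{(f,-\ellbf)}(\bar\qbf\|\hat\qbf)$; these are the same argument read in opposite directions. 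Your extra care with the $-\infty$ coordinates of $-\ellbf(\hat\qbf)$ is a minor refinement the paper leaves implicit, and your handling of the equality case matches the paper's.
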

\begin{proof}
  By the definition of~$\omega$ together with \eqref{equation:regret}, it is sufficient to show
  \[
    J(\qbf, \hat\qbf) \leq \frac12 B_{(f,-\ellbf)}(\qbf\|\hat\qbf) 
    \quad \text{for $\qbf, \hat\qbf \in \simplex$.}
  \]
  By \cref{corollary:entropy_subgradient}, we have
  \[
    f\left(\frac{\qbf + \hat\qbf}{2}\right)
    \geq f(\hat\qbf) + \inpr{-\ellbf(\hat\qbf)}{\frac{\qbf + \hat\qbf}{2} - \hat\qbf}
    = f(\hat\qbf) + \frac{1}{2}\inpr{-\ellbf(\hat\qbf)}{\qbf - \hat\qbf},
  \]
  which implies
  \begin{equation*}
    \begin{aligned}
      J(\qbf, \hat\qbf)
      \le \frac{1}{2}\left[f(\qbf) - f(\hat\qbf) - \inpr{-\ellbf(\hat\qbf)}{\qbf - \hat\qbf}\right]
      = \frac{1}{2}B_{(f,-\ellbf)}(\qbf\|\hat\qbf).
    \end{aligned}
  \end{equation*}
  The equality can be seen immediately by choosing~$\hat\qbf = \qbf$.
  If~$\ellbf$ is strictly proper, then~$\omega$ is strictly monotone by \cref{proposition:moduli_monotonicity} and~$f$ is strictly convex, which indicates that the equality of \eqref{equation:regret_lower_bound} yields~$\qbf=\hat\qbf$.
\end{proof}

Let us discuss when a proper loss entails a \emph{non-vacuous} bound, %
which means that~$\|\qbf-\hat\qbf\|_p$ approaches zero whenever the surrogate regret~$R(\qbf,\hat\qbf)$ goes to zero.
If~$\ellbf$ is strictly proper, then~$\omega$ is strictly increasing, which in turn has an inverse function~$\omega^{-1}$ and leads \eqref{equation:regret_lower_bound} to
\begin{equation}
  \label{equation:regret_bound_inv}
  \|\qbf - \hat\qbf\|_p \le \begin{dcases}
    {\omega}^{-1}\left(\frac{1}{2}R(\qbf, \hat\qbf)\right) & \text{if~~$\frac12R(\qbf, \hat\qbf) \le \omega\Big(2^{\frac1p}\Big)$,} \\
    2^{\frac1p} & \text{otherwise.}
  \end{dcases}
\end{equation}
The strict monotonicity of~$\omega^{-1}$ is essential for non-vacuous bounds because we then have~$\rho\downarrow0$ if and only if~$\omega^{-1}(\rho)\downarrow0$.
Otherwise, we cannot always expect that the estimate~$\hat\qbf$ approaches~$\qbf$ even if the suboptimality~$R(\qbf, \hat\qbf)$ vanishes.
By \cref{proposition:savage,proposition:moduli_monotonicity}, the \emph{strict} properness of~$\ellbf$ is necessary and sufficient for the surrogate regret bound~\eqref{equation:regret_lower_bound} being non-vacuous.
This is why strict properness matters.

If~$\ellbf$ is strongly proper, the inequality~\eqref{equation:strongly_proper} implies that the negative Bayes risk~$-\underline{L}$ is strongly convex.
Combining the~$p$-norm bound~\eqref{equation:regret_bound_inv} with \cref{proposition:moduli_monotonicity}, we recover the~$1/2$-regret bound~\eqref{equation:strongly_proper_regret_bound} modulo a constant.

\paragraph*{Comparison: Pinsker's inequality.}
For illustration, let us consider the log loss under the binary case~$N=2$, where we identify~$[q\;1-q] \in \triangle^2$ with~$q\in[0,1]$.
Then, the generator function~$f$ defined by~\eqref{equation:generator} is the negative binary Shannon entropy~$f(q)=q\ln q+(1-q)\ln(1-q)$.
Its modulus of convexity (with~$1$-norm) admits the following form:
\[
  \omega(r) = \frac12\left[\left(1+\frac r2\right)\ln\left(1+\frac r2\right)+\left(1-\frac r2\right)\ln\left(1-\frac r2\right)\right].
\]
Note that this form coincides with the calibration function of the logistic loss \cite[Table~1]{Steinwart:2007}.
The~$p$-norm bound gives
\begin{equation}
  \label{equation:regret_bound_inv_kl}
  2^{\frac1p}|q-\hat q| \le \omega^{-1}\left(\frac12D_{\text{KL}}(q\|\hat q)\right)
  \quad \text{for $q,\hat q\in[0,1]$}.
\end{equation}
Note~$2^{1/p}|q-\hat q| = \|[q\;1-q]^\top - [\hat q\;1-\hat q]^\top\|_p$.
Moreover, we can verify~$\omega(r)\ge r^2/2$, which gives~$|q-\hat q|^2 \lesssim D_{\text{KL}}(q\|\hat q)$, namely, Pinsker's inequality.
The~$p$-norm bound~\eqref{equation:regret_bound_inv} can be viewed as generalizing Pinsker's inequality by allowing other Bregman divergences in the upper bound and the~$p$-norm distance in the lower bound.

\paragraph*{Comparison: surrogate regret bounds for margin-based losses.}
Margin-based classification \cite{Wang2024JMLR}, where a learner acts on~$\Rbb^N$-valued margin instead of a probabilistic estimate~$\hat\qbf\in\simplex$, is commonly used.
Let us consider binary classification based on the binary margin~$z \in \Rbb$.
Given a true probability~$q \in [0,1]$ (identified with~$[q\;1-q]^\top \in \triangle^2$) and margin~$z \in \Rbb$,
the classification performance is evaluated by the (conditional) 0-1 regret
\[
  \text{Reg}_{01}(q, z) \defeq [q\indicator{z\le 0} + (1-q)\indicator{z>0}] - \min\set{q, 1-q},
\]
where~$q$ and~$1-q$ indicates the class probabilities of~$y=1$ and~$y=2$, respectively.

In the binary case~$N=2$, we often use a (symmetric) margin-based losses~$\phi:\Rbb\to[0,\infty]$ as a surrogate loss, which operates on the binary margin~$z\in\Rbb$:
the logistic loss~$\phi_{\text{log}}(z)=\ln(1+\exp(-z))$ and the hinge loss~$\phi_{\text{hinge}}(z)=\max\set{0, 1-z}$ are common examples.
The prediction performances of the binary margin~$z$ for~$y=1$ and~$y=2$ are evaluated by~$\phi(z)$ and~$\phi(-z)$, respectively.
For binary margin-based losses, surrogate regret bounds with respect to~$\text{Reg}_{01}$ have been studied intensively \cite[Theorem~3]{Bartlett:2006}.
Here, we compare the 0-1 regret bounds and the $p$-norm regret bounds (\cref{theorem:regret_lower_bound}).
Let us denote the conditional risk \emph{of the binary margin~$z$} and Bayes risk given the true probability~$q$ by
\[
  L^{\text{mgn}}(q, z) \defeq q\phi(z) + (1-q)\phi(-z), \quad
  \underline{L}^{\text{mgn}}(q) \defeq \inf_{z \in \Rbb}L^{\text{mgn}}(q, z),
\]
respectively, and define~$\psi:[0,1] \to [0,\infty)$ by
\begin{equation}
  \label{equation:psi_transform}
  \psi(r) \defeq \inf\setcomp{L^{\text{mgn}}\left(\frac{1+r}{2}, z\right)}{z\in \Rbb, z\le 0} - \underline{L}^{\text{mgn}}\left(\frac{1+r}{2}\right).
\end{equation}
Then, for~$(q,z)\in[0,1]\times\Rbb$, we have
\begin{equation}
  \label{equation:01_regret_bound}
  \psi(\text{Reg}_{01}(q, z)) \le L^{\text{mgn}}(q, z) - \underline{L}^{\text{mgn}}(q).
\end{equation}
The function~$\psi$ is called~\emph{$\psi$-transform}, and later generalized as a calibration function \cite{Steinwart:2007}.
The~$\psi$-transform and the modulus of convexity~$\omega$ characterize the 0-1 regret bound~\eqref{equation:01_regret_bound} and the $p$-norm bound~\eqref{equation:regret_lower_bound}, respectively, and are closely related with each other.
The modulus of convexity is defined (in \cref{definition:modulus}) by the best possible Jensen gap~$J(\qbf,\hat\qbf)$ such that the true probability~$\qbf$ and estimated probability~$\hat\qbf$ are distant at least~$r$ in the sense of the~$p$-norm.
By contrast, the~$\psi$-transform in \eqref{equation:psi_transform} can be rewritten as follows:
\begin{equation}
  \label{equation:supp:psi_transform}
  \begin{aligned}
    \psi(r) &= \inf\setcomp{L^{\text{mgn}}\left(\frac{1+r}{2}, z\right) - \underline{L}^{\text{mgn}}\left(\frac{1+r}{2}\right)}{z\in\Rbb, z\le0} \\
    &= \inf\setcomp{L^{\text{mgn}}(q, z) - \underline{L}^{\text{mgn}}(q)}{\text{Reg}_{01}(q, z) \ge r},
  \end{aligned}
\end{equation}
which is the best possible surrogate regret~$L^{\text{mgn}}(q,z)-\underline{L}^{\text{mgn}}(q)$ such that the margin prediction~$z$ is suboptimal with respect to the true class probability~$q$ by the level~$r$ at least.
Therefore, both~$\psi$-transform and the modulus of convexity measures the best possible surrogate regret given a true probability and suboptimal prediction by the level~$r$ least.
Interested readers can refer to \cite[\S4.1]{Steinwart:2007} to find more details of \eqref{equation:supp:psi_transform}.

\paragraph*{Comparison: surrogate regret bounds for proper composite losses.}
Let us leave a remark on the existing surrogate regret bounds for proper \emph{composite} losses.
A proper composite loss \cite{Williamson:2016} are the composition of a proper loss~$\ell:\simplex\to[0,\infty]^N$ and an invertible link function~$\lambdabf:\simplex\to\Rbb^N$,~$\ellbf\circ\lambdabf^{-1}:\Rbb^N\to[0,\infty]^N$,
so that the composite loss can operate on a multiclass margin $\zbf\in\Rbb^N$ directly.
The cross-entropy loss is of this type, where~$\ellbf$ is the log loss and~$\lambdabf^{-1}$ is the softmax function:
\[
  \lambda_y(\zbf) = \frac{\exp(z_y)}{\displaystyle\sum_{i\in[N]}\exp(z_i)}.
\]
Prior to this article, a surrogate regret bound similar to \eqref{equation:regret_bound_inv} has been derived for proper composite losses, with the moduli of \emph{continuity} of the conditional risk~$L(\qbf, \cdot)$ \cite[Corollary~3]{Mey:2021}.
While the relationship between the moduli of convexity of~$-\underline{L}$ and the moduli of continuity of~$L(\qbf, \cdot)$ has not been clear,~$-\underline{L}$ suffices for deriving surrogate regret bounds
because a surrogate regret is solely determined by~$\underline{L}$ due to \eqref{equation:regret} and \cref{corollary:entropy_subgradient}.
Therefore, our surrogate regret bounds in \cref{theorem:regret_lower_bound} can be readily applied to proper composite losses.
Moreover, the existing surrogate regret bounds \cite[Corollary 3]{Mey:2021} has been limited to the binary case~$N=2$.
Our \cref{theorem:regret_lower_bound} is more general therein.

\subsection{Relating surrogate regret to downstream tasks}
\label{section:downstream}
The upper bound for the~$p$-norm~\eqref{equation:regret_bound_inv} is useful for many scenarios to assess the predictive performance of plug-in forecasters, i.e., post-processed forecasters based on the estimate~$\hat\qbf$.
Thus, we can regard the~$p$-norm bound as a \emph{versatile} surrogate regret bound across different downstream tasks.
Subsequently, we provide several examples of downstream tasks to support this idea.

\paragraph{Task 1: multiclass classification.}
Let us consider multiclass classification based on the post-process approach.
Given true and estimated probability vectors~$\qbf, \hat\qbf \in \simplex$, respectively, the plug-in forecaster based on the estimate~$\hat\qbf$ is given by~$\hat{y} \in \argmax_{y \in \Ycal} \hat q_y$, where the tie is broken arbitrarily.
Define~$\Lbf \in \Rbb^{N \times N}$ the 0-1 loss matrix by~$L_{ij} \defeq 1-\delta_{ij}$ for each~$(i,j)\in[N]^2$, and~$\Lbf_{y}$ denotes the~$y$-th column vector of~$\Lbf$.
Here, the forecaster's suboptimality in multiclass classification is measured by the (conditional) \emph{0-1 regret}
\begin{align*}
  \mathrm{Reg}_{01}(\qbf, \hat\qbf)
  &\defeq \sum_{n \in \Ycal} q_n(1-\delta_{n\hat{y}}) - \min_{y \in \Ycal} \sum_{n \in \Ycal} q_n(1-\delta_{ny}) \\
  &= \sum_{n \in \Ycal} q_n\left(L_{n\hat y} - \min_{y \in \Ycal} L_{ny}\right) \\
  &= \max_{y \in \Ycal} \sum_{n \in \Ycal} q_n\left(L_{n\hat y} - L_{ny}\right) \\
  &= \max_{y \in \Ycal} \inpr{\qbf}{\Lbf_{\hat{y}} - \Lbf_{y}},
\end{align*}
for~$\qbf,\hat\qbf\in\simplex$.
Let~$p^\ast$ denote the H\"{o}lder conjugate of~$p$.
The 0-1 regret can be bounded as
\[
  \mathrm{Reg}_{01}(\qbf, \hat\qbf)
  \leq \max_{y \in \Ycal} \inpr{\qbf - \hat\qbf}{\Lbf_{\hat{y}} - \Lbf_{y}}
  \leq \|\qbf - \hat\qbf\|_p \max_{y \in \Ycal} \|\Lbf_{\hat{y}} - \Lbf_{y}\|_{p^\ast}
  \leq 2^{1-\frac1p} \|\qbf - \hat\qbf\|_p,
\]
where the first inequality holds because~$\inpr{\hat\qbf}{\Lbf_{\hat{y}} - \Lbf_{y}} \leq 0$ for any~$y \in \Ycal$ attributed to the construction of~$\hat{y}$,
and the second inequality owes to H\"{o}lder's inequality.
Eventually, the 0-1 regret is controlled by the surrogate regret~$R(\qbf, \hat\qbf)$ via \eqref{equation:regret_bound_inv} if~$\ellbf$ is strictly proper,
which relates the estimation quality of~$\hat\qbf$ to the predictive performance of the post-processed forecaster via the~$p$-norm.

In case of binary classification, a closely related surrogate regret bound was presented \cite[Lemma~4]{Menon:2013} (but for class-imbalanced plug-in forecasters),
which has been later used to control the $1$-norm between the estimated and true class probabilities for F-measure optimization \cite{Koyejo:2014}.
Our result allows to generalize them to the multiclass case.

Note, however, that more direct control of the 0-1 regret is possible by
\[
  \Psi(\mathrm{Reg}_{01}(\qbf,\hat\qbf)) \le R(\qbf,\hat\qbf),
  \qquad \text{where~~} \Psi(r) = \underline{L}\left(\frac12\right) - \underline{L}\left(\frac12+r\right),
\]
which can be obtained via the second-order Taylor expansion of proper losses \cite[Corollary~27]{Reid:2011}.
In this case, we may obtain non-vacuous bounds even for non-strictly proper losses.
Our \cref{theorem:regret_lower_bound} does not provide such a tailored bound for the 0-1 regret but ``one-size-fits-all'' bounds for multiple tasks so that we can control the performance of other downstream tasks, not only classification.

\paragraph{Task 2: learning with noisy labels.}
Let us consider multiclass classification with class-conditional label noises:
a true label~$y$ is observed as~$\tilde y$ with probability~$C_{y\tilde y}$ with a row-stochastic noise matrix~$\Cbf \in [0,1]^{N\times N}$.
In this scenario, our access is limited to the noisy target probability vector~$\tilde\qbf = \Cbf^\top\qbf$, through which a noisy estimate~$\hat\qbf$ is obtained.
By following the noise-correction strategy \cite{Zhang:2021}, the plug-in forecaster based on the noisy estimate~$\hat\qbf$ is given by~$\check y \in \argmax_{y \in \Ycal}\check q_y$, where~$\check\qbf \defeq (\Cbf^\top)^{-1}\hat\qbf$ (provided that~$\Cbf$ is invertible).
Under this setup, the 0-1 regret of~$\check\qbf$ given~$\qbf$ is bounded as follows:
\[
  \begin{aligned}
    \mathrm{Reg}_{01}(\qbf, \check\qbf)
    &= \max_{y \in \Ycal} \inpr{\qbf}{\Lbf_{\check y} - \Lbf_{y}}\\
    &\leq \max_{y \in \Ycal} \inpr{\qbf-\check \qbf }{\Lbf_{\check y} - \Lbf_{y}}
    = \max_{y \in \Ycal} \inpr{\tilde\qbf - \hat\qbf}{\Cbf^{-1}(\Lbf_{\check y} - \Lbf_{y})} \\
    &\leq \|\tilde\qbf - \hat\qbf\|_p \max_{y \in \Ycal}\|\Cbf^{-1}(\Lbf_{\check y} - \Lbf_y)\|_{p^\ast},
  \end{aligned}
\]
where the first inequality holds because~$\inpr{\check\qbf}{\Lbf_{\check y} - \Lbf_y} \leq 0$ for any~$y \in \Ycal$ attributed to the construction of~$\check y$,
and the second inequality owes to H\"{o}lder's inequality.
The~$p$-norm~$\|\tilde\qbf - \hat\qbf\|_p$ can be minimized even with access to the noisy observations only, and the~$p$-norm bound~\eqref{equation:regret_bound_inv} controls this by the surrogate regret of a strictly proper loss.
This is also an extension of the previous surrogate regret transfer bounds \cite[Theorem~4]{Zhang:2021} beyond strongly proper losses.

\paragraph{Task 3: bipartite ranking.}
Consider~$N=2$ and identify~$\qbf = [q \; 1-q]^\top \in \triangle^2$ with the instance~$q \in [0,1]$.
Given two instances~$q, q' \in [0,1]$, we are interested in giving estimates~$\hat q, \hat q' \in [0,1]$ that yield a consistent ranking with~$(q,q')$.
In bipartite ranking, we use the estimates~$(\hat q, \hat q')$ directly without any post process.
The (conditional) \emph{ranking regret} \cite{Clemenccon:2008} \cite{Narasimhan:2013} is measured by
\[
  \mathrm{Reg}_\text{rank}(q, q', \hat q, \hat q') \defeq |q - q'| \left[ \indicator{(\hat q - \hat q')(q - q') < 0} + \frac12\indicator{\hat q = \hat q'} \right],
\]
where~$\indicator{A} = 1$ when the predicate~$A$ holds and~$0$ otherwise,
and the first and second terms penalize an inconsistent ranking and tie, respectively.
This can be immediately related to the~$1$-norm \cite{Agarwal:2014}:
\[
  \mathrm{Reg}_\text{rank}(q, q', \hat q, \hat q') \le |q - \hat q| + |q' - \hat q'|,
\]
where the bound~\eqref{equation:regret_bound_inv} can be further applied.
Thus, the ranking regret is controlled by the surrogate regret.%
\footnote{
  While we consider the plug-in approach to bipartite ranking, a number of studies have considered the pairwise ranking approach \cite{Agarwal2005JMLR} \cite{Kotlowski2011ICML}, taking the difference of two margin predictions optimized with a margin-based loss function.
  Interestingly, the surrogate regret bound unavoidably becomes vacuous when we use a restricted hypothesis space, as shown recently \cite{Mao2023ICML}.
}

\paragraph{Other benefits.}
In addition to the above examples, one can easily relate the $p$-norm and downstream tasks such as binary classification with generalized performance criteria \cite[(9)]{Kotlowski:2016}, which we omit here.
Another benefit of the $p$-norm bound~\eqref{equation:regret_bound_inv} is that it relates a possibly non-metric~$R$
to the $p$-norm.

To conclude this section, we raise attention to the kinship between moduli of convexity and the known devices such as calibration functions \cite{Bartlett:2006} \cite{Steinwart:2007} \cite{Osokin:2017} \cite{Bao:2020} \cite{Bao:2022}, comparison inequalities \cite{Mroueh:2012} \cite{Ciliberto:2020}, and Fisher consistency bounds \cite{Awasthi:2022:ICML} \cite{Awasthi:2022:NeurIPS} \cite{Mao:2023}.
In spite of the relevance, moduli of convexity are different in that these devices have been tailored for a specific target loss of each downstream task, whereas moduli are concerned with the $p$-norm.
For recent developments, see \cite{Cortes2025NeurIPS}---the author group has extended to $\Hcal$-consistency bounds accounting for hypothesis spaces. Interested readers may consult their prolific body of work on this topic.%
\footnote{We are grateful to a reviewer for alerting us to this remarkably recent body of work.}

\section{Lower bounds of surrogate regret order}
\label{section:regret_order}
We move on to the next main result: the surrogate regret order via the modulus of convexity cannot go beyond the square root.
To this end, we first review the Simonenko order function and the strong convexity used to establish the main result, and then show the main result.
In this section, let~$f: \simplex \to \Rbb$ be a convex function unless otherwise stated.

\subsection{Power evaluation of moduli}
To analyze how fast the surrogate regret bound~\eqref{equation:regret_lower_bound} can be, we analyze the behavior of the modulus~$\omega$ in terms of power functions.
To this end, we introduce the order of~$\omega$ below, which is well-defined since~$\omega(r) > 0$ for~$r \in (0, 2^{1/p}]$ from \cref{proposition:moduli_monotonicity}.
\begin{definition}[{Simonenko order function \cite{Simonenko:1964}}]
  \label{definition:simonenko_order}
  Let~$f:\simplex \to \Rbb$ be a strictly convex function
  and~$\omega:[0,2^{1/p}]\to[0,\infty)$ be the modulus of convexity of $f$.
  The \emph{Simonenko order function}~$\sigma: (0, 2^{1/p}] \to [0, \infty]$ (associated with~$\omega$) is defined by 
  \begin{equation*}
    \sigma(r) \defeq \frac{rD^-\omega(r)}{\omega(r)} \quad \text{for }r \in \big(0,2^{\frac1p}\big],
    \text{~~where~~}
    D^-\omega(r) \defeq
    \limsup_{\epsilon \downarrow 0} \frac{\omega(r) - \omega(r - \epsilon)}{\epsilon}.
  \end{equation*}
\end{definition}
The quantity~$D^-\omega$ is called the \emph{upper left Dini derivative} of~$\omega$ at~$r$.
If~$\omega$ is differentiable at~$r$, then~$D^-\omega(r) = \omega'(r)$ holds.
The Simonenko order function~$\sigma$ evaluates the order of~$\omega$.
Note that the following result holds for general continuous functions beyond moduli of convexity, but we restrict ourselves to moduli of convexity for brevity.
\begin{restatable}[Power evaluations of moduli]{proposition}{polynomialrate}
  \label{proposition:polynomial_rate}
  Let~$f: \simplex \to \Rbb$ be a strictly convex function.
  For a fixed~$r_0 \in (0, 2^{1/p}]$, define~$s, S \in[0,\infty]$ by
  \[
    s \defeq \inf_{r \in (0, r_0]} \sigma(r),
    \quad
    S \defeq \sup_{r \in (0, r_0]} \sigma(r), 
  \]
  and assume $S<\infty$.
  Then, the function~$r \mapsto \omega(r)r^{-s}$ is non-decreasing on~$(0,r_0)$ and 
the function~$r \mapsto \omega(r)r^{-S}$ is non-increasing on~$(0,r_0)$.
  Moreover, the following inequalities hold for any~$r \in [0,r_0]$:
  \[
    \left[\frac{\omega(r_0)}{r_0^{S}}\right]r^{S}
    \leq \omega(r)
    \leq \left[\frac{\omega(r_0)}{r_0^{s}}\right]r^{s}.
  \]
\end{restatable}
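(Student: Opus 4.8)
The plan is to recognize Proposition~\ref{proposition:polynomial_rate} as an instance of the classical fact that a function whose logarithmic derivative is pinched between two constants is comparable to a power function. Concretely, I would work with the auxiliary functions $g_s(r) \defeq \omega(r)r^{-s}$ and $g_S(r) \defeq \omega(r)r^{-S}$ on $(0, r_0]$ and show $g_s$ is non-decreasing and $g_S$ is non-increasing; the final two-sided inequality then follows immediately by comparing $g_s(r) \le g_s(r_0)$ and $g_S(r) \ge g_S(r_0)$ and rearranging (the endpoint $r = 0$ is handled by $\omega(0) = 0$ from \cref{proposition:moduli_monotonicity}, noting $s \ge 0$ so $r^s \to 0$ as well, or more carefully since $s$ could be $0$ we just check the inequality is trivial there).

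For the monotonicity of $g_s$, the heuristic is that $\frac{d}{dr}\log g_s(r) = \frac{\omega'(r)}{\omega(r)} - \frac{s}{r} = \frac{1}{r}\bigl(\sigma(r) - s\bigr) \ge 0$ by definition of $s$ as the infimum of $\sigma$; similarly $\frac{d}{dr}\log g_S(r) \le 0$. The hard part will be that $\omega$ need not be differentiable, so I cannot literally differentiate $\log g_s$. The rigorous route is to use the upper left Dini derivative $D^-\omega$ that appears in \cref{definition:simonenko_order}. I would first establish that $\omega$ is locally Lipschitz on $(0, r_0]$ --- this follows because $\omega$ is the infimal-type function built from the convex function $f$; in fact, being non-decreasing (\cref{proposition:moduli_monotonicity}) and, one can check, convex or at least locally Lipschitz on the open interval, $\omega$ is absolutely continuous on every compact subinterval of $(0, r_0]$. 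Absolute continuity is what lets me recover $\omega$ from an integral of its a.e.\ derivative, or more robustly, lets me apply a Dini-derivative version of the mean value inequality: if $h$ is continuous on $[a,b]$ and $D^-h \ge 0$ everywhere (or $\ge 0$ except on a countable set, with $h$ continuous), then $h$ is non-decreasing. I would apply this to $h(r) = \log g_s(r)$, having first checked that $D^-(\log g_s)(r) = \frac{D^-\omega(r)}{\omega(r)} - \frac{s}{r} \ge \frac{s - s}{r} = 0$, using that $\omega(r) > 0$ on $(0, r_0]$ (again from \cref{proposition:moduli_monotonicity}) and the standard calculus of Dini derivatives under multiplication by a smooth decreasing factor $r^{-s}$.

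So the key steps, in order, are: (i) record that $\omega > 0$ on $(0, r_0]$ and $\omega(0) = 0$, so $\sigma$ and the $g$'s are well-defined; (ii) prove $\omega$ is locally Lipschitz (hence a.e.\ differentiable and absolutely continuous) on $(0, r_0]$; (iii) compute $D^-(\log g_s)$ and $D^-(\log g_S)$ in terms of $\sigma$ and bound them using the definitions of $s$ and $S$; (iv) invoke the Dini-derivative monotonicity criterion to conclude $g_s \uparrow$ and $g_S \downarrow$; (v) rearrange $g_s(r) \le g_s(r_0)$ and $g_S(r) \ge g_S(r_0)$ into the stated sandwich, and dispose of $r = 0$ separately. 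I expect step (ii) --- pinning down enough regularity of $\omega$ to legitimize the Dini-derivative manipulations --- to be the main obstacle; the cleanest way is probably to show $\omega$ is itself convex on $[0, 2^{1/p}]$ (convexity of $f$ along segments, combined with \cref{lemma:moduli_minimizer} which produces minimizing pairs at exact distance $r$, should give this), since a finite convex function on an interval is automatically locally Lipschitz on the interior and its left derivative exists everywhere, making $D^-\omega$ the genuine left derivative and the whole argument textbook-standard.
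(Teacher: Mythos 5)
Your informal outline --- interpret $\sigma$ as the logarithmic derivative of $\omega$ and integrate --- matches the paper's heuristic proof (the one given inline under a differentiability assumption), and the final rearrangement of $g_s(r) \le g_s(r_0)$, $g_S(r)\ge g_S(r_0)$ into the two-sided power bound is fine. The problem is step~(ii): you propose to establish convexity or local Lipschitz continuity of $\omega$ and then run a Dini-derivative mean-value argument, but neither regularity claim holds in general.

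The claim that $\omega$ is convex is false, and the paper's own \cref{table:examples} gives a counterexample. For the max-power function $f(\qbf)=\max_{n\in[N]}|q_n-\tfrac1N|^\alpha$ with $\alpha\in(1,2)$ and $(N,p)=(2,1)$, the modulus is $\omega(r)=\tfrac12|r-\tfrac12|^\alpha - 2^{-\alpha}|r-1|^\alpha + 2^{-1-\alpha}$; as $r$ approaches $1$, the second term contributes $-2^{-\alpha}\alpha(\alpha-1)|1-r|^{\alpha-2}\to-\infty$ to $\omega''(r)$, so $\omega$ is concave near $r=1$. Nor does $S<\infty$ give local Lipschitz continuity: the hypothesis only gives the pointwise bound $D^-\omega(r)=\sigma(r)\omega(r)/r<\infty$ on a single one-sided Dini derivative. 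Together with monotonicity this forces left-continuity of $\omega$ at every $r$, but it says nothing about right-continuity, and the classical monotonicity criterion you invoke (``$D^-h\ge0$ plus $h$ continuous implies $h$ non-decreasing'') requires two-sided continuity of $h=\log g_s$, which you cannot supply.

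The paper's rigorous proof (\cref{section:proof:power_evaluation}) routes around the regularity question by treating the two monotonicities with different elementary arguments. For $r\mapsto\omega(r)r^{-S}$ non-increasing it works directly with finite differences: given $r$ and $\delta>0$, the definition of $D^-\omega$ supplies a window of left step-sizes $u$ for which $\frac{r}{\omega(r)}\cdot\frac{\omega(r)-\omega(r-u)}{u}\leq\sigma(r)+\tfrac\delta2$, and the auxiliary function $g(t)=S+\tfrac\delta2+t^{-1}[(1-t)^{S+\delta}-1]$, which is negative for small $t$, then yields $(r-u)^{S+\delta}/\omega(r-u)<r^{S+\delta}/\omega(r)$; finally let $\delta\downarrow0$. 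No integration and no continuity of $\omega$ appear. For $r\mapsto\omega(r)r^{-s}$ non-decreasing it uses the \emph{one-sided} fundamental theorem of calculus for monotone functions, $\int_{r-u}^r D^-\ln\omega(r')\,\rd{r'}\leq\ln\omega(r)-\ln\omega(r-u)$, which is valid for any non-decreasing function without absolute continuity, combined with the pointwise bound $D^-\ln\omega(r')=D^-\omega(r')/\omega(r')\geq s/r'$ (which does use --- and the paper first establishes --- the left-continuity of $\omega$). To repair your plan, abandon the attempt to prove regularity of $\omega$ and substitute these two ingredients.
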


Roughly speaking, \cref{proposition:polynomial_rate} provides us $r^S\lesssim\omega(r)\lesssim r^s$ as $r\downarrow0$.
This order evaluation is useful when analyzing the asymptotic convergence rate of the surrogate regret bound~\eqref{equation:regret_lower_bound}.
\cref{proposition:polynomial_rate} is easily proved when $\omega$ is differentiable.
Indeed, if $\omega$ is differentiable,
then it holds for $t\in (0,r_0)$ that 
  \[
\frac{s}{t}=  \frac{1}{t}\inf_{r \in (0, r_0]} \sigma(r)
    \le \frac{\omega'(t)}{\omega(t)}
    \le \frac{1}{t}\sup_{r \in (0, r_0]} \sigma(r)
=\frac{S}{t}    
  \]
by the definition~$s$ and~$S$,
and integrating it on~$[r,r']\subset [0, r_0]$ gives 
  \[
s\ln \frac{r'}{r}
\le \int_r^{r'}\frac{\omega'(t)}{\omega(t)}\rd{t}
    = \ln\frac{\omega(r')}{\omega(r)}
\leq
S\ln\frac{r'}{r},
\]
which is equivalent to the desired monotonicity.
Thus, the~$p$-norm upper bound~\eqref{equation:regret_bound_inv} is controlled by the rate~$\omega^{-1}(\rho) = O(\rho^{1/S})$ as~$\rho \downarrow 0$.
Since we are interested in the behavior of~$\|\qbf-\hat\qbf\|_p$ when~$\hat\qbf$ is close to the minimizer of~$R(\qbf,\cdot)$, we focus on the asymptotic behavior of the Simonenko order function as~$r \downarrow 0$.

The complete proof of \cref{proposition:moduli_monotonicity} for non-differentiable~$\omega$ is deferred to \cref{section:proof:power_evaluation}.

\subsection{Strong convexity and its relation to moduli}
For the asymptotic analysis of~$\sigma$, we leverage strong convexity \cite[\S2.1.3]{Nesterov2013}.
Herein, we define the strong convexity parameter for a %
convex function~$f: \simplex \to \Rbb$ and~$t \in (0, 1)$ by
\[
  \begin{aligned}
  \kappa^{f,t}_p  &\defeq \ \inf\setcomp{
    \frac{2\left[ (1-t) f(\qbf) + tf(\check\qbf) -f((1-t)\qbf+t\check\qbf)\right]}{t(1-t)\|\qbf-\check\qbf\|_p^2}
  }{
    \text{distinct } \qbf, \check\qbf \in \simplex
    }, \\
    \kappa^{f}_{p} &\defeq \inf_{t \in (0,1)} \kappa^{f,t}_{p}.
  \end{aligned}
\]
We observe from the convexity of~$f$ that~$\kappa^f_p\in [0,\infty)$ and
\[
  f((1-t)\qbf+t\check\qbf) \leq (1-t) f(\qbf)+ tf(\check\qbf)-\frac{\kappa^f_p}{2}t(1-t)\|\qbf-\check\qbf\|_p^2
  \quad \text{for $\qbf, \check\qbf \in \simplex$ and $t \in (0,1)$.}
\]
When~$p=2$ and~$f$ is twice continuously differentiable over~$\simplex_+$, $\Hess f$ (the Hessian of $f$) satisfies~$\Hess f-\kappa_2^f\Ibf_N\succeq \Obf$ \cite[Theorem~2.1.10]{Nesterov2013}, where~$\Ibf_N$ is the identity matrix.%
\footnote{
Remark that the strong convexity parameter depends on the underlying set where we take the infimum.
Let us define the strong convexity parameter on~$\Rbb^N$ by replacing~$\simplex$ with~$\Rbb^N$ and write~$\bar\kappa^{f,t}_p$ instead of $\kappa^{f,t}_p$ for each $t\in (0,1)$.
By~$\simplex \subset \Rbb^N$, we observe that
$
  \bar\kappa^{f,t}_p \leq \kappa^{f,t}_p \text{~~for a function $f: \Rbb^N \to (-\infty, \infty]$,}
$
where the equality does not necessarily hold.
}

To calculate $\kappa^{f}_p$, we only need to know $\kappa^{f,1/2}_p$.
The proof is deferred to \cref{section:proofs}.
\begin{restatable}[Strong convexity parameter at midpoint]{proposition}{modulusmidpoint}
  \label{proposition:sc_modulus_midpoint}
  For a continuous convex function~$f: \simplex \to \Rbb$, it holds~$\kappa^f_p=\kappa^{f,1/2}_p$.
\end{restatable}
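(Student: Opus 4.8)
The plan is to show $\kappa^f_p = \kappa^{f,1/2}_p$ by proving the two inequalities. One direction, $\kappa^f_p \le \kappa^{f,1/2}_p$, is immediate from the definition of $\kappa^f_p$ as an infimum over $t \in (0,1)$, so the content is in the reverse inequality $\kappa^{f,t}_p \ge \kappa^{f,1/2}_p$ for every $t \in (0,1)$.

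First I would fix distinct $\qbf, \check\qbf \in \simplex$ and $t \in (0,1)$ and consider the one-dimensional convex function $g \defeq f \circ c$ where $c(u) \defeq (1-u)\qbf + u\check\qbf$ on $[0,1]$; note $\|c(u) - c(u')\|_p = |u - u'| \cdot \|\qbf - \check\qbf\|_p$. The quantity whose infimum defines $\kappa^{f,t}_p$ becomes, along this segment, $\frac{2[(1-t)g(0) + tg(1) - g(t)]}{t(1-t)\|\qbf - \check\qbf\|_p^2}$, i.e.\ the strong convexity deficit of $g$ at the point $t$. So the statement reduces to: if a convex function $g$ on $[0,1]$ satisfies the midpoint-strong-convexity inequality $g(\tfrac{u+u'}{2}) \le \tfrac12 g(u) + \tfrac12 g(u') - \tfrac{m}{8}(u-u')^2$ for all $u, u'$ in an interval (which is what $\kappa^{f,1/2}_p \ge m$ gives, after restricting to sub-segments), then $g$ satisfies the full strong convexity inequality $g((1-t)u + tu') \le (1-t)g(u) + tg(u') - \tfrac{m}{2}t(1-t)(u-u')^2$ for all $t$. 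Equivalently, $g(v) - \tfrac{m}{2}v^2$ is convex given that it is midpoint-convex; and a midpoint-convex function that is also continuous (here $g$ is continuous because $f$ is assumed continuous) is convex. This is the classical Jensen/Sierpiński argument.

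The cleanest route, then: let $m \defeq \kappa^{f,1/2}_p$. Apply the midpoint inequality \emph{not just to the endpoints $\qbf, \check\qbf$} but to every pair of points on the segment $[\qbf, \check\qbf]$ — this is legitimate because such points lie in $\simplex$ and their $p$-distance is the corresponding fraction of $\|\qbf - \check\qbf\|_p$. This shows $h(v) \defeq g(v) - \tfrac{m}{2}\|\qbf - \check\qbf\|_p^2 v^2$ is midpoint-convex on $[0,1]$; since $h$ is continuous, it is convex; convexity of $h$ applied at $v = t$ with endpoints $0$ and $1$ yields exactly $(1-t)g(0) + tg(1) - g(t) \ge \tfrac{m}{2}\|\qbf-\check\qbf\|_p^2 \, t(1-t)$, i.e.\ the ratio defining $\kappa^{f,t}_p$ is $\ge m$. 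Taking the infimum over distinct $\qbf, \check\qbf$ gives $\kappa^{f,t}_p \ge \kappa^{f,1/2}_p$, and then the infimum over $t$ gives $\kappa^f_p \ge \kappa^{f,1/2}_p$, completing the proof.

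The main obstacle — though it is mild — is being careful that the midpoint inequality in the definition of $\kappa^{f,1/2}_p$ really does transfer to all pairs of interior points of the segment, and that the argument $h$ midpoint-convex plus continuous $\Rightarrow$ convex is invoked correctly (one must either cite the Sierpiński theorem or recall the short dyadic-rationals-then-density proof, essentially the same recursion as in the proof of \cref{proposition:affine}). A secondary bookkeeping point is handling the degenerate case $\qbf = \check\qbf$, which is excluded from the infimum anyway, and confirming that $m < \infty$ (true since $\kappa^f_p \in [0,\infty)$ was already noted), so that $h$ is a genuine real-valued function.
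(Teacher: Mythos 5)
Your proof is correct and takes essentially the same approach as the paper: both rely on the dyadic-rationals-plus-continuity argument to upgrade the midpoint inequality to the full strong convexity inequality along each segment. The paper carries out the dyadic induction inline for the specific quantity $(1-t_{i,j})f(c(0))+t_{i,j}f(c(1))-f(c(t_{i,j}))$, whereas you package the same mechanism more cleanly by subtracting $\tfrac{m}{2}\|\qbf-\check\qbf\|_p^2\,v^2$ to reduce to ``midpoint-convex $+$ continuous $\Rightarrow$ convex.'' One small slip: $\kappa^f_p \le \kappa^{f,1/2}_p$, so finiteness of $\kappa^f_p$ does \emph{not} imply finiteness of $m=\kappa^{f,1/2}_p$; the correct argument is that $\kappa^{f,1/2}_p$ is an infimum of a nonempty set of finite values (take any single distinct pair $\qbf\ne\check\qbf$, for which $J(\qbf,\check\qbf)<\infty$ by continuity of $f$ on the compact $\simplex$ and $\|\qbf-\check\qbf\|_p>0$), hence finite.
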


The strong convexity parameter~$\kappa^f_p$ is equivalent up to constant across different~$p\ge 1$, which can be seen as follows.
For the midpoint Jensen gap~$J$ (defined in \cref{section:moduli}), since we have
\begin{equation}
  \label{equation:sc_modulus_local_midpoint}
  \kappa^{f,\frac12}_p = \inf\setcomp{
    \frac{8J(\qbf,\check\qbf)}{\|\qbf-\check\qbf\|_p^2}
  }{
     \text{distinct $\qbf, \check\qbf \in \simplex$}
  },
\end{equation}
the following bound holds:
\begin{align*}
  \inf\setcomp{\frac{\|\qbf\|_p^2}{\|\qbf\|_2^2}}{\qbf \in \simplex}
  \leq
  \frac{\kappa^{f,\frac12}_2}{\kappa^{f,\frac12}_p}
  \leq \sup\setcomp{\frac{\|\qbf\|_p^2}{\|\qbf\|_2^2}}{\qbf \in \simplex}.
\end{align*}
In particular, if~$N=2$, we have
\[
  \|\qbf-\check\qbf\|_p
  = \left[(q_1-\check q_1)^p+(q_2-\check q_2)^p\right]^{\frac1p}
  = \left[(q_1-\check q_1)^p+(q_1-\check q_1)^p\right]^{\frac1p}
  = 2^{\frac1p}|q_1-\check q_1|
\]
for~$\qbf, \check\qbf \in \triangle^2$, 
and hence,~$2\kappa^{f,1/2}_2 = 2^{2/p}\kappa^{f,1/2}_p$.
Therefore,~$\kappa_p^f$ remains the same up to constant regardless of the choice of~$p\geq1$.

We will use the following representation of $\kappa_p^{f,1/2} = \kappa_p^f$ (given by \cref{proposition:sc_modulus_midpoint}) repeatedly later, which follows by definition of the modulus of convexity (defined in \cref{definition:modulus}) and~\eqref{equation:sc_modulus_local_midpoint}:
\begin{equation}
  \label{equation:sc_modulus_local}
  \kappa^{f}_p
  = \inf\setcomp{\frac{8\omega(r)}{r^2}}{r \in (0,2^{\frac1p}]}.
\end{equation}

\subsection{Asymptotic lower bound}
The asymptotic behavior of the Simonenko order~$\sigma(r)$ as~$r \downarrow 0$ is controlled by the strong convexity parameter~$\kappa^f_p$.
We consider a ``local'' version of the strong convexity parameter.
\begin{definition}[Local strong convexity modulus]
  \label{definition:local_modulus}
  For a convex function~$f:\simplex \to \Rbb$,
  define~$K_p^f:(0,2^{1/p}] \to \Rbb$ by
  \[
    K^{f}_{p}(r) \defeq \frac{8\omega(r)}{r^2}.
  \]
\end{definition}
This quantity is defined based on the alternative expression of the strong convexity parameter~$\kappa^f_p$ in \eqref{equation:sc_modulus_local}.
From the relationship~\eqref{equation:sc_modulus_local},~$K^{f}_{p}(r) \geq \kappa^{f}_p$ always holds on~$r \in (0,2^{1/p}]$.
We show that~$K^f_p$ is lower semi-continuous and left-continuous (but not continuous in general without additional assumptions) in \cref{section:property_local_modulus}.

Now, we analyze the asymptotic behavior of the moduli of convexity~$\sigma(r)$ at~$r \downarrow 0$ when the Bregman generator~$f$ is continuous on~$\simplex$, which is our second main result.
Therein, we assume the continuity of~$f$ to prevent~$f$ from being discontinuous on the~$\simplex\setminus\simplex_+$.
\begin{restatable}[Lower bound of order]{theorem}{regretorder}
  \label{theorem:regret_order}
  Let $f:\simplex \to \Rbb$ be a continuous strictly convex function.
  Assume one of the following two conditions:
  \begin{enumerate}
    \renewcommand{\theenumi}{C\arabic{enumi}}
    \renewcommand{\labelenumi}{\textup{(\theenumi)}}

    \item \label{theorem:condition:modulus_positive}
    $\kappa^f_p > 0$.
    
    \item \label{theorem:condition:modulus_conti}
    $K^f_p$ is continuous on $(0,r_0]$ for some $r_0 \in (0,2^{1/p}]$ and $K^f_p$ converges as $r \downarrow 0$.
  \end{enumerate}
  Then,
  \begin{equation}
    \label{equation:order_limsup}
    \limsup_{r \downarrow 0} \sigma(r) \geq 2.
  \end{equation}
  Moreover, if we assume both conditions, then 
  \begin{equation}
    \label{equation:order_liminf}
    \liminf_{r \downarrow 0} \sigma(r) \geq 2.
  \end{equation}
\end{restatable}
Let us discuss the applicability of \cref{theorem:regret_order}.
First,~$f$ is assumed to be strictly convex, which means that we deal with a strictly proper loss through the strictly convex negative Bayes risk~$f=-\underline{L}$ in \eqref{equation:generator}.
In \cref{theorem:regret_order}, we additionally require either \eqref{theorem:condition:modulus_positive} of \eqref{theorem:condition:modulus_conti}.
The condition \eqref{theorem:condition:modulus_positive} assumes nothing else but the strong convexity of~$f$.
In other words, \eqref{theorem:condition:modulus_positive} assumes that the underlying proper loss is strongly proper.
Indeed, the strong convexity parameter~$\kappa_p^f > 0$ is equivalent to the modulus~$\kappa$ defining strongly proper losses in \eqref{equation:strongly_proper}.
It is more interesting when \eqref{theorem:condition:modulus_positive} does not hold but \eqref{theorem:condition:modulus_conti} holds, where the underlying loss is strictly proper but no longer strongly proper.
The continuity assumption of~$K_p^f$ is mild enough to cover many reasonable examples of the negative Bayes risk~$f$, as we will see in \cref{section:examples}.

It follows from \cref{proposition:polynomial_rate} and \cref{theorem:regret_order} that the~$p$-norm bound~\eqref{equation:regret_bound_inv} is controlled by the rate of~$\omega^{-1}(\rho)$ cannot be faster than~$O(\rho^{1/2})$ for a strictly proper~$\ellbf$ satisfying either \eqref{theorem:condition:modulus_positive} or \eqref{theorem:condition:modulus_conti}.
To see this, we invoke \cref{proposition:polynomial_rate} to observe that for a fixed~$r_0 \in (0, 2^{1/p}]$,
\[
  \omega^{-1}(\rho) \leq r_0 \omega(r_0)^{-\frac{1}{S}} \cdot \rho^{\frac{1}{S}}
  \quad \text{for any $r \in [0, r_0]$ such that $\rho = \omega(r)$.}
\]
If the bound \eqref{equation:order_limsup} holds, then we have
\[
  2 \leq \limsup_{r \downarrow 0} \sigma(r) \leq \sup_{r \in (0, r_0]} \sigma(r) = S,
\]
which implies~$\rho^{1/S} \geq \rho^{1/2}$ (for~$\rho < 1$).
Thus, we discern the optimal rate~$\omega^{-1}(\rho) = O(\rho^{1/2})$ for~$\rho \in [0, 1]$.
This assures that strongly proper losses asymptotically achieve the optimal rate~$O(\rho^{1/2})$ as seen in \eqref{equation:strongly_proper_regret_bound}.
Moreover, the optimal rate~$\omega^{-1}(\rho)=O(\rho^{1/2})$ penetrates into a broad family of strictly proper losses.

To prove \cref{theorem:regret_order}, we leverage the following lemma to locally control~$\sigma(r)$, which is proven in \cref{section:proofs}.
\begin{restatable}{lemma}{moduluseval}
  \label{lemma:modulus_evaluation}
  Let $f:\simplex \to \Rbb$ be a continuous convex function.
  Then, for any $\qbf, \check\qbf \in \simplex$ and $r \in (0, 2^{1/p}]$,
  \begin{align*}
    \liminf_{r \downarrow 0} K^f_p(r) = \kappa^f_p, \quad
    J(\qbf,\check\qbf) \geq \frac{\kappa^f_p}{8}\|\qbf-\check\qbf\|_p^2, \quad \text{and} \quad
    D^-\omega(r) \geq \frac{\kappa^f_p}{4}r.
  \end{align*}
\end{restatable}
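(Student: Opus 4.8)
The plan is to handle the three assertions separately, in the order \emph{(ii), then (i), then (iii)}: claim (ii) is a one-line rewriting of the definition and it feeds the easy half of (i), while (iii) only repackages the sub-chord computation already carried out in \eqref{equation:local_moduli_bound}.

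For \emph{(ii)}, when $\qbf=\check\qbf$ both sides vanish, so assume $\qbf\ne\check\qbf$. By \cref{proposition:sc_modulus_midpoint} together with the midpoint formula for the strong-convexity parameter displayed just before \eqref{equation:sc_modulus_local}, one has $\kappa^f_p=\kappa^{f,1/2}_p=\inf\{8J(\qbf',\check\qbf')/\|\qbf'-\check\qbf'\|_p^2\}$, so $8J(\qbf,\check\qbf)/\|\qbf-\check\qbf\|_p^2\ge\kappa^f_p$, which is the claim after rearranging. Feeding this into the definition of $\omega$ gives $\omega(r)\ge\frac{\kappa^f_p}{8}r^2$, i.e.\ $K^f_p(r)\ge\kappa^f_p$ for every $r$; this already yields $\liminf_{r\downarrow0}K^f_p(r)\ge\kappa^f_p$.

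The substantive step is the reverse inequality in \emph{(i)}, and the plan is to prove an elementary dyadic decomposition of the midpoint Jensen gap. Writing $c(t)\defeq(1-t)\qbf+t\check\qbf$ for distinct $\qbf,\check\qbf\in\simplex$, a direct expansion in terms of the five values $f(c(0)),f(c(\tfrac14)),f(c(\tfrac12)),f(c(\tfrac34)),f(c(1))$ shows
\[
  J(\qbf,\check\qbf)=J\!\left(c(0),c(\tfrac12)\right)+J\!\left(c(\tfrac12),c(1)\right)+2\,J\!\left(c(\tfrac14),c(\tfrac34)\right),
\]
using only $c(\tfrac14)=\tfrac12\bigl(c(0)+c(\tfrac12)\bigr)$, $c(\tfrac34)=\tfrac12\bigl(c(\tfrac12)+c(1)\bigr)$ and $c(\tfrac12)=\tfrac12\bigl(c(\tfrac14)+c(\tfrac34)\bigr)$. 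All three pairs on the right have $\|\cdot\|_p$-distance $\tfrac12\|\qbf-\check\qbf\|_p$, so dividing by $\|\qbf-\check\qbf\|_p^2$ exhibits $8J(\qbf,\check\qbf)/\|\qbf-\check\qbf\|_p^2$ as a convex combination, with weights $\tfrac14,\tfrac14,\tfrac12$, of the three ratios $8J(\cdot,\cdot)/(\tfrac12\|\qbf-\check\qbf\|_p)^2$. Hence at least one sub-pair, at half the distance, has a normalized Jensen gap no larger than that of $(\qbf,\check\qbf)$. Iterating this, for each $k$ I obtain a pair at distance $r_k\defeq2^{-k}\|\qbf-\check\qbf\|_p$ whose normalized gap is still $\le 8J(\qbf,\check\qbf)/\|\qbf-\check\qbf\|_p^2$; since $\omega(r_k)$ is an infimum that includes this pair, $K^f_p(r_k)=8\omega(r_k)/r_k^2\le 8J(\qbf,\check\qbf)/\|\qbf-\check\qbf\|_p^2$. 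Letting $k\to\infty$ and then taking the infimum over distinct $\qbf,\check\qbf$ (invoking \cref{proposition:sc_modulus_midpoint} once more) gives $\liminf_{r\downarrow0}K^f_p(r)\le\kappa^f_p$, which closes (i).

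For \emph{(iii)}, fix $r\in(0,2^{1/p}]$ and, by \cref{lemma:moduli_minimizer}, choose $\qbf,\check\qbf$ with $\|\qbf-\check\qbf\|_p=r$ and $\omega(r)=J(\qbf,\check\qbf)$; set $c(t)\defeq(1-t)\qbf+t\check\qbf$. For $\tau\in(0,\tfrac12)$ the points $c(\tau),c(1-\tau)$ share the midpoint $c(\tfrac12)$ and lie at distance $(1-2\tau)r$, so $\omega((1-2\tau)r)\le J(c(\tau),c(1-\tau))$; applying the strong-convexity inequality $f((1-s)\qbf+s\check\qbf)\le(1-s)f(\qbf)+sf(\check\qbf)-\frac{\kappa^f_p}{2}s(1-s)r^2$ at $s=\tau$ and $s=1-\tau$ and averaging (exactly as in \eqref{equation:local_moduli_bound}) gives $J(c(\tau),c(1-\tau))\le\omega(r)-\frac{\kappa^f_p}{2}\tau(1-\tau)r^2$. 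Substituting $\epsilon=2\tau r$ turns this into $\omega(r)-\omega(r-\epsilon)\ge\frac{\kappa^f_p}{4}\epsilon r\bigl(1-\tfrac{\epsilon}{2r}\bigr)$; dividing by $\epsilon$ and letting $\epsilon\downarrow0$ produces $D^-\omega(r)\ge\frac{\kappa^f_p}{4}r$.

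The only non-routine ingredient is the dyadic identity used in (i): everything there hinges on it, and once it is available the ``self-improving'' localization of the normalized Jensen gap — passing from one pair to a pair at half the distance with no-larger normalized gap, then iterating to scale $0$ — is automatic; parts (ii) and (iii) are then bookkeeping around facts already established in the excerpt.
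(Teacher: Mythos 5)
Your proof is correct, and parts (ii) and (iii) are essentially the paper's own bookkeeping (part (iii) is verbatim the paper's argument via \cref{lemma:moduli_minimizer} and the computation \eqref{equation:local_moduli_bound}; part (ii) is obtained a shade more directly from the infimum characterization of $\kappa^{f,1/2}_p$ together with \cref{proposition:sc_modulus_midpoint}, where the paper instead chains $J\ge\omega=\tfrac{K^f_p}{8}r^2\ge\tfrac{\kappa^f_p}{8}r^2$). The genuine difference is in the reverse inequality $\liminf_{r\downarrow 0}K^f_p(r)\le\kappa^f_p$. The paper argues by cases: if the infimum of $K^f_p$ is attained at some $r_\ast$, the bound \eqref{equation:local_moduli_bound} propagates the value $\kappa^f_p$ down to all $r\in(0,r_\ast]$; if it is not attained, an infimizing sequence must tend to $0$, which tacitly uses the lower semi-continuity of $K^f_p$ established in \cref{lemma:modulus_local_conti}. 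You instead prove the exact dyadic identity $J(\qbf,\check\qbf)=J(c(0),c(\tfrac12))+J(c(\tfrac12),c(1))+2J(c(\tfrac14),c(\tfrac34))$ (which I checked: both sides equal $\tfrac{f(c(0))+f(c(1))}{2}-f(c(\tfrac12))$), so that the normalized gap $8J/\|\cdot\|_p^2$ of any pair is a convex combination, with weights $\tfrac14,\tfrac14,\tfrac12$, of the normalized gaps of three half-scale sub-pairs; iterating and using $K^f_p(r_k)\le 8J/r_k^2$ for the selected pair at distance $r_k=2^{-k}\|\qbf-\check\qbf\|_p$ localizes the bound at scale $0$. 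This buys a more elementary and self-contained argument (no appeal to semi-continuity of $K^f_p$, no case split), at the cost of introducing and verifying the decomposition identity; both routes still need \cref{proposition:sc_modulus_midpoint}, hence the continuity hypothesis on $f$.
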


{
\renewenvironment{proof}{\par\noindent{\bf Proof of \cref{theorem:regret_order}\ }}{\hfill$\openbox$}
\begin{proof}
We observe from the strict convexity of~$f$ and \cref{proposition:moduli_monotonicity} that~$K^f_p > 0$ on~$r \in (0,2^{1/p}]$.
By assuming \eqref{theorem:condition:modulus_positive} only, it follows from \cref{lemma:modulus_evaluation} that 
\begin{equation}
  \label{equation:proof:lower_bound_order_function}
  \limsup_{r \downarrow 0} \sigma(r)
  = \limsup_{r\downarrow 0} \frac{rD^-\omega(r)}{\omega(r)}
  \geq \limsup_{r \downarrow 0} \frac{r\cdot\dfrac{\kappa^f_p}{4}r}{\dfrac{K^f_p(r)}{8}r^2}
  = \limsup_{r \downarrow 0} \frac{2\kappa^f_p}{K^f_p(r)}
  = 2.
\end{equation}
In addition, assume \eqref{theorem:condition:modulus_conti} together.
Then, in the similar manner to \cref{equation:proof:lower_bound_order_function}, we have
\begin{align*}
  \liminf_{r \downarrow 0} \sigma(r)
  \geq \liminf_{r \downarrow 0} \frac{2\kappa^f_p}{K^f_p(r)}
  = \lim_{r \downarrow 0} \frac{2\kappa^f_p}{K^f_p(r)}
  = 2.
\end{align*}

Next, assume \eqref{theorem:condition:modulus_conti} only, and~$\kappa^f_p > 0$ does not hold.
In this case, \cref{lemma:modulus_evaluation} indicates that~$K^f_p(r) \downarrow 0$ as~$r \downarrow 0$, from which with the intermediate value theorem,
we can inductively define~$(r_j)_{j \in \Nbb} \subset (0,r_0]$ by 
\[
  r_j \defeq \inf\setcomp{r \in (0,2^{1/p}]}{K^f_p(r) \geq \frac12 K^f_p(r_{j-1})}.
\]
Then, $(r_j)_{j \in \Nbb}$ converges to $0$ because $K^f_p > 0$ always holds on~$r \in (0,2^{1/p}]$ and hence~$K^f_p(r) = 0$ if and only if~$r = 0$.
We see %
\[
  K^f_p(r) < \frac{1}{2}K^f_p(r_{j-1}) = K^f_p(r_j)
  \quad \text{for $r\in (0,r_j]$.}
\]
This yields
\begin{align*}
  D^-\omega(r_j)
  = \limsup_{\epsilon \downarrow 0} \frac{\omega(r_j) - \omega(r_j-\epsilon)}{\epsilon}
  &= \limsup_{\epsilon \downarrow 0} \frac{\dfrac{K^f_p(r_j)}{8}r_j^2-\dfrac{K^f_p(r_j-\epsilon)}{8}(r_j-\epsilon)^2}{\epsilon}\\
  &\geq \limsup_{\epsilon \downarrow 0} \frac{\dfrac{K^f_p(r_j)}{8}r_j^2-\dfrac{K^f_p(r_j)}{8}(r_j-\epsilon)^2}{\epsilon}
  = \frac{K^f_p(r_j)}{4}r_j,
\end{align*}
which implies 
\begin{align*}
  \limsup_{r \downarrow 0}\sigma(r)
  \geq \limsup_{j \to \infty} \sigma(r_j)
  =\limsup_{j \to \infty} \frac{r_jD^- \omega(r_j)}{\omega(r_j)}
  \geq \limsup_{j \to \infty} \frac{r_j \cdot \dfrac{K^f_p(r_j)}{4}r_j}{\dfrac{K^f_p(r_j)}{8}r_j^2}
  = 2.
\end{align*}

Thus the proof of \cref{theorem:regret_order} is completed.
\end{proof}
}

\paragraph*{Lower bound of $\omega^{-1}(\rho)$.}
\Cref{proposition:polynomial_rate} also implies that for some $r_0 \in (0, 2^{1/p}]$,
\[
  \omega^{-1}(\rho) \geq [r_0\omega(r_0)^{-\frac{1}{s}}] \cdot \rho^{\frac{1}{s}}
  \quad \text{for $r \in [0, r_0]$ such that $\rho = \omega(r)$.}
\]
If the bound \eqref{equation:order_liminf} holds with the \emph{strict} inequality, then we can choose $r_0$ to satisfy
\[
  \frac{2 + \varsigma}{2} \leq \inf_{r \in (0, r_0]} \sigma(r) = s
  \quad \text{for~~$\varsigma \defeq \liminf_{r \downarrow 0} \sigma(r) > 2$,}
\]
which implies $\rho^{1/s} \geq \rho^{2/(2+\varsigma)} > \rho^{1/2}$ (for~$\rho < 1$).
Thus,~$\omega^{-1}$ admits the lower bound~$\Omega(\rho^{1/2})$ for~$\rho \in [0, 1]$.

\paragraph*{Comparison with the known lower bound.}
A relevant lower bound $\omega^{-1}(\rho) = \Omega(\rho^{1/2})$ has been shown previously for margin-based losses \cite[Theorem~4]{Frongillo:2021} \cite[Theorem~4.2]{Mao2024NeurIPS}.
These lower bounds assume that a loss is strongly convex and has a locally Lipschitz gradient \cite[Assumption~1]{Frongillo:2021} or a loss satisfies a relaxed version of the strict convexity \cite[Theorem~4.2]{Mao2024NeurIPS}.
These conditions are assumed under the loss differentiability, whereas both our \eqref{theorem:condition:modulus_positive} and \eqref{theorem:condition:modulus_conti} do not need the differentiability of $\ellbf$.
Ergo, the differentiability assumption is lifted to show the optimality of $\omega^{-1}(\rho) = O(\rho^{1/2})$.
To show~$\omega^{-1}(\rho) = \Omega(\rho^{1/2})$ in our case, \eqref{theorem:condition:modulus_positive} and \eqref{theorem:condition:modulus_conti} coupled with the existence of the limit of~$K^f_p(r)$ as $r \downarrow 0$ suffice.

As a side note, convolutional Fenchel--Young losses have been recently proposed to circumvent these commonly known squared-root lower bounds for convex smooth surrogate losses~\cite{Cao2025}.
Their surrogate regret bounds are derived for the multiclass 0-1 loss (and its generalization), which is clearly different from our the $p$-norm distance.
Thus, it does not contradict with the square-root lower bounds of \cref{theorem:regret_lower_bound}.

\begin{figure}[t]
  \centering
  \includegraphics[width=0.5\textwidth]{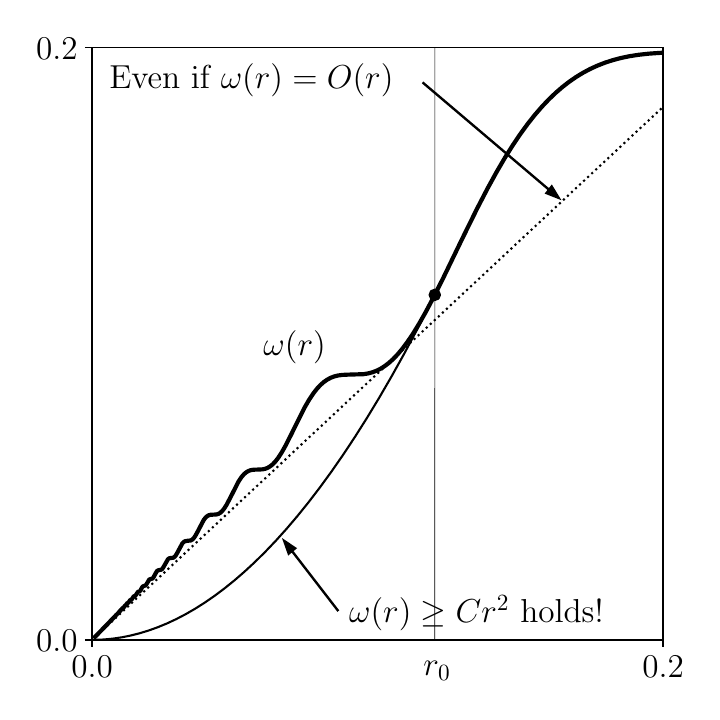}
  \caption{
    Illustration of $\omega(r) = r\sin\left(\frac1r\right) - \mathrm{Ci}\left(\frac1r\right) + r$.
    This~$\omega$ is asymptotically linear, but can only have a slower finite-range bound~$\omega(r)\gtrsim r^2$ than the linear rate.
  }
  \label{figure:counterexample}
\end{figure}

\begin{remark}
  Our analysis with the Simonenko order evaluates the order of~$\omega$ by a power function in the form of~$r^S \lesssim \omega(r) \lesssim r^s$ for~$r \in [0,2^{1/p}]$.
  This is an evaluation for a \emph{finite} range, which is more than an \emph{asymptotic} evaluation.
  Despite its subtlety, it often matters, as seen in the following example:
  \[
    \omega(r) = r\sin\left(\frac1r\right) - \mathrm{Ci}\left(\frac1r\right) + r \quad \text{for $r > 0$},
    \quad \text{where} \quad \mathrm{Ci}(z) \defeq -\int_z^\infty t^{-1}\cos(t)\rd{t}.
  \]
  This~$\omega$ is monotonically increasing and satisfies~$\omega(r) = O(r)$ as~$r\downarrow0$.
  However, when it comes to the finite evaluation, we cannot go faster than~$\omega(r) \gtrsim r^2$
  just because this~$\omega$ satisfies \eqref{theorem:condition:modulus_conti} and \cref{theorem:regret_order} implies~$\limsup_{r \downarrow 0}\sigma(r) \geq 2$.
  Thus, the finite evaluation gives a better characterization when we assess the convergence rate of finitely large surrogate regret.
  See \cref{figure:counterexample} to better understand the above example.
\end{remark}

\section{Examples}
\label{section:examples}
We overview a couple of proper losses.
Since one can generate a proper loss~$\ellbf$ from a convex function~$f=-\underline{L}$ thanks to the Savage representation~\eqref{equation:savage}, we show examples in terms of the corresponding convex functions in \cref{table:examples}.
To facilitate closed-form solutions of~$\omega$, we restrict ourselves to $N=2$ and $p=1$.
\Cref{table:examples} lists several convex functions with their moduli, whose derivations are given previously \cite{Bao:2023}.

\begin{table}[t]
  \caption{
    Examples of a convex function $f$.
    For $\omega$, we show the expressions with $(N,p)=(2,1)$.
  }
  \label{table:examples}
  \centering
  \small
  \renewcommand{\arraystretch}{1.6}
  \begin{tabular}{llllc}
    \toprule
    {} & $f(\qbf)$ & $\alpha$ & Modulus $\omega(r)$ & Loss $\ellbf$ \\
    \midrule
    Shannon & $\inpr{\qbf}{\ln\qbf}$ & --- & \normalsize{$\frac{1}{2}[(1+\frac r2)\ln(1+\frac r2) + (1-\frac r2)\ln(1-\frac r2)]$} & \footnotesize{Log}
    \\[4pt]
    \specialrule{0.1pt}{0pt}{0pt}
    \multirow{2}{*}{$\|\cdot\|_\alpha^2$} & \multirow{2}{*}{\Large $\frac{\|\qbf\|_\alpha^2-1}2$} & \footnotesize{$1 < \alpha < 2$} & $\frac{1}{8}\left\|\vecarr{1+r/2}{1-r/2}\right\|_\alpha^2 - 2^{2/\alpha-3}$ & \multirow{2}{*}{\footnotesize\renewcommand{\arraystretch}{0.95}\begin{tabular}{c}Brier\\($\alpha=2$)\end{tabular}}
    \\[6pt]
    {} & {} & \footnotesize{$2 \leq \alpha$} & $\frac{1}{4}\left\|\vecarr{r/2}{1-r/2}\right\|_\alpha^2 - \frac{1}{8}\left\|\vecarr{r/2}{2-r/2}\right\|_\alpha^2 + \frac{1}{4}$ & {}
    \\[4pt]
    \specialrule{0.1pt}{0pt}{0pt}
    \multirow{3}{*}{$\|\cdot\|_\alpha$} & \multirow{3}{*}{\Large $\frac{\|\qbf\|_\alpha-1}{\alpha-1}$} & \footnotesize{$1 < \alpha \leq 3/2$} & $\frac{1}{2(\alpha-1)}\left(\left\|\vecarr{1+r/2}{1-r/2}\right\|_\alpha - 2^{1/\alpha}\right)$ & \multirow{3}{*}{\footnotesize\renewcommand{\arraystretch}{0.95}\begin{tabular}{l}Pseudo-\\spherical\end{tabular}} \\[4pt]
    {} & {} & \footnotesize{$3/2 < \alpha < 2$} & (No closed-form in general) & {} \\[4pt]
    {} & {} & \footnotesize{$2 \leq \alpha$} & $\frac{1}{2(\alpha-1)}\left(\left\|\vecarr{r/2}{1-r/2}\right\|_\alpha - \left\|\vecarr{r/2}{2-r/2}\right\|_\alpha + 1\right)$ & {}
    \\[4pt]
    \specialrule{0.1pt}{0pt}{0pt}
    \multirow{3}{*}{Tsallis} & \multirow{3}{*}{\Large $\frac{\|\qbf\|_\alpha^\alpha-1}{\alpha-1}$} & \footnotesize{$1<\alpha<2$} & \multirow{2}{*}{\normalsize $\frac{1}{2^\alpha(\alpha-1)}\left(\left\|\vecarr{1+r/2}{1-r/2}\right\|_\alpha^\alpha - 2\right)$} & \multirow{3}{*}{\normalsize{$\alpha$-log}} \\
    {} & {} & \footnotesize{$3<\alpha$} & {} & {} \\[4pt]
    {} & {} & \footnotesize{$2 \leq \alpha \leq 3$} & \normalsize{$\frac{1}{2(\alpha-1)}(\left\|\vecarr{r/2}{1-r/2}\right\|_\alpha^\alpha - \frac{1}{2^{\alpha-1}}\left\|\vecarr{r/2}{2-r/2}\right\|_\alpha^\alpha + 1)$} & {}
    \\
    \bottomrule
  \end{tabular}
\end{table}
\begin{figure*}[t]
  \centering
  \resizebox{0.48\textwidth}{!}{\input{figures/mod1}} \hfill
  \resizebox{0.435\textwidth}{!}{\input{figures/mod2}}
  \caption{Numerical plots of $K^f_p(r) = 8\omega(r)/r^2$ for each $f$ in \cref{table:examples}.}
  \label{figure:modulus}
\end{figure*}

\paragraph*{Log loss.}
If we choose
\[
f(\qbf)=\inpr{\qbf}{\ln\qbf}=\sum_{n\in[N]}q_n\ln q_n,
\]
where~$\ln$ is applied in the element-wise manner,
we can generate the log loss~$\ell_y(\qbf)=-\ln q_y$.
In the binary case, we have
\[
  \ell_1(\qbf)=-\ln q \quad \text{and} \quad
  \ell_2(\qbf)=-\ln(1-q)
  \quad \text{for $\qbf=[q\;1-q]^\top\in\triangle^2$}.
\]
As we saw in \cref{section:savage} and \cref{section:regret_bounds_subsec}, the log loss is associated with the Kullback--Leibler divergence, and its surrogate regret bound slightly improves Pinsker's inequality yet is asymptotically equivalent.

To see the asymptotic speed of the~$1$-norm bound~\eqref{equation:regret_bound_inv} for the log loss, we investigate the power evaluation of~$\omega^{-1}(\rho) \lesssim \rho^{1/S}$ through \cref{theorem:regret_order},
where~$S$ is given in \cref{proposition:polynomial_rate}.
We can see that both \eqref{theorem:condition:modulus_positive} and \eqref{theorem:condition:modulus_conti} are satisfied in this case.
Indeed, the continuity of~$K_p^f$ is obvious, and
\[
  \begin{aligned}
    \lim_{r\downarrow0} K_p^f(r)
    &= 4\lim_{r\downarrow0} \frac{(1+\frac r2)\ln(1+\frac r2)+(1-\frac r2)\ln(1-\frac r2)}{r^2} \\
    &= \lim_{r\downarrow0} \left(\frac{1}{2+r}+\frac{1}{2-r}\right) \\
    &= 1 < \infty,
  \end{aligned}
\]
where L'H{\^o}pital's rule is used twice.
See also \cref{figure:modulus} for the illustration of~$K_p^f$.
Thus, \cref{theorem:regret_order} yields~$\limsup_{r\downarrow0}\sigma(r) \ge 2$ and~$S\ge 2$,
which indicates that the polynomial rate of~$\omega^{-1}(\rho)$ cannot be faster than~$\rho^{1/2}$.
The asymptotic lower bound of~$\sigma$ provided here is indeed tight, as we see~$\sigma(r)\to 2$ for~$r\downarrow0$ \cite[Appendix~B.1]{Bao:2023}.
All in all, we asymptotically have the~$1$-norm bound
\[
  |q - \hat q| \lesssim \sqrt{D_{\text{KL}}(q\|\hat q)}
  \quad \text{for $q,\hat q \in [0,1]$,}
\]
recovering Pinsker's inequality.

\paragraph*{Squared norms.}
Consider the squared~$\alpha$-norms for~$\alpha > 1$:
\[
  f(\qbf)=\frac{\|\qbf\|_\alpha^2-1}{2} = \frac12\Biggl(\sum_{n\in[N]}q_n^\alpha\Biggr)^{\frac2\alpha}-\frac12.
\]
By the Savage representation~\eqref{equation:savage}, we can generate proper losses:
\[
  \ell_y(\qbf) = - \|\qbf\|_\alpha^{2-\alpha}q_y^{\alpha-1} + \frac{1+\|\qbf\|_\alpha^2}{2}.
\]
By plugging in~$\alpha=2$, the Brier score in \cref{section:savage} is recovered.

To see the asymptotic speed of the~$p$-norm bound~\eqref{equation:regret_bound_inv} for~$(N,p)=(2,1)$,
we take the limit of~$K_p^f$ similarly to the log-loss case.
By using the closed form of~$\omega(r)$ provided in \cref{table:examples},
when~$\alpha\in(1,2)$,
\[
  \begin{aligned}
    \lim_{r\downarrow0} K_p^f(r)
    &= \lim_{r\downarrow0}\frac{[(1+\frac r2)^\alpha+(1-\frac r2)^\alpha]^{\frac2\alpha}-4^{\frac1\alpha}}{r^2} \\
    &= \frac14\lim_{r\downarrow0}\left\{(2-\alpha)[(1+r)^\alpha+(1-r)^\alpha]^{\frac2\alpha-2}[(1+r)^{\alpha-1}-(1-r)^{\alpha-1}]^2 \right.\\
        &\phantom{2\lim_{r\downarrow0}} \quad \left. + (\alpha-1)[(1+r)^\alpha+(1-r)^\alpha]^{\frac2\alpha-1}[(1+r)^{\alpha-2}+(1-r)^{\alpha-2}]\right\} \\
    &= (\alpha-1)2^{\frac2\alpha-2} < \infty,
  \end{aligned}
\]
where L'H{\^o}pital's rule is used twice.
When~$\alpha\ge 2$, we can similarly show the existence of the limit of~$K_p^f$ as~$r\downarrow0$.
Thus, these losses satisfy both 
\eqref{theorem:condition:modulus_positive} and \eqref{theorem:condition:modulus_conti} of \cref{theorem:regret_order}, which indicates that~$\omega^{-1}(\rho)$ cannot be faster than~$O(\rho^{1/2})$.
Across different~$\alpha>1$, we have~$\sigma(r)\to 2$ as~$r\downarrow0$ \cite[Figure~4]{Bao:2023}, and thus the provided asymptotic lower bound of~$\sigma$ is tight.

\paragraph*{Pseudo-spherical losses.}
Consider the~$\alpha$-norms for~$\alpha>1$:
\[
  f(\qbf) = \frac{\|\qbf\|_\alpha-1}{\alpha-1} = \frac{1}{\alpha-1}\Biggl[\Biggl(\sum_{n\in[N]}q_n^\alpha\Biggr)^{\frac1\alpha} - 1\Biggr],
\]
which has been sometimes used as the~$\alpha$-norm information measure \cite{Boekee1980}.
By the Savage representation~\eqref{equation:savage}, we can generate proper losses:
\[
  \ell_y(\qbf) = \frac{1}{\alpha-1}\Biggl(1-\frac{q_y^{\alpha-1}}{\|\qbf\|_\alpha^{\alpha-1}}\Biggr),
\]
which is called the \emph{pseudo-spherical losses} \cite{Good1971}.
By plugging in~$\alpha=2$, the spherical loss is recovered.
The associated Bregman divergence is
\[
  R(\qbf,\hat\qbf) = \frac{1}{\alpha-1}\Biggl(\|\qbf\|_\alpha - \frac{\inpr{\qbf}{\hat\qbf^{\alpha-1}}}{\|\hat\qbf\|_\alpha^{\alpha-1}}\Biggr) = \frac{1}{\alpha-1}\Biggl(\|\qbf\|_\alpha - \frac{1}{\|\hat\qbf\|_\alpha^{\alpha-1}}\sum_{n\in[N]}q_n\hat q_n^{\alpha-1}\Biggr).
\]
Note that~$\ln[\|\qbf\|_\alpha-(\alpha-1)R(\qbf,\hat\qbf)]/(\alpha-1)$ can be identified with the (cross-entropy of) \emph{gamma-divergences}, which is commonly used in robust regression \cite{Fujisawa2008} and inference with unnormalized models \cite{Kanamori2015}.
With the limit~$\alpha\downarrow1$, the pseudo-spherical loss approaches the log loss, and the associated Bregman divergence approaches to the Kullback--Leibler divergence, correspondingly.

We illustrate the case~$N=2$ and~$\alpha\in(1,3/2]\cup[2,\infty)$ since the modulus~$\omega$ can be written analytically herein (shown in \cref{table:examples}).
When~$1<\alpha\le3/2$, by invoking L'H{\^o}pital's rule twice,
\[
  \begin{aligned}
    \lim_{r\downarrow0}K_p^f(r)
    &= \frac{4}{\alpha-1}\lim_{r\downarrow0}\frac{\left[\left(1+\frac r2\right)^\alpha+\left(1-\frac r2\right)^\alpha\right]^{\frac1\alpha}-2^{\frac1\alpha}}{r^2} \\
    &= \lim_{r\downarrow0}\left\{\frac12\left[\left(1+\frac r2\right)^{\alpha} + \left(1-\frac r2\right)^{\alpha}\right]^{\frac1\alpha-1} \left[\left(1+\frac r2\right)^{\alpha-2} + \left(1-\frac r2\right)^{\alpha-2}\right] \right. \\
      &\phantom{=\lim_{r\downarrow0}} \quad \left. - \frac1\alpha \left[\left(1+\frac r2\right)^{\alpha} + \left(1-\frac r2\right)^{\alpha}\right]^{\frac1\alpha-2} \left[\left(1+\frac r2\right)^{\alpha-1} - \left(1-\frac r2\right)^{\alpha-1}\right] \right\} \\
    &= 2^{\frac1\alpha-1} \\
    &< \infty.
  \end{aligned}
\]
When~$\alpha\ge2$, by invoking L'H{\^o}pital's rule twice,
\[
  \begin{aligned}
    &\lim_{r\downarrow0}K_p^f(r) \\
    &= \frac4{\alpha-1}\lim_{r\downarrow0}\frac{\left[\left(\frac r2\right)^\alpha+\left(1-\frac r2\right)^\alpha\right]^{\frac1\alpha} - \left[\left(\frac r2\right)^\alpha+\left(2-\frac r2\right)^\alpha\right]^{\frac1\alpha} + 1}{r^2} \\
    &= \frac12\lim_{r\downarrow0}\left\{-\left[\left(\frac r2\right)^\alpha+\left(1-\frac r2\right)^\alpha\right]^{\frac{1-2\alpha}{\alpha}}\left[\left(\frac r2\right)^{\alpha-1}-\left(1-\frac r2\right)^{\alpha-1}\right]^2 \right. \\
      &\phantom{= 2\lim_{r\downarrow0}}\quad + \left[\left(\frac r2\right)^\alpha+\left(1- \frac r2\right)^\alpha\right]^{\frac{1-\alpha}{\alpha}}\left[\left(\frac r2\right)^{\alpha-2}+\left(1- \frac r2\right)^{\alpha-2}\right] \\
      &\phantom{= 2\lim_{r\downarrow0}}\quad +\left[\left(\frac r2\right)^\alpha+\left(2-\frac r2\right)^\alpha\right]^{\frac{1-2\alpha}{\alpha}}\left[\left(\frac r2\right)^{\alpha-1}-\left(2- \frac r2\right)^{\alpha-1}\right]^2 \\
      &\phantom{= 2\lim_{r\downarrow0}}\left.\quad - \left[\left(\frac r2\right)^\alpha+\left(2- \frac r2\right)^\alpha\right]^{\frac{1-\alpha}{\alpha}}\left[\left(\frac r2\right)^{\alpha-2}+\left(2-\frac r2\right)^{\alpha-2}\right] \right\} \\
    &= \begin{cases}
      \frac14 & \text{if $\alpha = 2$} \\
      0 & \text{if $\alpha > 2$}
    \end{cases} \\
    &< \infty.
  \end{aligned}
\]
Thus, these losses satisfy \eqref{theorem:condition:modulus_conti} of \cref{theorem:regret_order}, which indicates that~$\omega^{-1}(\rho)$ cannot be faster than~$O(\rho^{1/2})$.
Compared with the log loss and the squared norms, the pseudo-spherical losses are more interesting examples for us because~$K_p^f(r)$ is no longer always positive.
Indeed, \eqref{theorem:condition:modulus_positive} of \cref{theorem:regret_order} is satisfied when~$\alpha\in(1,3/2]\cup\set{2}$ but not satisfied when~$\alpha>2$.
The previous $O(\rho^{1/2})$ lower bounds typically require the local strong convexity \cite{Frongillo:2021} \cite{Mao2024NeurIPS}, which is similar to \eqref{theorem:condition:modulus_conti}.
Therefore, our \cref{theorem:regret_order} slightly lifted the assumptions, requiring \eqref{theorem:condition:modulus_conti} solely.
See \cref{figure:modulus} to confirm that~$K_p^f$ is indeed asymptotically vanishing with the case~$\alpha=2.5$, for which \eqref{theorem:condition:modulus_positive} no longer holds.

\paragraph*{Tsallis losses.}
Consider the negative Tsallis~$\alpha$-entropy
\[
  f(\qbf) = \frac{\|\qbf\|_\alpha^\alpha-1}{\alpha-1} = \frac{1}{\alpha-1}\Biggl(\sum_{n\in[N]}q_n^\alpha - 1\Biggr)
\]
as a convex potential, for~$\alpha>1$.
The Tsallis entropies generalize the Shannon entropy for non-extensive systems \cite{Tsallis1988}, and recovers the Shannon entropy at the limit~$\alpha\downarrow1$.
By the Savage representation~\eqref{equation:savage}, we can generate proper losses:
\[
  \ell_y(\qbf) = -\frac{\alpha q_y^{\alpha-1}-1}{\alpha-1} + \|\qbf\|_\alpha^\alpha,
\]
which recovers the log loss at the limit~$\alpha\downarrow1$.
We call them the~\emph{$\alpha$-log loss} for convenience.
Note that this loss is slightly different from the~$\alpha$-loss~$\ell_y(\qbf) = -\alpha(q_y^{1-1/\alpha}-1)/(\alpha-1)$ \cite{Sypherd2019ISIT}.
Indeed, the~$\alpha$-log loss is proper by its construction, while the~$\alpha$-loss is known to be improper \cite{Sypherd2022ICML}; despite that both of them approach the log loss at the same limit.
The associated Bregman divergence to the~$\alpha$-log loss is
\[
  R(\qbf,\hat\qbf) = \frac{\|\qbf\|_\alpha^\alpha - \alpha\inpr{\qbf}{\hat\qbf^{\alpha-1}} + (\alpha-1)\|\hat\qbf\|_\alpha^\alpha}{\alpha-1},
\]
which is the \emph{Tsallis divergence} \cite{Dawid2007AISM}, and also corresponds to \emph{density power divergence} (or the beta-divergence) \cite{Basu1998} up to constant, used in robust statistics.
The Tsallis divergence interpolates the Kullback--Leibler divergence and the squared~$2$-norm distance at the limits of~$\alpha\to1$ and~$\alpha\to2$, respectively.
Some literature opts another definition of the Tsallis divergence, defined by replacing~$\ln$ in the Kullback--Leibler divergence with the~$\alpha$-logarithmic function \cite{Amid2019AISTATS}---precisely, this another definition should be distinguished as the~$t$-divergence \cite{Ding2011NeurIPS}.

To see the asymptotic speed of the~$p$-norm bound~\eqref{equation:regret_bound_inv} for~$(N,p)=(2,1)$, we take the limit of~$K_p^f$.
When~$\alpha\in(1,2)\cap(3,\infty)$, the explicit form of the modulus~$\omega$ in \cref{table:examples} yields
\[
  \begin{aligned}
    \lim_{r\downarrow0}K_p^f(r)
    &= \frac{8}{2^\alpha(\alpha-1)}\lim_{r\downarrow0}\frac{(1+\frac r2)^\alpha+(1-\frac r2)^\alpha-2}{r^2} \\
    &= \frac{2\alpha}{2^\alpha}\lim_{r\downarrow0}\frac{(1+\frac r2)^{\alpha-2}+(1-\frac r2)^{\alpha-2}}{2} \\
    &= \alpha2^{1-\alpha} < \infty,
  \end{aligned}
\]
where the L'H{\^o}pital's rule is invoked twice.
When~$2\le\alpha\le3$,
\[
  \begin{aligned}
    \lim_{r\downarrow0}K_p^f(r)
    &= \frac{4}{\alpha-1}\lim_{r\downarrow0}\frac{\left[\left(\frac r2\right)^\alpha+\left(1-\frac r2\right)^\alpha\right]-2^{1-\alpha}\left[\left(\frac r2\right)^\alpha+\left(2-\frac r2\right)^\alpha\right]+1}{r^2} \\
    &= \frac12\alpha\lim_{r\downarrow0}\left\{\left[\left(\frac r2\right)^{\alpha-2}+\left(1-\frac r2\right)^{\alpha-2}\right]-2^{1-\alpha}\left[\left(\frac r2\right)^{\alpha-2}+\left(2-\frac r2\right)^{\alpha-2}\right]\right\} \\
    &= \frac\alpha 4 < \infty,
  \end{aligned}
\]
where the L'H{\^o}pital's rule is invoked twice.
In either case, these losses satisfy both \eqref{theorem:condition:modulus_positive} and \eqref{theorem:condition:modulus_conti} of \cref{theorem:regret_order}, which indicates that~$\omega^{-1}(\rho)$ cannot be faster than~$O(\rho^{1/2})$.

\paragraph*{Non-differentiable generator.}
While all the above examples are generated by differentiable convex generator~$f$, our \cref{theorem:regret_order} is applicable even to non-differentiable~$f$, relaxing the previous lower bounds on surrogate regret bounds \cite{Frongillo:2021} \cite{Mao2024NeurIPS}.
To demonstrate its full capacity, we artificially consider the following non-differentiable convex function:
\[
  f(\qbf) = \max_{n\in[N]}\left(q_n-\frac23\right)^2-\frac19 = \left(\min_{n\in[N]}q_n-\frac23\right)^2 - \frac49,
\]
which reduces to
\[
  f(q) = \begin{cases}
    \left(\frac23-q\right)^2 - \frac49 & \text{if $q \in [0,\frac12]$} \\
    \left(q-\frac13\right)^2 - \frac49 & \text{if $q \in (\frac12,1]$}
  \end{cases}
\]
for~$(N,p)=(2,1)$.
This is non-differentiable at~$q=1/2$.
For~$(N,p)=(2,1)$, we can generate the binary loss by Savage representation~\eqref{equation:savage} as follows:
for~$y=1$,
\[
  \ell(q) = \begin{cases}
    q^2 - 2q + \frac43 & \text{if $q \in [0,\frac12]$} \\
    q^2 - 2q + 1 & \text{if $q \in (\frac12,1]$}
  \end{cases}
  ,
\]
and~$\ell(1-q)$ for~$y=2$,
which are discontinuous at~$q=1/2$ yet strictly proper due to the strict convexity of~$f$.
For this example,~$\omega(r)=r^2/4$ holds for~$r\in[0,1/2]$, and hence
\[
  \lim_{r\downarrow0}K_p^f(r) = \lim_{r\downarrow0} \frac{8\omega(r)}{r^2} = 2 < \infty.
\]
Thus, \eqref{theorem:condition:modulus_conti} is satisfied.

\paragraph*{When $N \geq 3$.}
Though deriving a closed form of~$\omega$ for general~$N>2$ is challenging,
we can delineate~$\omega$ for the negative Shannon entropy with~$p=2$:
for~$r \in (0,2^{1/2})$, define $\qbf, \check\qbf \in \simplex$ by
\[
  \qbf = \begin{bmatrix}
    \dfrac{1+2^{-1/2}r}{2} & \dfrac{1-2^{-1/2}r}{2} & 0 & \!\!\dots\!\! & 0
  \end{bmatrix}^\top \text{~and~}
  \check\qbf = \begin{bmatrix}
    \dfrac{1-2^{-1/2}r}{2} & \dfrac{1+2^{-1/2}r}{2} & 0 & \!\!\dots\!\! & 0
  \end{bmatrix}^\top.
\]
Then, $\|\qbf - \check\qbf\|_2 = r$ and $\omega(r) = J(\qbf, \check\qbf)$.
At this minimizer, $\omega$ can be written as
\[
  \omega(r) = \frac{1+2^{-1/2}r}{2}\ln\frac{1+2^{-1/2}r}{2} + \frac{1-2^{-1/2}r}{2}\ln\frac{1-2^{-1/2}r}{2} + \ln2.
\]
This $\omega$ (for $p=2$) is akin to the form of $\omega$ shown in \cref{table:examples}, which is for $(N,p)=(2,1)$, with a slight difference in the scale.
Its derivation is based on the method of Lagrange multipliers and deferred to \cref{proposition:shannon_general_minimizer}, which is highly non-trivial and interesting in its own right.

To apply \cref{theorem:regret_order} for this example, let us confirm \eqref{theorem:condition:modulus_conti} is satisfied by taking the asymptotic limit of~$K_p^f(r)$.
By invoking L'H{\^o}pital's rule twice, we have
\[
  \begin{aligned}
    \lim_{r\downarrow0}K_p^f(r)
    &= 8\lim_{r\downarrow0}\frac{\frac{1+2^{-1/2}r}{2}\ln\frac{1+2^{-1/2}r}{2} + \frac{1-2^{-1/2}r}{2}\ln\frac{1-2^{-1/2}r}{2} + \ln2}{r^2} \\
    &= 2\lim_{r\downarrow0}\frac{1}{(1+2^{-1/2}r)(1-2^{-1/2}r)} \\
    &= 2 < \infty,
  \end{aligned}
\]
which indicates \eqref{theorem:condition:modulus_conti} is satisfied.
Thus,~$\omega^{-1}(\rho)$ cannot be faster than~$O(\rho^{1/2})$.

\section{Conclusion}
\label{section:conclusion}
In this work, we examine surrogate regret bounds on the~$p$-norm,~$\|\qbf-\hat\qbf\|_p\lesssim\omega^{-1}(R(\qbf,\hat\qbf))$, which measures predictive performances of plug-in forecasters under downstream tasks such as classification and ranking.
A surrogate regret bound is characterized by the modulus of convexity~$\omega$ associated with the Bregman generator~$f=-\underline{L}$ for a given proper loss.
First, we show that the existence of non-vacuous regret bounds is equivalent to the strict properness of losses.
Then, we prove that the~$p$-norm upper bound~$\omega^{-1}(\rho)$ cannot be faster than the~$1/2$-order of surrogate regrets~$O(\sqrt\rho)$ for a wide range of strictly proper losses.
Herein, the assumptions on loss functions are greatly relaxed so that we do not require the differentiability or the local strong convexity of loss functions anymore.
We demonstrate that many loss functions such as the log loss, Brier loss, pseudo-spherical losses, and~$\alpha$-log losses satisfy the assumptions of our optimal-order argument.

As a side note, there is a fundamental relationship between a proper loss and a convex body \cite{Williamson2014COLT} \cite{Williamson2023JMLR}.
Specifically, the Bayes risk~$\underline{L}$ of a proper loss is the support function of the superprediction set (which is a convex body in~$\Rbb^N$) associated with the proper loss.
Hence, we can work on a convex body instead of directly working on a proper loss.
This perspective has been used to consider aggregating algorithms.
In this connection, we studied the modulus of convexity of Bregman generators~$f=-\underline{L}$, while the modulus of convexity of Banach spaces has been more commonly studied to measure the set curvature \cite{Figiel:1976}.
We conjecture that the modulus of convexity of Bregman generators has a tight connection to the modulus of convexity of superprediction sets, which remains an interesting open question from the viewpoint of convex analysis.

This work is concerned with only \emph{expected} surrogate regrets, induced from the full risk~$\Lbb[\hat\qbf]$ in \cref{section:proper_loss_sub}.
Yet, its empirical estimation and optimization together play an important role in realistic learning scenarios.
To take them into account, it is more standard to consider a learner acting on~$\Rbb^N$-valued margin, instead of~$\simplex$-valued prediction as supposed in \cref{section:proper_losses}.
Proper \emph{composite} losses are common therein, where a link function connects a probabilistic report~$\hat\qbf\in\simplex$ to a real-valued report on~$\Rbb^N$.
Hence, we can consider estimation and optimization of~$\Rbb^N$-valued functions.
In spite of the scarcity, the estimation error rates of class probability models under the binary case~$N=2$ have been studied rigorously for linear hypotheses \cite{Telgarsky2015COLT},
while the optimization error rates of proper composite losses by gradient descent under the binary case have been characterized recently \cite{Bao2025}.
We hope to thoroughly understand how a proper loss behaves by integrating surrogate regrets, estimation, and optimization all at once, and leave this for future work.

\section*{Acknowledgments}
HB and AT are supported by JSPS Grant-in-Aid for Transformative Research Areas(A) (22A201).
AT is supported by JSPS Grant-in-Aid for Scientific Research(C)
(19K03494).

\bibliography{reference}

\appendix

\section{Subgradient inequality}
\label{section:subgradient_inequality}
In this paper, we adopted a slightly non-conventional definition of subdifferentials $\partial f$ in \cref{section:background}
to allow some elements of the subgradient to be~$-\infty$.
\begin{restatable}{lemma}{subdifferential}
  \label{lemma:subdifferential}
  Let~$f:\Rbb^N \to (-\infty, \infty]$ be a proper convex function such that~$\simplex \subset \domain{f}$.
  For~$\qbf^0 \in \simplex$, the set~$\partial f(\qbf^0)$ is nonempty.
\end{restatable}
\begin{proof}
  Fix~$\qbf^0 \in \simplex$.
  If~$f$ is subdifferentiable at~$\qbf^0$,
  then its subgradient belongs to $\partial f(\qbf^0)$ and the claim holds true.
 Thus we assume that~$f$ is not subdifferentiable at~$\qbf^0$.
  Since~$f$ is subdifferentiable at~$\qbf^0 \in \simplex_+$ \cite[Theorem~23.4]{Rockafeller:1970},~$I \defeq |\supp(\qbf^0)|$ satisfies~$1 \le I \le N-1$.
  For~$\etabf \in \Rbb^{I}$, define~$\xibf^{\etabf}\in \Rbb^N$ by 
  \[
    \xi^{\etabf}_n \defeq \begin{cases}
      \eta_n & \text{if } n \in \supp(\qbf^0), \\
      0 & \text{if } n \notin \supp(\qbf^0),
    \end{cases}
  \]
  and define a function~$f_I: \Rbb^I \to (-\infty, \infty]^N$ by
  \[
    f_I(\etabf) \defeq f(\xibf^{\etabf})
    \quad \text{for } \etabf \in \Rbb^I.
  \]
  Then,~$f_I$ is a proper convex function on~$\Rbb^I$ such that~$\triangle^I \subset \domain{f}_{I}$,
  consequently,~$f_I$ is subdifferentiable at~$\hat\etabf \in \triangle^I_+$ \cite[Theorem~23.4]{Rockafeller:1970}.
  For~$\qbf \in \simplex$, define~$\etabf^{\qbf}\in \Rbb^I$ by~$\eta^{\qbf}_n = q_n$ for~$n \in \supp(\qbf^0)$.
  Then, for~$\qbf \in \simplex$ with~$\supp(\qbf) \subset \supp(\qbf^0)$,
  we have~$\etabf^{\qbf} \in \triangle^I$ and~$f_I(\etabf^{\qbf}) = f(\qbf)$.
  Moreover, if~$\qbf \in \simplex$ satisfies~$\supp(\qbf) = \supp(\qbf^0)$,
  then~$\etabf^{\qbf} \in \triangle_+^I$ and hence~$\partial f_I(\etabf^{\qbf}) \neq \emptyset$.
  Choose~$\wbf \in \partial f_I(\etabf^{\qbf^0})$ and define~$\vbf \in [-\infty,\infty)^N$ by
  \[
    {v}_n \defeq \begin{cases}
      {w}_n & \text{if } n \in \supp(\qbf^0), \\
      -\infty & \text{if } n \notin \supp(\qbf^0).
    \end{cases}
  \]
  From now on, we show that $\vbf \in \partial f(\qbf^0)$.
  For~$\qbf \in \simplex$, if~$q_n > 0$ holds for some~$n \notin \supp(\qbf^0)$, then~$\inpr{\vbf}{\qbf - \qbf^0} = -\infty$ and \eqref{equation:subtangent_inequality} holds.
  On the other hand, if~$q_n = 0$ for all~$n \notin \supp(\qbf^0)$, then~$\qbf \in \simplex$ with~$\supp(\qbf) \subset \supp(\qbf^0)$ and 
  \[
    f(\qbf)
    = f_I(\etabf^\qbf)
    \geq f_I\Big(\etabf^{\qbf^0}\Big) + \inpr{\wbf}{\etabf^\qbf - \etabf^{\qbf^0}}
    = f(\qbf^0) + \inpr{\vbf}{\qbf - \qbf^0},%
  \]
  that is, \eqref{equation:subtangent_inequality} holds for~$\qbf \in \simplex$.
  Thus, $\vbf \in \partial f(\qbf^0)$ follows.
  This completes the proof of the lemma.
\end{proof}

\section{An example of empty \texorpdfstring{$\Mcal(\qbf)$}{M(q)}}
\label{section:empty_minimizer_set}
If a loss $\ellbf$ is not lower semi-continuous (as in \cref{lemma:minimizability}), the set of its minimizers $\Mcal(\qbf)$ (introduced in \cref{section:proper_loss_sub}) can be empty.

Consider the following example:
\[
\ell_y(\hat\qbf) = \begin{cases}
  1 - \hat q_y & \text{if $\hat q_y\ne 1$,} \\
  1 & \text{if $\hat q_y=1$.}
\end{cases}
\]
Then,
\[
  L(\ebf_1, \hat\qbf) = \ell_1(\hat\qbf) = \begin{cases}
    1 - \hat q_1 & \text{if $\hat q_1\ne 1$,} \\
    1 & \text{if $\hat q_1=1$,}
  \end{cases}
\]
where $\ebf_1\defeq[1,0,\dots,0]^\top\in\Rbb^N$.
In this case, we have
\[
  \inf_{\hat\qbf \in \simplex} L(\ebf_1, \hat\qbf) = 0,
\]
but there does not exist $\hat\qbf \in \simplex$ such that $L(\ebf_1, \hat\qbf)=0$.
Thus, $\Mcal(\ebf_1)=\emptyset$.

\section{Proof of power evaluations without differentiability}
\label{section:proof:power_evaluation}

In \cref{section:regret_order}, we show power evaluations of the moduli with the differentiability of~$\omega$.

\polynomialrate*

\begin{proof}
We first show that $r \mapsto \omega(r)r^{-S}$ is non-increasing on $(0,r_0)$ in a similar way to the existing argument \cite[Lemma~2.9]{Ohta:2013}.
For $r \in (0, r_0]$ and $\delta > 0$, there exists $\epsilon_{r,\delta} > 0$ such that
   \begin{equation}\label{equation:dini_deriv_bound}
     \frac{r}{\omega(r)} \cdot \sup_{\epsilon \in (0, \epsilon_{r,\delta})} \frac{\omega(r) - \omega(r-\epsilon)}{\epsilon}
     \le
   \sigma(r)+ \frac12\delta  
   \end{equation}
by the definition of $D^-\omega$.
   Define
   \[
     g(t) \defeq S + \frac12\delta + \frac{1}{t}[(1-t)^{S+\delta} - 1]
     \quad \text{for $t \in (0,1)$.}
   \]
   Then, $g$ is continuous on $(0,1)$ and
   \[
     \lim_{t \downarrow 0} g(t) = S + \frac12\delta - (S + \delta) =- \frac12\delta < 0,
   \]
   which implies the existence of~$\tau \in (0,1)$ such that~$g(t) < 0$ for~$t \in (0,\tau)$.
   Then, for any~$u \in (0,\epsilon_{r,\delta}) \cap (0,r\tau)$,
   \[
     \begin{aligned}
       -\frac{r^{S+\delta}}{\omega(r)} + \frac{(r-u)^{S+\delta}}{\omega(r-u)}
       &= \frac{ur^{S+\delta-1}}{\omega(r-u)} \left[\frac{r}{\omega(r)} \frac{\omega(r) - \omega(r-u)}{u}\right] - \frac{r^{S+\delta}}{\omega(r-u)} + \frac{r^{S+\delta}(1-\frac{u}{r})^{S+\delta}}{\omega(r-u)} \\
      &\le \frac{ur^{S+\delta-1}}{\omega(r-u)}\left[\sigma(r) + \frac12\delta\right] + \frac{ur^{S+\delta-1}}{\omega(r-u)} \cdot \frac{1}{\frac{u}{r}} \left[\left(1-\frac{u}{r}\right)^{S+\delta} - 1\right] \\
       &\le \frac{ur^{S+\delta-1}}{\omega(r-u)} \left[S+\frac12\delta + g\left(\frac{u}{r}\right) - \left(S + \frac12\delta\right)\right] \\
       &= \frac{ur^{S+\delta-1}}{\omega(r-u)} \cdot g\left(\frac{u}{r}\right) \\
      &< 0,
     \end{aligned}
   \]
   where the inequality \eqref{equation:dini_deriv_bound} is used at the second line
   and the third line follows from the definition of~$S$.
   Hence, we have
   \[
     \frac{(r-u)^{S+\delta}}{\omega(r-u)} < \frac{r^{S+\delta}}{\omega(r)}  
     \quad \text{for $r \in (0,r_0)$ and $u \in (0,\epsilon_{r,\delta}) \cap (0, r\tau)$.}
   \]
Letting~$\delta \downarrow 0$, we conclude that~$r \mapsto r^S/\omega(r)$ is non-decreasing on~$(0,r_0)$.
This is equivalent to that~$r \mapsto \omega(r)r^{-S}$ is non-increasing on~$(0,r_0)$.

Next, we show that~$r \mapsto \omega(r)r^{-s}$ is non-decreasing on~$(0,r_0)$.
For~$r \in (0,r_0]$, let~$u \in (0, r)$.
Since~$\ln\omega$ is non-decreasing on~$[r-u,r]$,
it follows from the fundamental theorem of calculus \cite[Theorem 1.3.1]{KKK} that 
\[
\int_{r-u}^{r} D^- \ln \omega(r') \rd{r'} \leq \ln \frac{\omega(r)}{\omega(r-u)}.
\]
By~$D^-\omega(r')\leq S<\infty$, we have
\[
\lim_{\epsilon\downarrow 0} \omega(r'-\epsilon)=\omega(r')
\quad \text{for $r'\in (0,r_0)$,}
\]
which yields
\[
\lim_{\epsilon \downarrow 0}
\frac{\ln\omega(r')-\ln\omega(r'-\epsilon)}{\omega(r')-\omega(r'-\epsilon)}
=\frac{1}{\omega(r')}
\]
and
\[
D^-\ln \omega(r')
=
 \limsup_{\epsilon \downarrow 0}
\left[ \frac{\ln\omega(r')-\ln\omega(r'-\epsilon)}{\omega(r')-\omega(r'-\epsilon)}
\cdot\frac{\omega(r')-\omega(r'-\epsilon)}{\epsilon} \right]
=\frac{1}{\omega(r')} D^-\omega(r') \geq \frac{s}{r'}.
\]
Thus, we have
\[
\int_{r-u}^{r} D^-\ln \omega(r') \rd{r'} \geq 
\int_{r-u}^{r} \frac{s}{r'}\rd{r'}
=s\ln\frac{r}{r-u}.
\]
These imply 
\[
\omega(r-u)\cdot(r-u)^{-s} \leq \omega(r)\cdot r^{-s},
\]
that is,~$r \mapsto \omega(r)r^{-s}$ is non-decreasing on~$(0,r_0)$.
\end{proof}

\section{Continuity property of local modulus of convexity}
\label{section:property_local_modulus}

The local modulus of convexity defined in \cref{definition:local_modulus} naturally entails the following continuity property.
We state it in the following lemma for the sake of completeness.
\begin{restatable}{lemma}{modulusconti}
  \label{lemma:modulus_local_conti}
  If~$f:\simplex \to \Rbb$ is continuous convex,
  then~$K^{f}_p:(0,2^{1/p}] \to \Rbb$ is lower semi-continuous and left-continuous.
\end{restatable}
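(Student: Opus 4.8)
The plan is to establish the two continuity properties of $K^f_p(r) = 8\omega(r)/r^2$ by reducing everything to the behavior of the modulus $\omega$, since the factor $8/r^2$ is continuous and strictly positive on $(0, 2^{1/p}]$. Thus it suffices to show that $\omega$ is lower semi-continuous and left-continuous on $(0, 2^{1/p}]$, from which the same properties transfer to $K^f_p$ immediately (a product of a lower semi-continuous function and a positive continuous function is lower semi-continuous, and a product of two left-continuous functions is left-continuous).

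First I would prove left-continuity of $\omega$. Fix $r \in (0, 2^{1/p}]$ and a sequence $r_k \uparrow r$. Since $\omega$ is non-decreasing by \cref{proposition:moduli_monotonicity}, the limit $\lim_{k} \omega(r_k)$ exists and is at most $\omega(r)$; I need the reverse inequality. By \cref{lemma:moduli_minimizer}, for each $k$ there exist $\qbf^k, \check\qbf^k \in \simplex$ with $\|\qbf^k - \check\qbf^k\|_p = r_k$ and $J(\qbf^k, \check\qbf^k) = \omega(r_k)$. By compactness of $\simplex \times \simplex$, pass to a subsequence so that $(\qbf^k, \check\qbf^k) \to (\qbf^\ast, \check\qbf^\ast)$; then $\|\qbf^\ast - \check\qbf^\ast\|_p = r$, and by continuity of $J$ (which follows from continuity of $f$ on the compact set $\simplex$) we get $J(\qbf^\ast, \check\qbf^\ast) = \lim_k \omega(r_k)$. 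But $(\qbf^\ast, \check\qbf^\ast)$ is feasible in the infimum defining $\omega(r)$, so $\omega(r) \le J(\qbf^\ast, \check\qbf^\ast) = \lim_k \omega(r_k)$. This gives left-continuity.

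Next I would prove lower semi-continuity at an arbitrary $r \in (0, 2^{1/p}]$, i.e., $\liminf_{r' \to r} \omega(r') \ge \omega(r)$. For $r' \le r$ this is already covered by left-continuity (even monotonicity suffices on that side). For $r' \downarrow r$, monotonicity gives $\omega(r') \ge \omega(r)$ directly, so the $\liminf$ from the right is also at least $\omega(r)$. Combining the two sides yields lower semi-continuity. (Note that right-continuity of $\omega$ need not hold — $\omega$ may jump up at $r$ — which is exactly why only lower semi-continuity, not full continuity, is claimed.) Finally, $K^f_p = (8/\,\cdot^2)\cdot\omega$ inherits left-continuity as a product of left-continuous functions and lower semi-continuity since $8/r^2$ is positive and continuous; this completes the proof.

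The only mild subtlety — not really an obstacle — is making sure $J$ is genuinely continuous on $\simplex \times \simplex$: this needs continuity of $f$ on $\simplex$, which is part of the hypothesis, together with the fact that $\qbf, \check\qbf \mapsto (\qbf+\check\qbf)/2$ maps $\simplex \times \simplex$ into $\simplex$, so all three evaluations of $f$ in the definition of $J$ are continuous. Everything else is a routine compactness-plus-monotonicity argument, with \cref{lemma:moduli_minimizer} doing the essential work of producing minimizing pairs at the exact radius $r_k$.
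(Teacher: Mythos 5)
Your proof is correct, and the left-continuity half takes a genuinely different route from the paper's. The paper works directly with $K^f_p$: it first proves lower semi-continuity via Arzel\'a--Ascoli applied to the segments joining the minimizing pairs (which, as you note, is just sequential compactness of $\simplex\times\simplex$ in disguise), and then proves $\limsup_{\tau\downarrow 0} K^f_p((1-2\tau)r)\le K^f_p(r)$ by running the strong-convexity chain that becomes the labelled inequality~\eqref{equation:local_moduli_bound}, combining that with lower semi-continuity to get left-continuity. You instead prove left-continuity of $\omega$ outright by the compactness argument together with monotonicity of $\omega$ (from \cref{proposition:moduli_monotonicity}): the one-sided limit exists by monotonicity, compactness identifies it as $J(\qbf^\ast,\check\qbf^\ast)$ for a feasible pair at distance $r$, and the infimum definition of $\omega(r)$ closes the gap. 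Lower semi-continuity then falls out from left-continuity on one side and monotonicity on the other, and both properties transfer to $K^f_p$ through the positive continuous factor $8/r^2$. Your route is cleaner for this lemma because it never invokes the strong convexity parameter $\kappa^f_p$; what it gives up is that the paper's left-continuity argument establishes \eqref{equation:local_moduli_bound} as a by-product, and that inequality is reused verbatim in the proof of \cref{lemma:modulus_evaluation}, so the paper's heavier machinery serves double duty.

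One small slip to fix in the write-up: the parenthetical ``(even monotonicity suffices on that side)'' attached to the case $r'\le r$ is not right. For $r'<r$, monotonicity gives $\omega(r')\le\omega(r)$, which is the \emph{wrong} direction for lower semi-continuity; on the left side you genuinely need left-continuity (or equivalently the compactness argument). Monotonicity alone only handles the $r'>r$ side. The substance of your proof is unaffected, since you do invoke left-continuity there, but the parenthetical would mislead a reader.
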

To prove \cref{lemma:modulus_local_conti}, we use the following supplement result.
\begin{lemma}
  \label{lemma:supp:modulus_bound}
  If~$f:\simplex\to\Rbb$ is continuous convex, then for~$r\in(0,2^{1/p}]$ and~$\tau\in(0,1/2)$, we have
  \[
    \frac{K_p^f((1-2\tau)r)}{8}[(1-2\tau)r]^2
    = \omega((1-2\tau)r)
    \le \omega(r) - \frac{\kappa_p^f}{2}\tau(1-\tau)r^2
    \le \frac{K_p^f(r)}{8}r^2 - \frac{\kappa_p^f}{2}\tau(1-\tau)r^2.
  \]
\end{lemma}
\begin{proof}
  Choose distinct~$\qbf, \check\qbf \in \simplex$ satisfying
  \[
    \|\qbf - \check\qbf\|_p = r
    \quad \text{and} \quad \omega(r) = J(\qbf, \check\qbf),
  \]
  which exist thanks to \cref{lemma:moduli_minimizer} together with the convexity of~$f$.
  For these~$\qbf, \check\qbf$, define
  \[
    c(t) \defeq (1-t)\qbf + t\check\qbf
    \quad \text{for $t \in [0,1]$.}
  \]
  For~$\tau \in (0,1/2)$, we have
  \[
    \begin{aligned}
      \frac{K^{f}_p((1-2\tau)r)}{8}[(1-2\tau)r]^2
      &= \omega((1-2\tau)r) \\
      &\leq J(c(\tau), c(1-\tau)) \\
      &\leq J(c(0), c(1)) - \frac{\kappa^f_p}{2}\tau(1-\tau)r^2 \\
      &=\omega(r)-\frac{\kappa^f_p}{2}\tau(1-\tau)r^2\\
      &\leq \frac{K^f_p(r)}{8}r^2-\frac{\kappa^f_p}{2}\tau(1-\tau)r^2.
    \end{aligned}
  \]
\end{proof}
{\renewenvironment{proof}{\par\noindent{\bf Proof of \cref{lemma:modulus_local_conti}\ }}{\hfill$\openbox$}
\begin{proof}
Fix~$r \in (0,2^{1/p}]$.
Let~$(r_j)_{j \in \Nbb} \subset (0,2^{1/p}]$ be a sequence converging to~$r$.
For each~$j \in \Nbb$, there exist~$\qbf^j, \check\qbf^j \in \simplex$ satisfying
\[
  \|\qbf^j - \check\qbf^j\|_p = r_j \quad \text{and} \quad \omega(r_j) = J(\qbf^j, \check\qbf^j)
\]
from \cref{lemma:moduli_minimizer}.
Define~$c_j:[0,1] \to \simplex$ by
\[
  c_j(t) \defeq (1-t)\qbf^j + t\check\qbf^j \quad \text{for $t \in [0,1]$.}
\]
By the Arzel\'a--Ascoli theorem, we can extract a subsequence~$(c_{j_m})_{m\in \Nbb}$ converging uniformly to some~$c:[0,1] \to \simplex$ uniformly, where
\[
  c(t) = (1-t)c(0) + tc(1) \quad \text{for $t \in [0,1]$~~and} \quad \|c(0) - c(1)\|_p = r
\]
hold.
Since~$f$ is continuous, we have
\begin{align*}
  K^{f}_p(r)
  = \frac{8}{r^2}\omega(r)
  &\leq \frac{8}{r^2} J(c(0),c(1))\\
  &= \lim_{m \to \infty} \frac{8}{r_{j_m}^2} J(c_{j_m}(0),c_{j_m}(1))
  = \lim_{m \to \infty} \frac{8}{r_{j_m}^2} \omega(r_{j_m})
  = \lim_{m \to \infty} K^{f}_p(r_{j_m}).
\end{align*}
Thus,~$K^{f}_p:(0,2^{1/p}] \to \Rbb$ is lower semi-continuous.

Next, by \cref{lemma:supp:modulus_bound}, we have
\[
  \frac{K_p^f((1-2\tau)r)}{8}[(1-2\tau)r]^2
  \le \frac{K_p^f(r)}{8}r^2 - \frac{\kappa_p^f}{2}\tau(1-\tau)r^2
\]
for $\tau\in(0,1/2)$.
Dividing by~$[(1-2\tau)r]^2/8 \neq 0$ and then taking the limit yields 
\[
  \limsup_{\tau \downarrow 0} K^{f}_p((1-2\tau)r)
  \leq K^{f}_p(r).
\]
Together with the lower semi-continuity~$K_p^f$, the left-continuity~$K_p^f$ is ensured.

This completes the proof of the lemma.
\end{proof}
}

\section{Deferred proofs}
\label{section:proofs}

\minimizability*

\begin{proof}
  For~$\qbf \in \simplex$, the lower semi-continuity of~$L(\qbf,\cdot)$ follows from that of~$\ellbf$,
  which ensures the closedness of~$\Mcal(\qbf)$.
  Moreover, by the extreme value theorem with the closedness of~$\simplex$,
  we see~$\Mcal(\qbf)\neq \emptyset$ holds.

  Assume the continuity of~$\ellbf$ 
  and we show the existence of a Borel selector of~$\Mcal$.
  Note that the continuity of~$\ell$ guarantees 
  the continuity of~$L(\qbf, \cdot)$
  on~$\simplex$ for each~$\qbf\in \simplex$.
  By the Kuratowski and Ryll-Nardzewski measurable selection theorem~\cite[Main Theorem \& Corollary 1]{KuratowskiRyll-Nardzewski65},
  it is enough to show that
  \[
    \Bcal_\Kcal \defeq \setcomp{\qbf \in \simplex}{\Mcal(\qbf) \cap \Kcal \neq \emptyset}
  \]
  is Borel for any %
  closed set~$\Kcal$ in~$\Rbb^N$ with~$\simplex \cap \Kcal \neq \emptyset$.

  Fix a compact set~$\Kcal$ in~$\Rbb^N$ with~$\simplex \cap \Kcal \neq \emptyset$.
  Since 
  $\simplex \cap \Kcal$ is a compact metric space hence separable, there exists a dense countable set~$\set{\qbf^j}_{j \in \Nbb}$ in~$\simplex \cap \Kcal$.
  For each~$j\in \Nbb$, define~$d_j: \simplex \to \Rbb$ by
  \[
    d_j(\qbf) \defeq L(\qbf, \qbf^j) - \inf_{\hat\qbf \in \simplex} L(\qbf, \hat\qbf)
    \quad \text{for } \qbf \in \simplex,
  \]
  which is lower semi-continuous, in particular, Borel on~$\simplex$.
  Then,
  \[
    \Bcal \defeq \bigcap_{m \in \Nbb} \bigcup_{j \in \Nbb} d_j^{-1}([0, m^{-1}))
  \]
  is Borel.
  We will show~$\Bcal = \Bcal_\Kcal$.
  For~$\qbf \in \Bcal_\Kcal$, there exists~$\hat\qbf \in \Mcal(\qbf) \cap \Kcal$.
  By the continuity of~$L(\qbf, \cdot)$ on~$\simplex$, for each~$m \in \Nbb$,
  there  exists~$\delta_m > 0$ such that if~$\qbf' \in \simplex$ satisfies~$\|\qbf' - \hat\qbf\|_2 < \delta_m$,
  then~$0 \leq L(\qbf, \qbf') - L(\qbf, \hat\qbf) < m^{-1}$.
  By the density of~$\set{\qbf^j}_{j \in \Nbb}$, there exists~$j_m \in \Nbb$ such that~$\|\qbf^{j_m} - \hat\qbf\|_2 < \delta_m$ and hence
  \[
    d_{j_m}(\qbf) = L(\qbf, \qbf^{j_m}) - L(\qbf, \hat\qbf) \in [0, m^{-1}),
  \]
  which in turn implies~$\qbf \in \Bcal$.
  Conversely, for~$\qbf \in \Bcal$ and~$m \in \Nbb$, there exists~$j_m \in \Nbb$ such that 
  \[
    L(\qbf, \qbf^{j_m}) < \inf_{\hat\qbf \in \simplex} L(\qbf, \hat\qbf) + m^{-1}.
  \]
  We extract a convergent subsequence of~$(\qbf^{j_m})_{m\in \Nbb}$ (not relabeled) with limit~$\hat\qbf \in \simplex\cap \Kcal$.
  The continuity of~$L(\qbf, \cdot)$ gives
  \[
    L(\qbf, \hat\qbf)
    =\lim_{m \to \infty} L(\qbf, \qbf^{j_m})
    \leq \inf_{\hat\qbf' \in \simplex} L(\qbf, \hat\qbf'),
  \]
  proving~$\hat\qbf \in \Mcal(\qbf)$ hence~$\qbf \in \Bcal_\Kcal$.
  This completes the proof of the lemma.
\end{proof}

\jensengapaffine*

\begin{proof}
  We only show that~$f - g$ is affine under the assumption that the midpoint Jensen gaps of~$f$ and~$g$ are the same since the converse implication is trivial.
  Hereafter, let us write the midpoint Jensen gaps of~$f$ and~$g$ by~$J_f$ and~$J_g$, respectively.

  Without loss of generality, we pick~$\qbf^0 \in \simplex_+$ such that~$f$ and~$g$ are differentiable at~$\qbf^0$ because a convex function is differentiable almost everywhere in the interior of its domain.
  Define
  \[
    \vbf^0 \defeq \nabla f(\qbf^0) - \nabla g(\qbf^0)
    \quad \text{and} \quad
    \lambda \defeq f(\qbf^0) - g(\qbf^0).
  \]
  Fix any~$\qbf \in \simplex$ and set 
  \[
    h(t) \defeq f(\qbf^0 + t(\qbf - \qbf^0)) - g(\qbf^0 + t(\qbf - \qbf^0)) - \inpr{\vbf^0}{t(\qbf - \qbf^0)} - \lambda
    \quad \text{for $t \in [0,1]$.}
  \]
  Then,~$h: [0,1] \to \Rbb$ is continuous with~$h(0) = 0$ and~$h'(0) = 0$.
  With elementary algebra, we have
  \[
    0 = J_f(\qbf^0, \qbf^0 + t(\qbf - \qbf^0)) - J_g(\qbf^0, \qbf^0 + t(\qbf - \qbf^0))
    = \frac12h(t) - h\left(\frac{t}{2}\right)
    \quad \text{for all $t \in [0,1]$,}
  \]
  which implies
  \[
    h\left(\frac12t\right) = \frac12h(t)
    \quad \text{for all $t \in [0,1]$.}
  \]
  By invoking this relation recursively, we have~$h(t) = 2^{k}h(2^{-k}t)$ for any~$k \in \Nbb$, which yields
  \[
    h(1) = \lim_{k \to \infty}\frac{h(2^{-k}) - h(0)}{2^{-k}}
    = h'(0) = 0.
  \]
  Consequently, we have
  \[
    f(\qbf) = g(\qbf) + \inpr{\vbf^0}{\qbf - \qbf^0} + \lambda.
  \]
  Thus, we have shown that~$f-g$ is affine.
\end{proof}

\minimizer*

\begin{proof}
  Let~$r \in [0, 2^{1/p}]$.
  Define
  \[
    \Dcal^N(r) \defeq \setcomp{(\qbf,\check\qbf) \in \simplex \times \simplex}{\|\qbf-\hat\qbf\|_p \geq r}.
  \]
  Since~$\Dcal^N(r)$ is compact and~$J$ is continuous on~$\Dcal^N(r)$, there is~$(\qbf, \check\qbf) \in \Dcal^N(r)$ such that~$\omega(r) = J(\qbf, \check\qbf)$.
  Define~$c: [0,1] \to \simplex$ by 
  \[
    c(t) \defeq (1-t)\qbf + t\check\qbf \quad \text{for $t \in [0,1]$.}
  \]
  In the case of~$\|\qbf-\check\qbf\|_p= r$, we can take~$(\qbf^r,\check\qbf^r) = (\qbf,\check\qbf)$, and the statement follows.
  Assume~$\|\qbf-\check\qbf\|_p > r$.
  Then, there exists~$\tau\in (0,1/2]$ such that
  \[
    \|c(\tau)-c(1-\tau)\|_p =(1-2\tau)\|\qbf-\check\qbf\|_p =r.
  \]
  Since~$f\circ c: [0,1]\to \Rbb$ is convex, we have  
  \begin{align}
    \label{properineq}
    \frac{f(c(\tau)) - f(c(0))}{\tau}
    \leq \frac{f(c(1)) -f(c(1-\tau))}{\tau}, 
    \end{align}
  which is equivalent to 
  \begin{align*}
    J(c(\tau),c(1-\tau))
    &=\frac{f(c(\tau))+f(c(1-\tau)}{2} -f(c(1/2)) \\
    &\leq 
    \frac{f(c(0))+f(c(1)}{2} -f(c(1/2)) 
    =J(\qbf, \check\qbf).
  \end{align*}
  This yields~$J(c(\tau),c(1-\tau)) = \omega(r)$, and hence, we can take~$(\qbf^r,\check\qbf^r) = (c(\tau),c(1-\tau))$.
  Thus, we have confirmed the statement.
\end{proof}

\modulusmidpoint*

\begin{proof}
  By definition,~$\kappa^f_p \leq \kappa^{f,1/2}_p$ trivially holds.
  We shall prove the converse inequality.
  Observe from the definition that
  \[
    f\left(\frac{\qbf+\check\qbf}{2}\right) \leq \frac12 f(\qbf) +\frac12f(\check\qbf) - \frac{\kappa^{f,\frac12}_p}{8} \|\qbf-\check\qbf\|_p^2
    \quad \text{for all $\qbf, \check\qbf \in \simplex$.}
  \]
  For~$(i, j) \in \Nbb\times\Zbb_{\ge0}$, set
  \[
    t_{i,j} \defeq 2^{-i}j.
  \]
  Fix distinct~$\qbf, \check\qbf \in \simplex$ and define~$c: [0, 1] \to \simplex$ by
  \[
    c(t) \defeq (1-t)\qbf + t\check\qbf \quad \text{for $t \in [0,1]$.}
  \]
  We will show that 
  \begin{equation}
    \label{equation:induction_goal}
    f(c(t_{i,j})) \leq (1-t_{i,j}) f(c(0)) + t_{i,j}f(c(1))-\frac{\kappa^{f,\frac12}_p}{2}t_{i,j}(1-t_{i,j}) \|\qbf-\check\qbf\|_p^2
  \end{equation}
  for~$(i,j) \in \Nbb \times \Zbb_{\geq 0}$ with~$t_{i,j} \in [0,1]$ (namely, for~$i \in \Nbb$ and~$0 \leq j \leq 2^i$)
  by induction on~$i$.
  We immediately observe that \eqref{equation:induction_goal} always holds for~$t_{i,0}=0$ and~$t_{i,2^i}=1$ regardless of~$i$.
  
  The inequality~\eqref{equation:induction_goal} trivially holds for~$i=1$.
  Assume that \eqref{equation:induction_goal} holds for some~$i \in \Nbb$ and all~$j$ with~$0 \leq j \leq 2^i$.
  Then, \eqref{equation:induction_goal} also holds for~$t_{i+1,2j} = t_{i,j}$ with~$0 \leq j \leq 2^{i}$.
  For~$0 \leq j \leq 2^i-1$,
  we have
  \[
    t_{i+1,2j+1} = \frac{t_{i+1,2j}+t_{i+1,2j+2}}2 = \frac{t_{i,j}+t_{i,j+1}}2.
  \]
  Define~$c_{i,j}: [0,1] \to \simplex$ by
  \[
    c_{i,j}(t) \defeq c((1-t) \cdot t_{i,j} + t\cdot t_{i,j+1}) \quad \text{for $t \in [0,1]$.}
  \]
  This implies
  \begin{align*}
    f&(c(t_{i+1,2j+1})) \\
    &\leq \frac12 f(c_{i,j}(0)) + \frac12 f(c_{i,j}(1)) - \frac{\kappa^{f,\frac12}}{8}\|c_{i,j}(0)-c_{i,j}(1)\|_p^2\\
    &= \frac12 f(c_{i,j}(0)) + \frac12 f(c_{i,j}(1)) - \frac{\kappa^{f,\frac12}_p}{8} \cdot 2^{-2i} \cdot \|\qbf-\check\qbf\|_p^2\\
    &= \frac12 f(c(t_{i,j})) + \frac12 f(c(t_{i,j+1})) - \frac{\kappa^{f,\frac12}_p}{8} \cdot 2^{-2i} \cdot \|\qbf-\check\qbf\|_p^2\\
    &\leq \frac12\bigg[
      (1-t_{i,j}) f(c(0)) + t_{i,j}f(c(1)) - \frac{\kappa^{f,\frac12
      }_p}{2}t_{i,j}(1-t_{i,j}) \|\qbf-\check\qbf\|_p^2\bigg]\\
    &\phantom{\leq} + \frac12\bigg[
      (1-t_{i,j+1}) f(c(0)) + t_{i,j+1}f(c(1)) - \frac{\kappa^{f,\frac12}_p}{2}t_{i,j+1}(1-t_{i,j+1}) \|\qbf-\check\qbf\|_p^2\bigg]\\
    &\phantom{\leq} - \frac{\kappa^{f,\frac12}_p}{8} \cdot 2^{-2i} \cdot \|\qbf-\check\qbf\|_p^2\\
    &= (1-t_{i+1,2j+1}) f(c(0)) + t_{i+1,2j+1}f(c(1)) - \frac{\kappa^{f,\frac12}_p}{2}t_{i+1,2j+1}(1-t_{i+1,2j+1}) \|\qbf-\check\qbf\|_p^2
  \end{align*}
  as desired.
  
  Because~$f$ is continuous on~$\simplex$ and
  $\setcomp{t_{i,j} \in [0,1]}{(i, j) \in \Nbb \times \Zbb_{\geq 0}}$ is dense in~$[0,1]$,
  we find
  \[
    f((1-t)\qbf+t\check\qbf)\leq (1-t) f(\qbf) +tf(\check\qbf)-\frac{\kappa^{f,\frac12}_p}{2}t(1-t) \|\qbf-\check\qbf\|_p^2 \quad \text{for $t \in (0,1)$,}
  \] 
  which leads to
  \[
    \kappa^{f,\frac12}_p \leq
      \frac{
        2[(1-t) f(\qbf) +tf(\check\qbf)-
        f((1-t)\qbf+t\check\qbf)]}{t(1-t)\|\qbf-\check\qbf\|_p^2
      } \quad \text{for $t \in (0,1)$.}
  \]
  Since~$\qbf, \check\qbf \in \simplex$ are arbitrary, this ensures that~$\kappa^{f,t}_p \geq \kappa^{f,1/2}_p$ 
  for~$t\in (0,1)$ and completes the proof of the proposition.
\end{proof}

\moduluseval*

\begin{proof}
Assume that there exists $r_\ast\in (0,2^{1/p}]$ such that $K_p^f(r_\ast)=\kappa_p^f$.
We can see from \cref{lemma:supp:modulus_bound} that
\[
  \begin{aligned}
    \frac{K^{f}_p((1-2\tau)r_\ast)}{8}[(1-2\tau)r_\ast]^2
    &\leq \frac{K^f_p(r_\ast)}{8}r_\ast^2-\frac{\kappa^f_p}{2}\tau(1-\tau)r_\ast^2
    = \frac{K^f_p(r_\ast)}{8} [ (1-2\tau)r_\ast ]^2
  \end{aligned}
\]
for~$\tau \in (0,1/2)$.
Since~$K_p^f(r)\geq \kappa_p^f$ for~$r\in (0,2^{1/p}]$ (by \cref{definition:local_modulus} and \cref{equation:sc_modulus_local}), this implies~$K_p^f(r)= \kappa_p^f$ for $r\in (0,r_\ast]$ hence
\[
  \liminf_{r \downarrow 0} K^f_p(r) = \kappa_p^f.
\]
Assume that there is no~$r\in (0,2^{1/p}]$ so that~$K_p^f(r)=\kappa_p^f$.
Then, there exists~$(r_j)_{j \in \Nbb} \subset (0, 2^{1/p}]$ converging to~$0$ such that 
  \[
    \lim_{j\to\infty} K^f_p(r_j) = \inf\setcomp{K^{f}_{p}(r)}{r\in (0,2^{\frac1p}]}
    = \kappa^{f,\frac12}_p
    \leq \liminf_{r \downarrow 0} K^f_p(r)
    \leq \lim_{j \to \infty} K^f_p(r_j),
  \]
  where the second equality follows from \eqref{equation:sc_modulus_local}.
  This with \cref{proposition:sc_modulus_midpoint} proves the first assertion.
  
  For~$\qbf, \check\qbf\in \triangle$, we calculate 
  \[
    J(\qbf, \check\qbf)
    \geq \omega(\|\qbf - \check\qbf\|_p)
    = \frac{K^f_p(\|\qbf - \check\qbf\|_p)}{8}\|\qbf - \check\qbf\|_p^2
    \geq \frac{\kappa^f_p}{8}\|\qbf - \check\qbf\|_p^2.
  \]
  This is the second assertion.
  
  For~$r \in (0,2^{1/p}]$, 
  we can pick~$\qbf, \check\qbf \in \simplex$ such that~$\omega(r) = J(\qbf, \check\qbf)$ and~$\|\qbf - \check\qbf\|_p = r$ from \cref{lemma:moduli_minimizer}.
  For~$\tau \in (0,1/2)$, we have
  \[
    \omega((1-2\tau)r) \leq \omega(r) - \frac{\kappa^f_p}{2}\tau(1-\tau)r^2
  \]
  from \cref{lemma:supp:modulus_bound},
  which yields 
  \[
    D^-\omega(r)
    = \limsup_{\tau \downarrow 0} \frac{\omega(r)-\omega((1-2\tau)r)}{r-(1-2\tau)r}
    \geq \limsup_{\tau \downarrow 0} \frac{\dfrac{\kappa^f_p}{2}\tau(1-\tau)r^2}{2\tau r}
    = \frac{\kappa^f_p}{4}r.
  \]
  This completes the proof of the lemma.
\end{proof}

\section{Derivation of modulus for general \texorpdfstring{$N$}{N}}
\label{section:modulus_general_N}
In \cref{section:examples}, we mainly consider examples of~$\omega$ only for~$(N,p)=(2,1)$.
Since we have extended the moduli on general~$(\simplex,\|\cdot\|_p)$ in \cref{definition:modulus}, it is nice to have an example beyond the binary case.
To this end, we calculate~$\omega$ for the negative Shannon entropy~$f(\qbf) = \inpr{\qbf}{\ln\qbf}$ with general~$N \geq 2$ and~$p = 2$.
In what follows, we focus on~$f(\qbf) = \inpr{\qbf}{\ln\qbf}$, and the modulus~$\omega$ and midpoint Jensen gap~$J$ is defined based on this particular~$f$ throughout this section.

Let
\[
  \Ucal^{N-1} \defeq \setcomp{\ubf \in (0,1)^{N-1}}{\sum_{n\in[N-1]} u_n < 1}
\]
and define~$\psi: \Ucal^{N-1} \times \Ucal^{N-1} \to \Rbb$ and~$\phi: \Ucal^{N-1} \to \Rbb$ by
\[
  \begin{aligned}
    \psi(\ubf, \wbf) &\defeq \frac{1}{2}\Bigg\{ \sum_{n\in[N-1]}(u_n - w_n)^2 + \bigg[\sum_{n\in[N-1]}(u_n - w_n)\bigg]^2 \Bigg\} && \text{for $\ubf, \wbf \in \Ucal^{N-1}$,} \\
    \phi(\ubf) &\defeq \sum_{n\in[N-1]}u_n\ln u_n + \Bigg(1 - \sum_{n\in[N-1]}u_n\Bigg)\ln\Bigg(1 - \sum_{n\in[N-1]}u_n\Bigg) && \text{for $\ubf \in \Ucal^{N-1}$,}
  \end{aligned}
\]
respectively.
Here,~$\phi$ is the negative Shannon entropy of~$[\ubf\;\;1-\inpr{\ubf}{\onebf}]\in\simplex$.
Indeed, we have
\[
  \begin{bmatrix}
    u_1 \\ \vdots \\ u_{N-1} \\ 1 - \inpr{\ubf}{\onebf}
  \end{bmatrix}
  \in \simplex_+ \quad \text{and} \quad
  \psi(\ubf, \wbf) = \frac{1}{2}\left\|\begin{bmatrix}
    u_1 \\ \vdots \\ u_{N-1} \\ 1 - \inpr{\ubf}{\onebf}
  \end{bmatrix} - \begin{bmatrix}
    w_1 \\ \vdots \\ w_{N-1} \\ 1 - \inpr{\wbf}{\onebf}
  \end{bmatrix}\right\|_2^2,
\]
which yields~$\psi(\ubf, \wbf) \in [0,1]$.
First, we present a couple of necessary lemmas.

\begin{lemma}
  \label{lemma:sub2}
  For~$r \in (0, 2^{1/2})$, there exist~$\qbf, \check\qbf \in \simplex$ such that
  \[
    \|\qbf - \check\qbf\|_2 = r, \quad
    \supp(\qbf) \cap \supp(\check\qbf) \ne \emptyset, \quad
    J(\qbf, \check\qbf) < \ln2.
  \]
\end{lemma}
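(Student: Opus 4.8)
The statement asks, for each $r \in (0, 2^{1/2})$, to produce a pair $\qbf, \check\qbf \in \simplex$ realizing $\|\qbf - \check\qbf\|_2 = r$, having overlapping supports, and with midpoint Jensen gap \emph{strictly} below $\ln 2$. My plan is to use the explicit two-coordinate construction already appearing in the excerpt for the general-$N$ Shannon example: set
\[
  \qbf = \begin{bmatrix} \dfrac{1 + 2^{-1/2}r}{2} & \dfrac{1 - 2^{-1/2}r}{2} & 0 & \dots & 0 \end{bmatrix}, \qquad
  \check\qbf = \begin{bmatrix} \dfrac{1 - 2^{-1/2}r}{2} & \dfrac{1 + 2^{-1/2}r}{2} & 0 & \dots & 0 \end{bmatrix}.
\]
Since $r < 2^{1/2}$ we have $2^{-1/2}r < 1$, so both entries $\frac{1 \pm 2^{-1/2}r}{2}$ are strictly positive; hence $\supp(\qbf) = \supp(\check\qbf) = \{1, 2\}$, which in particular has nonempty (in fact identical) intersection. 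The first thing I would check is the norm: $\qbf - \check\qbf$ has first coordinate $2^{-1/2}r$, second coordinate $-2^{-1/2}r$, and all others zero, so $\|\qbf - \check\qbf\|_2 = \sqrt{2 \cdot (2^{-1/2}r)^2} = r$, as required.

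The remaining task is the strict inequality $J(\qbf, \check\qbf) < \ln 2$. The midpoint is $\frac{\qbf + \check\qbf}{2} = [\tfrac12\ \tfrac12\ 0\ \dots\ 0]$, so $f\!\left(\frac{\qbf+\check\qbf}{2}\right) = 2 \cdot \tfrac12 \ln\tfrac12 = -\ln 2$. Writing $s = 2^{-1/2}r \in (0,1)$, both $f(\qbf)$ and $f(\check\qbf)$ equal $g(s) \defeq \tfrac{1+s}{2}\ln\tfrac{1+s}{2} + \tfrac{1-s}{2}\ln\tfrac{1-s}{2}$, so
\[
  J(\qbf, \check\qbf) = g(s) + \ln 2 = \frac{1+s}{2}\ln\frac{1+s}{2} + \frac{1-s}{2}\ln\frac{1-s}{2} + \ln 2.
\]
I would then argue that $J(\qbf, \check\qbf) < \ln 2$ is equivalent to $g(s) < 0$, and that $g(s) < 0$ for all $s \in (0,1)$: indeed $g$ is (up to sign and the additive constant $\ln 2$) the negative binary entropy evaluated at $\frac{1+s}{2} \in (\tfrac12, 1)$, so $g(s) = -H\!\left(\tfrac{1+s}{2}\right)$ where $H(p) = -p\ln p - (1-p)\ln(1-p) > 0$ for $p \in (0,1)$. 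Since $\tfrac{1+s}{2} \in (0,1)$ whenever $s \in (0,1)$, we get $g(s) = -H(\tfrac{1+s}{2}) < 0$, hence $J(\qbf, \check\qbf) < \ln 2$. (Alternatively, one observes $g(0) = \ln\tfrac12 \cdot 1 = -\ln 2 < 0$ and that $g$ is continuous and bounded above by its limit $g(1^-) = 0$, combined with strict convexity of $t \mapsto t \ln t$, but the entropy positivity argument is the cleanest.)

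I do not expect a genuine obstacle here; the only care needed is bookkeeping. One should double-check the edge behavior as $s \to 0$ (the gap tends to $0$, consistent with $\omega$ being continuous and $\omega(0) = 0$ from \cref{proposition:moduli_monotonicity}) and as $s \to 1$, i.e.\ $r \to 2^{1/2}$ (the gap tends to $\ln 2$, which is why the claim is stated on the open interval $(0, 2^{1/2})$ — precisely at the diameter, the extremal pair is a pair of distinct vertices with disjoint supports and $J = \ln 2$, so strictness would fail). This boundary exclusion is exactly what makes the hypothesis $r < 2^{1/2}$ essential, and it is worth a sentence in the write-up to flag that the bound $2^{-1/2}r < 1$ is what simultaneously guarantees positivity of the entries (overlapping supports) and strictness of the inequality. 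The whole proof is then three short verifications: the norm computation, the support observation, and the entropy-positivity bound on $g$.
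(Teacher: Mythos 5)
Your proof is correct, and it takes a genuinely different route from the paper's. The paper picks the asymmetric pair $\qbf = (1,0,\dots,0)$, $\check\qbf = (1-a, a, 0,\dots,0)$ with $a = 2^{-1/2}r$, for which $\supp(\qbf) = \{1\} \subsetneq \{1,2\} = \supp(\check\qbf)$, and then proves strictness by differentiating: it sets $\overline{J}(a) := \tfrac{1-a}{2}\ln(1-a) - (1-\tfrac{a}{2})\ln(1-\tfrac{a}{2}) + \tfrac{a}{2}\ln 2$, verifies $\overline{J}(1) = \ln 2$, and shows $\overline{J}'(a) = \tfrac12\ln\tfrac{2-a}{1-a} > 0$ on $(0,1)$, whence $\overline{J}(a) < \ln 2$. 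You instead take the symmetric pair $\qbf = (\tfrac{1+s}{2}, \tfrac{1-s}{2}, 0,\dots)$, $\check\qbf = (\tfrac{1-s}{2}, \tfrac{1+s}{2}, 0,\dots)$, for which $J(\qbf,\check\qbf) = \ln 2 - H(\tfrac{1+s}{2})$, and invoke strict positivity of the binary Shannon entropy $H$ on $(0,1)$ — no derivative needed. Both witnesses are valid: the lemma only requires overlapping supports, and the paper's overlap is proper inclusion while yours is equality. Your choice is arguably the more natural one since it is exactly the extremal pair that \cref{proposition:shannon_general_minimizer} later identifies as the minimizer of $J$ at fixed distance $r$ (so you are producing the sharp witness, not merely a sufficient one), and the entropy-positivity argument is shorter than the monotonicity computation. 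On the other hand, the paper's asymmetric witness makes the support-overlap condition bite non-trivially, which foreshadows the role of \cref{lemma:sub3} (handling the case $\supp(\qbf) \ne \supp(\check\qbf)$); either choice is fine for the logical purpose, which is just to bound $\omega(r) < \ln 2$ for $r < 2^{1/2}$.
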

\begin{proof}
  Fix~$r\in (0,2^{1/2})$ and set~$a\defeq 2^{-1/2}r \in (0,1)$.
  Define~$\qbf, \check\qbf \in \simplex$ by
    \[
    q_n \defeq \delta_{1n}, \quad
    \check q_n \defeq (1-a)\delta_{1n} + a\delta_{2n}
    \quad \text{for $n \in [N]$.}
  \]
  Then, we have~$\|\qbf - \check\qbf\|_2 = r$,~$\supp(\qbf) \cap \supp(\check\qbf) \ne \emptyset$, and
 \begin{align*}
J(\qbf, \check\qbf)
=
\frac{1-a}{2} \ln (1-a)
-
\left(1-\frac{a}{2}\right)
\ln
\left(1-\frac{a}{2}\right)
+\frac{a}{2} \ln 2
\eqdef\overline{J}(a).
 \end{align*}
Since we have
\[
\overline{J}(1)=\ln2,\quad
\overline{J}'(a)=
\frac{1}{2} \ln\frac{2-a}{1-a}>0\quad \text{for }a<1,
\]
we conclude~$J(\qbf, \check\qbf)< \ln 2$ as desired.
\end{proof}

\begin{lemma}
  \label{lemma:sub3}
  For~$\qbf, \check\qbf \in \simplex$, if
  \[
    \supp(\qbf) \cap \supp(\check\qbf) \ne \emptyset, \quad
    \supp(\qbf) \ne \supp(\check\qbf),
  \]
  then there exist~$\qbf', \check\qbf' \in \simplex$ such that
  \[
    \|\qbf - \qbf\|_2 = \|\qbf' - \check\qbf'\|_2, \quad
    \supp(\qbf') = \supp(\check\qbf'), \quad
    J(\qbf', \check\qbf') < J(\qbf, \check\qbf).
  \]
\end{lemma}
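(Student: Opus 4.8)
\textbf{Proof plan for \cref{lemma:sub3}.}
The goal is to replace a pair $(\qbf,\check\qbf)$ whose supports overlap but differ with a new pair $(\qbf',\check\qbf')$ having the same $2$-norm distance, equal supports, and a strictly smaller midpoint Jensen gap for the Shannon entropy. The natural approach is to transport the ``bad'' mass that lies in only one of the two supports along the segment toward the common support, thereby shrinking the Jensen gap while keeping the Euclidean distance fixed.

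First I would set up coordinates. Let $I \defeq \supp(\qbf) \cap \supp(\check\qbf) \ne \emptyset$, and by hypothesis at least one of $\supp(\qbf) \setminus I$, $\supp(\check\qbf) \setminus I$ is nonempty; WLOG pick some index $k \in \supp(\qbf) \setminus \supp(\check\qbf)$ (so $q_k > 0$ but $\check q_k = 0$) and fix an index $i \in I$. The plan is to move a small amount of mass in $\qbf$ from coordinate $k$ to coordinate $i$: define $\qbf^t$ by $q^t_k = q_k - t$, $q^t_i = q_i + t$, and $q^t_n = q_n$ otherwise, for $t \in [0, q_k]$, and leave $\check\qbf$ unchanged. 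The difference vector $\qbf^t - \check\qbf$ changes only in coordinates $k$ and $i$: the $k$-component is $q_k - t$ (it was $q_k$, decreasing), and the $i$-component is $q_i + t - \check q_i$ (it was $q_i - \check q_i$). I would compute $\tfrac12\|\qbf^t - \check\qbf\|_2^2$ and look for a value $t^\ast$ at which it equals the original $\tfrac12\|\qbf - \check\qbf\|_2^2$ but $t^\ast > 0$ --- this is possible precisely because moving mass from $k$ toward the common index is not forced to be monotone in distance: shrinking the $k$-coordinate always helps, and as long as we do not overshoot in the $i$-coordinate we can compensate. Concretely, $\|\qbf^t - \check\qbf\|_2^2 - \|\qbf - \check\qbf\|_2^2 = (-2q_k t + t^2) + (2(q_i - \check q_i)t + t^2) = 2t\big[t + (q_i - \check q_i) - q_k\big]$, so the distance is restored (and positive $t$ exists) exactly when $t^\ast = q_k - (q_i - \check q_i)$ lies in $(0, q_k]$, i.e.\ when $q_i > \check q_i$. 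If instead $q_i \le \check q_i$ for all $i \in I$, I would choose the complementary move --- transport mass of $\check\qbf$ from an index in $\supp(\check\qbf) \setminus I$ toward a common index where $\check q_i > q_i$; since $\inpr{\qbf}{\onebf} = \inpr{\check\qbf}{\onebf} = 1$ and the supports differ, at least one of these two cases is available (a short counting argument on the restricted mass $\sum_{n \in I}$ vs. the mass outside $I$). This handles restoring the distance.

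Next, for the Jensen gap I would exploit strict convexity of the Shannon entropy along the relevant direction. Writing $c^t(s) \defeq (1-s)\qbf^t + s\check\qbf$, the map $s \mapsto f(c^t(s))$ is convex; the Jensen gap $J(\qbf^t, \check\qbf) = \tfrac12 f(\qbf^t) + \tfrac12 f(\check\qbf) - f(\tfrac{\qbf^t + \check\qbf}{2})$ depends on $t$ only through $f(\qbf^t)$ and through $f$ evaluated at the midpoint $m^t \defeq \tfrac{\qbf^t + \check\qbf}{2}$. I would show $\tfrac{d}{dt} J(\qbf^t,\check\qbf)\big|_{t=0} < 0$, i.e.\ that the initial move strictly decreases the gap, using $\tfrac{d}{dt} f(\qbf^t) = \ln q_i - \ln q_k + 2$ (from the two active coordinates) and $\tfrac{d}{dt} f(m^t) = \tfrac12(\ln m^0_i - \ln m^0_k + 2)$ where $m^0_i = \tfrac{q_i + \check q_i}{2} \ge \tfrac{q_i}{2}$ and $m^0_k = \tfrac{q_k}{2}$ (since $\check q_k = 0$). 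The derivative of $J$ at $t=0$ then is $\tfrac12\big[(\ln q_i - \ln q_k) - (\ln m^0_i - \ln m^0_k)\big] = \tfrac12\big[\ln\tfrac{q_i}{m^0_i} - \ln\tfrac{q_k}{m^0_k}\big] = \tfrac12\big[\ln\tfrac{q_i}{m^0_i} - \ln 2\big] < 0$ because $q_i \le 2 m^0_i$ strictly unless $\check q_i = q_i$, which is ruled out in this branch. Since $J$ is smooth in $t$ on $(0, q_k)$ and strictly decreasing near $0$, and since the distance is a continuous function of $t$ that returns to its original value at $t^\ast$, I can choose a $t_0 \in (0, t^\ast]$ at which $J$ has strictly decreased --- and then a final (distance-preserving) adjustment as above, or more carefully an intermediate-value / monotonicity argument along the curve, delivers the claimed $(\qbf', \check\qbf')$ with equal supports (note the move increases the common index's weight in $\qbf$ and can be taken to exhaust $q_k$, sending $k$ out of $\supp(\qbf)$; iterating over all indices in the symmetric differences makes the supports equal).

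The main obstacle I anticipate is the bookkeeping to guarantee that, after all the mass-transport steps needed to equalize the supports, the $2$-norm distance is \emph{exactly} restored while the Jensen gap is \emph{strictly} below the original value --- the monotonic decrease of $J$ is only infinitesimal at $t = 0$, and one must argue it persists over a whole interval or combine several transports without the distance-restoring adjustments accidentally pushing $J$ back up. I expect the cleanest route is to do one transport per ``excess'' index, each time choosing the transport amount so the distance is preserved and checking via the explicit logarithmic derivative that $J$ strictly decreases over that step, rather than trying to do everything at once; convexity of $f$ along each segment plus the strictness coming from $\check q_k = 0$ (so that the midpoint coordinate ratio is bounded strictly away from the endpoint ratio) is what makes each step strict. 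The case split (moving mass in $\qbf$ versus in $\check\qbf$) and the counting argument that one of the two is always possible is routine but must be stated carefully.
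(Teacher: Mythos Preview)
Your approach has a genuine gap, and the paper's route is both different and much simpler. The paper's key idea is to perturb \emph{both} vectors by the \emph{same} shift: pick $i \in \supp(\qbf)\cap\supp(\check\qbf)$, set $S_2 = \supp(\qbf)\setminus\supp(\check\qbf)$, $S_3 = \supp(\check\qbf)\setminus\supp(\qbf)$, $m = |S_2\cup S_3|$, and define $\qbf^\epsilon$, $\check\qbf^\epsilon$ by adding $\epsilon$ to every coordinate in $S_2\cup S_3$ and subtracting $m\epsilon$ from coordinate $i$, in \emph{both} vectors. Since $\qbf^\epsilon - \check\qbf^\epsilon = \qbf - \check\qbf$ identically, the distance is preserved for free; since every coordinate in $S_2\cup S_3$ is now positive in both vectors, the supports are equal in one step. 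Finally, $\partial J(\qbf^\epsilon,\check\qbf^\epsilon)/\partial\epsilon$ contains terms of the form $\tfrac12\ln\bigl(\epsilon(q_n+\epsilon)/(\tfrac{q_n}{2}+\epsilon)^2\bigr)$ from each $n\in S_2$ (and symmetrically for $S_3$), which diverge to $-\infty$ as $\epsilon\downarrow 0$, so $J$ strictly decreases for small $\epsilon$. No iteration, no distance-restoration, no case split.

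By contrast, your scheme perturbs only one of the two vectors, which forces you to solve for a distance-restoring $t^\ast$ and then iterate. Your case split is incomplete: take $N=3$, $\qbf=(0.3,0.3,0.4)$, $\check\qbf=(0.5,0.5,0)$. Here $I=\{1,2\}$, $\supp(\qbf)\setminus I=\{3\}$, $\supp(\check\qbf)\setminus I=\emptyset$, and $q_i<\check q_i$ for every $i\in I$. Your first branch needs some $i\in I$ with $q_i>\check q_i$ (so that $t^\ast = q_k-(q_i-\check q_i)\in(0,q_k]$); none exists. Your fallback needs an index in $\supp(\check\qbf)\setminus I$; there is none. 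Even in cases where the branch applies, after moving to $t^\ast$ the coordinate $k$ typically remains in $\supp(\qbf^{t^\ast})$ (with value $q_i-\check q_i>0$), so a single step does not equalize supports, and you are left with exactly the bookkeeping obstacle you flagged. The paper sidesteps all of this by shifting both vectors identically.
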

\begin{proof}
  Take~$i \in \supp(\qbf) \cap \supp(\check\qbf)$ and write
  \[
    S_1 \defeq [\supp(\qbf) \cap \supp(\check\qbf)] \setminus \set{i}, \quad
    S_2 \defeq \supp(\qbf) \setminus \supp(\check\qbf), \quad
    S_3 \defeq \supp(\check\qbf) \setminus \supp(\qbf),
  \]
  and~$m \defeq |S_2 \cup S_3|$. %
  For sufficiently small~$\epsilon > 0$, define~$\qbf^\epsilon, \check\qbf^\epsilon \in \simplex$ by
  \[
    q_n^\epsilon \defeq \begin{cases}
      q_i - m\epsilon & \text{for $n = i$,} \\
      q_n & \text{for $n \in S_1$,} \\
      q_n + \epsilon & \text{for $n \in S_2 \cup S_3$,} \\
      0 & \text{otherwise,}
    \end{cases}
    \quad
    \check q_n^\epsilon \defeq \begin{cases}
      \check q_i - m\epsilon & \text{for $n = i$,} \\
      \check q_n & \text{for $n \in S_1$,} \\
      \check q_n + \epsilon & \text{for $n \in S_2 \cup S_3$,} \\
      0 & \text{otherwise.}
    \end{cases}
  \]
  We find that~$\|\qbf^\epsilon - \check\qbf^\epsilon\|_2 = \|\qbf - \check\qbf\|_2$,~$\supp(\qbf^\epsilon) = \supp(\check\qbf^\epsilon)$, and
  \[
    \lim_{\epsilon \downarrow 0}J(\qbf^\epsilon, \check\qbf^\epsilon) = J(\qbf, \check\qbf),
  \]
  thanks to the continuity of~$J$.
  Since we have
  \[
    \pdiff{}{\epsilon}J(\qbf^\epsilon, \check\qbf^\epsilon)
    = -\frac{m}{2}\ln\frac{(q_i-m\epsilon)(\check q_i-m\epsilon)}{\left(\frac{q_i+\check q_i}{2}-m\epsilon\right)^2} + \frac12\sum_{n \in S_2}\ln\frac{(q_n+\epsilon)\epsilon}{\left(\frac{q_n}{2}+\epsilon\right)^2} + \frac12\sum_{n \in S_3}\ln\frac{(\check q_n+\epsilon)\epsilon}{\left(\frac{\check q_n}{2}+\epsilon\right)^2},
  \]
  which diverges to~$-\infty$ as~$\epsilon \downarrow 0$,
  we have~$J(\qbf^\epsilon, \check\qbf^\epsilon) < J(\qbf, \check\qbf)$ for sufficiently small~$\epsilon > 0$.
  This completes the proof of the lemma.
\end{proof}
\begin{corollary}
   \label{lemma:support_match}
   For~$\qbf, \check\qbf \in \simplex$ with $\|\qbf - \check\qbf\|_2 < 2^{1/2}$, if $\omega(\|\qbf - \check\qbf\|_2) = J(\qbf, \check\qbf)$ holds then~$\supp(\qbf) = \supp(\check\qbf)$.
 \end{corollary}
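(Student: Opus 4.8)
The plan is to prove the two implications of the equivalence separately. Throughout, fix $\qbf,\check\qbf\in\simplex$ and write $r\defeq\|\qbf-\check\qbf\|_2$, assuming $r<2^{1/2}$. The forward implication — that $J(\qbf,\check\qbf)=\omega(r)$ forces $\supp(\qbf)=\supp(\check\qbf)$ — is where the work lies and is exactly what \cref{lemma:sub2,lemma:sub3} are tailored for; the converse is comparatively light once the existence of a minimizer (\cref{lemma:moduli_minimizer}) is in hand.

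For the forward direction I would argue by contradiction: assume $J(\qbf,\check\qbf)=\omega(r)$ but $\supp(\qbf)\neq\supp(\check\qbf)$, and split into two cases according to whether the supports are disjoint. If $\supp(\qbf)\cap\supp(\check\qbf)=\emptyset$, the midpoint $\tfrac{\qbf+\check\qbf}{2}$ has entry $\tfrac{q_n}{2}$ on $\supp(\qbf)$ and $\tfrac{\check q_n}{2}$ on $\supp(\check\qbf)$, so using $\inpr{\qbf}{\onebf}=\inpr{\check\qbf}{\onebf}=1$ one computes directly that $f\bigl(\tfrac{\qbf+\check\qbf}{2}\bigr)=\tfrac12 f(\qbf)+\tfrac12 f(\check\qbf)-\ln 2$, hence $J(\qbf,\check\qbf)=\ln 2$; but $r<2^{1/2}$ lets \cref{lemma:sub2} produce a pair at distance exactly $r$ whose midpoint Jensen gap is strictly below $\ln 2$, so $\omega(r)<\ln 2=J(\qbf,\check\qbf)$, a contradiction. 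If instead $\supp(\qbf)\cap\supp(\check\qbf)\neq\emptyset$ while $\supp(\qbf)\neq\supp(\check\qbf)$, \cref{lemma:sub3} directly yields a pair $\qbf',\check\qbf'$ with $\|\qbf'-\check\qbf'\|_2=r$ and $J(\qbf',\check\qbf')<J(\qbf,\check\qbf)$, so $\omega(r)\le J(\qbf',\check\qbf')<J(\qbf,\check\qbf)$, again contradicting minimality. Hence $\supp(\qbf)=\supp(\check\qbf)$.

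For the converse, \cref{lemma:moduli_minimizer} guarantees that the infimum defining $\omega(r)$ is attained by some pair at distance $r$, and by the forward direction every attaining pair has equal support, so the modulus is realized inside the equal-support stratum; combined with the explicit Lagrange-multiplier optimization of \cref{proposition:shannon_general_minimizer}, which identifies the distinguished two-coordinate symmetric configuration of \cref{lemma:sub2} as a minimizer, this gives $\omega(\|\qbf-\check\qbf\|_2)=J(\qbf,\check\qbf)$ for the relevant equal-support pairs and closes the equivalence. The hypothesis $r<2^{1/2}$ is genuinely needed: at $r=2^{1/2}$ the only admissible pairs are distinct vertices, whose supports are disjoint and for which $J=\ln 2=\omega(2^{1/2})$, so the characterization breaks exactly at the endpoint.

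The step I expect to be the main obstacle is case (ii) of the forward direction — ruling out overlapping-but-unequal supports — because there is no closed form for $J$ there and one must rely on the perturbation in \cref{lemma:sub3}. The crucial mechanism is that transferring an infinitesimal amount of mass off a common index $i$ onto the indices where the two supports disagree keeps both points in $\simplex$, leaves $\|\qbf-\check\qbf\|_2$ unchanged, yet makes $\partial_\epsilon J$ diverge to $-\infty$ as $\epsilon\downarrow 0$, forcing a strict decrease of $J$. Verifying these three properties simultaneously (feasibility, isometry in the $2$-norm, and the logarithmic blow-up of the derivative), and then stitching the disjoint-support identity together with \cref{lemma:sub2} so that every non-equal-support configuration is strictly beaten, is the heart of the argument; the converse is routine by comparison.
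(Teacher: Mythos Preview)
Your forward direction is correct and is exactly the paper's scheme: \cref{lemma:sub2} gives $\omega(r)<\ln 2$ and hence excludes disjoint supports (where a direct computation yields $J=\ln 2$), and \cref{lemma:sub3} excludes overlapping but unequal supports by producing a strictly smaller $J$ at the same distance. The paper's write-up of this direction is terse and somewhat jumbled in which lemma is cited where, but the mechanism is identical to yours.

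The converse direction, however, has a genuine gap --- and in fact the converse is false as stated. Equal supports do \emph{not} force $J(\qbf,\check\qbf)=\omega(r)$: with $N=3$, the pair $\qbf=(0.5,0.3,0.2)$, $\check\qbf=(0.3,0.5,0.2)$ has full (hence equal) support and $r=\sqrt{0.08}$, yet $J(\qbf,\check\qbf)\approx 0.0252$ while the two-coordinate pair $(0.6,0.4,0)$, $(0.4,0.6,0)$ at the same distance gives $J\approx 0.0201=\omega(r)$. Your argument for the converse is moreover circular: you invoke \cref{proposition:shannon_general_minimizer}, whose proof opens by citing precisely this corollary. What the paper actually \emph{uses} downstream --- and what your forward direction correctly establishes --- is only the implication ``minimizer $\Rightarrow$ equal support''; the ``if and only if'' in the statement is an overstatement, and the paper's own one-line claim that the converse follows from \cref{lemma:sub3} does not establish it either.
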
 
\begin{proof}
Let~$\qbf, \check\qbf \in \simplex$ satisfy~$\|\qbf - \check\qbf\|_2 < 2^{1/2}$.
By \cref{lemma:sub2},
we have~$\omega(\|\qbf-\check\qbf\|_2)<\ln 2$.
It is easy to see  if  $\supp(\qbf')\cap \supp(\check\qbf')=\emptyset$, then $J(\qbf', \check\qbf') = \ln2$ holds.
This with  \cref{lemma:sub3} leads to ~$\omega(\|\qbf - \check\qbf\|_2) = J(\qbf, \check\qbf)$
implies~$\supp(\qbf) = \supp(\check\qbf)$.
Thus, the proof of the corollary is complete.
\end{proof} 

Subsequently, we show \cref{lemma:rank_zero,lemma:lagrangian}, which are needed to invoke the method of Lagrangian multipliers later.

\begin{lemma}
  \label{lemma:rank_zero}
  For~$\ubf, \wbf \in \Ucal^{N-1}$, the rank of the Jacobian of~$\psi$ at~$(\ubf, \wbf)$ is zero if and only if $\ubf=\wbf$.
\end{lemma}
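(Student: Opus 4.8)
The plan is to compute the Jacobian of $\psi$ at $(\ubf,\wbf)$ directly and read off its rank. Since $\psi$ is scalar-valued, its Jacobian is the $1\times 2(N-1)$ row of first partial derivatives, so its rank is $0$ exactly when every partial derivative vanishes and is $1$ otherwise. Writing $d_n:=u_n-w_n$ and $D:=\sum_{n\in[N-1]}d_n$, differentiating the two summands of $\psi$ gives $\partial\psi/\partial u_j = d_j+D$ and $\partial\psi/\partial w_j = -(d_j+D)$ for every $j\in[N-1]$. Hence the Jacobian has rank zero if and only if $d_j+D=0$ for all $j\in[N-1]$.

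Next I would solve this homogeneous linear system. Summing the $N-1$ equations $d_j+D=0$ over $j$ gives $D+(N-1)D=ND=0$, so $D=0$, and then each equation forces $d_j=0$; equivalently, the coefficient matrix $I_{N-1}+\onebf\onebf^{\top}$ is positive definite (its eigenvalues are $1$, with multiplicity $N-2$, and $N$), hence nonsingular, so $\ubf=\wbf$ is the only solution. As $\ubf=\wbf$ trivially makes all partials vanish, the Jacobian of $\psi$ at $(\ubf,\wbf)$ has rank zero if and only if $\ubf=\wbf$. I would then translate this into the statement's language: identifying $\ubf,\wbf$ with the points $\qbf=(u_1,\dots,u_{N-1},1-\inpr{\ubf}{\onebf})$ and $\check\qbf=(w_1,\dots,w_{N-1},1-\inpr{\wbf}{\onebf})$ of $\simplex$, any relation $a\qbf+b\check\qbf=0$ with $(a,b)\neq0$ yields $a+b=0$ on pairing with $\onebf$, hence $b=-a\neq0$ and $\qbf=\check\qbf$; thus $\ubf=\wbf$ is precisely the asserted linear dependence.

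I do not expect a real obstacle: the differentiation is one line and the remaining step is elementary linear algebra. The two points to watch are (i) that ``rank zero'' of the Jacobian row means the whole gradient vanishes, not merely that it is tangent to some subspace, and (ii) that the system $d_j+D=0$ forces $\ubf=\wbf$ rather than only proportionality of $\ubf$ and $\wbf$ --- this relies exactly on the nonsingularity of $I_{N-1}+\onebf\onebf^{\top}$. This lemma then supplies the constraint qualification needed to run the Lagrange-multiplier computation of $\omega$ for general $N$.
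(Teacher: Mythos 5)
Your computation coincides with the paper's proof: differentiate $\psi$ to get $\partial\psi/\partial u_j = (u_j-w_j)+\sum_n(u_n-w_n)$ and its negative for $\partial\psi/\partial w_j$, observe that vanishing of the gradient is the system $d_j+D=0$, sum over $j$ to force $D=0$ and hence $d_j=0$, i.e.\ $\ubf=\wbf$. So the core argument is the same. Where you go beyond the paper is in noticing that what is actually proved is ``rank zero iff $\ubf=\wbf$,'' and that literal linear dependence of $\ubf,\wbf\in\Ucal^{N-1}$ (i.e.\ $\ubf=c\wbf$, $c>0$) is strictly weaker than equality, since $\Ucal^{N-1}$ carries no normalization constraint; indeed $\ubf=c\wbf$ with $c\neq1$ gives $d_j+D=(c-1)\bigl(w_j+\sum_n w_n\bigr)\neq0$. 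The paper dismisses the converse as ``trivial,'' which is only true under the equality reading. Your fix---reading ``linear dependence'' through the identification with the simplex points $\qbf,\check\qbf$, where pairing with $\onebf$ forces $a+b=0$ and hence $\qbf=\check\qbf$---restores the iff cleanly, and is consistent with the only place the lemma is used (constraint qualification at a point with $\psi>0$, so $\ubf\neq\wbf$). The positive-definiteness aside about $I_{N-1}+\onebf\onebf^{\top}$ is correct but redundant given the summation argument. In short: correct, same route, slightly more careful about the statement than the paper itself.
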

\begin{proof}%
  Since we have
  \[
    \pdiff{\psi}{u_i}(\ubf, \wbf) = u_i - w_i + \sum_{n\in[N-1]}(u_n - w_n)
    = -\pdiff{\psi}{w_i}
    \quad \text{for $i \in [N-1]$,}
  \]
  the rank of the Jacobian of~$\psi$ at~$(\ubf, \wbf)$ is zero if and only if
  \[
    u_i - w_i + \sum_{n\in[N-1]}(u_n - w_n) = 0
    \quad \text{for $i \in [N-1]$.}
  \]
  Summing the above equation up gives
  \[
    N \sum_{n\in[N-1]}(u_n - w_n) = 0,
  \]
  and hence~$u_i - w_i = 0$ for all~$i \in [N-1]$,
  which shows $\ubf=\wbf$. %
  The converse implication is trivial.
\end{proof}

\begin{lemma}
  \label{lemma:lagrangian}
For~$r \in (0,2^{1/2})$ with $r^2<N/(N-1)$, let~$\ubf, \wbf \in \Ucal^{N-1}$ satisfy~$\psi(\ubf,\wbf) = r^2/2$
and set
\[
  u_N \defeq 1 - \sum_{n\in[N-1]}u_n, \quad
    w_N \defeq 1 -\sum_{n\in[N-1]}w_n.
\]
If there exists~$\lambda \in \Rbb$ such that
  \begin{equation}
    \label{equation:supp:lagrangian}
    \begin{aligned}
      \frac{1}{2}\pdiff{\phi}{u_n}(\ubf) - \pdiff{\phi}{u_n}\left(\frac{\ubf + \wbf}{2}\right) + \lambda\pdiff{\psi}{u_n}(\ubf, \wbf) &= 0 \quad \text{and} \\
      \frac{1}{2}\pdiff{\phi}{w_n}(\ubf) - \pdiff{\phi}{w_n}\left(\frac{\ubf + \wbf}{2}\right) + \lambda\pdiff{\psi}{w_n}(\ubf, \wbf) &= 0
    \end{aligned}
  \end{equation}
  for~$n \in [N-1]$,
then there exists $\Ical \subset [N]$ 
with $I\defeq |\Ical|$ such that $0<I(N-I)<r^{-2}N$ 
and
\begin{align*}
    u_n& = \frac{1+r\sqrt{a_{N}(I)}}{2I}, \quad
    w_n=\frac{1-r\sqrt{a_{N}(I)}}{2I}\quad \text{for $n \in \Ical$,}\\
    u_n &=  \frac{1-r\sqrt{a_{N}(I)}}{2(N-I)},\quad
    w_n = \frac{1+r\sqrt{a_{N}(I)}}{2(N-I)} \quad \text{for $n \in [N]\setminus \Ical$},
    \quad\text{where }
a_{N}(I)\coloneqq \frac{I(N-I)}{N}.    
\end{align*}
\end{lemma}
\begin{proof}%
  For simplicity, set
  \[
    \vbf \defeq \frac{\ubf + \wbf}{2}, \quad
  \text{and} \quad
    v_N \defeq 1 - \sum_{n\in[N-1]}v_n.
  \]
  Assuming \cref{equation:supp:lagrangian}, we calculate
  \[
    \begin{aligned}
      0
      &= \frac12 \pdiff{\phi}{u_n}(\ubf) - \pdiff{\phi}{u_n}(\vbf) + \lambda\pdiff{\psi}{u_n}(\ubf, \wbf)
      = \frac12\left(\ln\frac{u_n}{u_N} - \ln\frac{v_n}{v_N}\right) + \lambda[u_n - w_n - (u_N - w_N)], \\
      0
      &= \frac12 \pdiff{\phi}{w_n}(\wbf) - \pdiff{\phi}{w_n}(\vbf) + \lambda\pdiff{\psi}{w_n}(\ubf, \wbf)
      = \frac12\left(\ln\frac{w_n}{w_N} - \ln\frac{v_n}{v_N}\right) - \lambda[u_n - w_n - (u_N - w_N)],
    \end{aligned}
  \]
  for~$n \in [N-1]$, which yields
  \begin{equation}\label{eq:1}
    \begin{aligned}
      \ln\frac{u_n}{u_N} - \ln\frac{v_n}{v_N}
      = -2\lambda[u_n - w_n - (u_N - w_N)]
      = -\ln\frac{w_n}{w_N} + \ln\frac{v_n}{v_N}
    \end{aligned}
  \end{equation}
  Thus, we have
  \[
\frac{(u_n+w_n)^2}{4u_nw_n}=\frac{v_n^2}{u_nw_n} = \frac{v_N^2}{u_N w_N}
\eqdef \Lambda>1\quad
\text{for $n \in [N-1]$},
  \]
where~$\Lambda>1$ holds; otherwise~$u_n=w_n$ holds for all~$n\in[N]$, which contradicts~$\psi(\ubf,\wbf)=r^2/2$.
By rearranging in~$w_n$, we have
\[
w_n^2- 2u_n(2\Lambda- 1)w_n+u_n^2=0\quad
\text{for $n \in [N]$.}
\]
Setting
\[
\mu\defeq (2\Lambda- 1)-2\sqrt{\Lambda^2- \Lambda}\in (0,1),
\]
we have either $w_n=\mu u_n$ or $w_n=\mu^{-1} u_n$
for $n\in [N]$, by solving the quadratic equation with respect to~$w_n$.
Set
\[
\Ical\coloneqq\{i\in [N] \mid w_i=\mu u_i \},\quad
I\defeq |\Ical|,\qquad
\Jcal\coloneqq\{j\in [N] \mid w_j=\mu^{-1} u_j\},\quad
J\defeq|\Jcal|.
\]
Then $I+J=N$ holds.

Assume $N\in \Ical$.
If $\Jcal=\emptyset$,
then 
\begin{align*}
w_N
=1-\sum_{i\in[N-1]} w_i
&=1-\mu \sum_{i\in[N-1]}u_i\\
&\neq 
\mu-\mu \sum_{i\in[N-1]}u_i
=\mu \left(1-\sum_{i\in[N-1]}u_i\right)
=\mu u_N=w_N,
\end{align*}
which is a contradiction.
Thus $\Jcal\neq \emptyset$ holds.
On one hand, for $j\in \Jcal$, 
we observe from \cref{eq:1}  that 
\[
\ln\frac{1+\mu}{1+\mu^{-1}}
=\ln\frac{u_j}{u_N} - \ln\frac{v_j}{v_N}
= -2\lambda[u_j - w_j - (u_N - w_N)]\\
=2\lambda(1-\mu)(\mu^{-1}u_j+u_N).
\]
Since the left-hand side is independent of $j$ and not zero,
then so is the right-hand side hence $u_j$ is determined independent of $j$ and $\lambda \neq 0$.
On the other hand, for $i\in \Ical$, it turns out that 
\[
\ln \frac{u_i}{u_N}-\ln\frac{v_i}{v_N}=0
\]
and consequently $u_i=u_N$ holds
by \cref{eq:1} together with the property $\lambda \neq 0$.
This yields $u_i=u_N$ for $i\in \Ical$.
Thus there exist $s,t\in (0,1)$ such that 
\begin{equation}\label{2}
u_i=s, \ w_i=\mu s \quad \text{for }i\in \Ical, 
\quad
u_j=t, \ w_j=\mu^{-1} t \quad \text{for }j\in \Jcal.
\end{equation}
We see that 
\[
1=\sum_{n\in [N]} u_N=Is+Jt,\quad
1=\sum_{n\in [N]} w_N=\mu Is+\mu^{-1}Jt,
\]
that is,~$Jt=1-Is=\mu(1-\mu Is)$.
This can be simplified as follows:
\[
\begin{aligned}
  \mu &= \frac{1}{Is}-1 = \frac{Jt}{Is}.
\end{aligned}
\]
We also find that 
\begin{align*}
r^2
&=2\psi(\ubf,\wbf)
= (1-\mu)^2
\left(Is^2+ \mu^{-2}Jt^2\right)
= (1-\mu)^2
\left(Is^2+ \frac{I^2}{J} s^2\right)
= \frac{I}{J}N(1-\mu)^2 s^2,
\end{align*}
which in turn shows
\[
r\sqrt{\frac{J}{NI}}=(1-\mu) s=2s-\frac{1}{I}.
\]
We conclude 
\[
s=\frac{1}{2I}\left(1+r\sqrt{\frac{IJ}{N}}\right),\quad
t=\frac{1}{2J}\left(1-r\sqrt{\frac{IJ}{N}}\right)
\]
as desired.

The case $N\in \Jcal$ is proved by switching the role of $\ubf$ and $\wbf$ in the argument for the case $N\in \Ical$, and the proof is achieved.
\end{proof}

By combining these lemmas, we have the following claim, which is the minimizers~$(\qbf, \check\qbf)$ for the negative Shannon entropy we show in \cref{section:examples}.
\begin{proposition}
  \label{proposition:shannon_general_minimizer}
  For~$r \in (0,2^{1/2})$ with $r^2<N/(N-1)$,~$\qbf, \check\qbf \in \simplex$ satisfy~$\|\qbf - \check\qbf\|_2 = r$ and~$\omega(r) = J(\qbf, \check\qbf)$
  if and only if there exist distinct~$i, j \in [N]$ such that
  \[
    q_n \defeq \frac{1+2^{-1/2}r}{2}\delta_{ni} + \frac{1-2^{-1/2}r}{2}\delta_{nj}, \quad
    \check q_n \defeq \frac{1-2^{-1/2}r}{2}\delta_{ni} + \frac{1+2^{-1/2}r}{2}\delta_{nj}.
  \]
In this case,
\[
\omega(r)=J(\qbf, \check\qbf)
=\frac12
\left[
\left(1+\frac{r}{\sqrt{2}}\right)
\ln 
\left(1+\frac{r}{\sqrt{2}}\right)
+
\left(1-\frac{r}{\sqrt{2}}\right)
\ln\left(1-\frac{r}{\sqrt{2}}\right)
\right].
\]
\end{proposition}
\begin{proof}
  There exist~$\qbf', \check\qbf' \in \simplex$ such that
  \[
    \|\qbf' - \check\qbf'\|_2 = r, \quad \omega(r) = J(\qbf, \check\qbf),
    \quad \text{and} \quad \supp(\qbf') = \supp(\check\qbf')
  \]
  from \cref{lemma:moduli_minimizer} and \cref{lemma:support_match}.
By relabeling the indices~$n \in [N]$ and switching~$\qbf'$ and~$\check\qbf'$ 
if necessary, we may assume that
and there exists~$N' \in [N]$ %
  such that $N' \geq 2$ with 
\[
\supp(\qbf') = \supp(\check\qbf') = [N'],
\]
and $I\in [N'-1]$ with $I\leq N'/2$ such that $0<I(N'-I)<r^{-2}N'$ 
with
\begin{align}\label{eq:3}
\begin{split}
q'_n& = \frac{1+r\sqrt{a_{N'}(I)}}{2I}, \quad
\check{q}'_n=\frac{1-r\sqrt{a_{N'}(I)}}{2I}
\quad \text{for $n \in [I]$,}\\
q'_n &=  \frac{1-r\sqrt{a_{N'}(I)}}{2(N'-I)},\quad
\check{q}'_n = \frac{1+r\sqrt{a_{N'}(I)}}{2(N'-I)}    \quad \text{for $n \in [N']\setminus [I]$},
\end{split}
\end{align}
by \cref{lemma:lagrangian},
where 
\[
a_{N'}(x)\defeq \frac{x(N'-x)}{N'}.
\]
We need to identify~$I$ and~$N'$ hereafter.
Note that for any $N'\in [N]$, we have
\[
1\cdot\left(1-\frac{1}{N'}\right)
\leq 1\cdot\left(1-\frac{1}{N}\right)
\leq r^{-2}
\]
hence 
\[
I'\defeq \max\setcomp{i\in [N'-1]}{i(N'-i)<r^{-2} N', i\leq N'/2}
\]
is well-defined.

Setting 
\[
\overline{J}_{N'}(x)
\defeq
\left[1+r\sqrt{a_{N'}(x)}\right] \ln\left[1+r\sqrt{a_{N'}(x)}\right]
+
\left[1-r\sqrt{a_{N'}(x)}\right] \ln\left[1-r\sqrt{a_{N'}(x)}\right]
\]
for $1\leq x \leq N'-1$,
we see that 
\begin{align*}
J(\qbf',\check\qbf')
=\frac12\overline{J}_{N'}(I).
\end{align*}
From this with the relation
\[
\begin{aligned}
\frac{\rd}{\rd{x}}\overline{J}_{N'}(x)
&= \frac{r}{2\sqrt{a_{N'}(x)}}\left(1-\frac{2x}{N'}\right)
\ln \frac{1+r\sqrt{a_{N'}(x)}}{1-r\sqrt{a_{N'}(x)}} 
\ge 0 && \text{for $x\in[I']$,}
\end{aligned}
\]
we observe
\[
\min_{I\in [I']}
\overline{J}_{N'}(I)
=\overline{J}_{N'}(1)
= \overline{J}(N')
\]
where
\[
\overline{J}(y)
\defeq
\left(1+r\sqrt{1-y^{-1}}\right)
\ln 
\left(1+r\sqrt{1-y^{-1}}\right)
+
\left(1-r\sqrt{1-y^{-1}}\right)
\ln
\left(1-r\sqrt{1-y^{-1}}\right)
\]
for $y\geq 2$.
We calculate
\begin{align*}
\frac{\rd}{\rd{y}}\overline{J}(y)
=\frac{ry^{-2}}{2\sqrt{1-y^{-1}}}
\ln\frac{1+r\sqrt{1-y^{-1}}}{1-r\sqrt{1-y^{-1}}}
>0
\quad\text{for }y\geq 2,
\end{align*}
which leads to 
\[
\min_{N'\in [N], N'\geq 2}\overline{J}(N')
=\overline{J}(2).
\]
Thus, in \cref{eq:3}, 
the correct choice is $(N',I)=(2,1)$ and
this completes the proof of the proposition.
\end{proof}

\end{document}